\documentclass{article}
\usepackage[dvipsnames]{xcolor}
\usepackage{tablefootnote}
\usepackage{amsmath}
\usepackage{amssymb}
\usepackage{mathtools}
\usepackage{hyperref}
\usepackage{cleveref}
\usepackage{tikz}
\usepackage{tikz-qtree}
\usetikzlibrary{trees} %
\usepackage[numbers]{natbib}
\usepackage[preprint]{arxiv}
\usepackage{equations}
\usepackage{wrapfig}

\def\eqref#1{(\ref{#1})}
\usepackage[utf8]{inputenc} %
\usepackage[T1]{fontenc}    %
\usepackage{url}            %
\usepackage{booktabs}       %
\usepackage{amsfonts}       %
\usepackage{mathtools}
\usepackage{enumerate}
\usepackage{nicefrac}       %
\usepackage{microtype}      %
\usepackage{subcaption}
\usepackage{multirow} %
\usepackage{caption} %
\usepackage{hhline} %

\usepackage{arydshln}
\usepackage{enumitem}
\usepackage[normalem]{ulem}

\usepackage{tabu}
\usepackage[edges]{forest}
\usepackage{graphicx}
\setitemize{noitemsep,topsep=0.pt,parsep=0pt,partopsep=0pt}

\usepackage{amsmath,amssymb,amsfonts,bm,mathtools}
\usepackage{amsthm}
\usepackage{graphicx}

\newcommand{\dx}{\mathrm{d}\vx}
\newcommand{\dt}{\mathrm{d}t}

\newcommand{\dW}{\mathrm{d}\vW}

\newcommand{\vepsilon}{\bm{\epsilon}}

\renewcommand{\d}{\mathrm{d}}
\theoremstyle{plain}
\newtheorem{theorem}{Theorem}[section]

\newtheorem{lemma}[theorem]{Lemma}
\newtheorem{corollary}[theorem]{Corollary}
\theoremstyle{definition}

\theoremstyle{remark}

\newtheorem{example}{Example}[section]

\usepackage[ruled,linesnumbered]{algorithm2e}
\SetKwComment{Comment}{/* }{ */}
\usepackage{wasysym}
\usepackage{mathtools}
\usepackage{authblk}
\usepackage{algorithmic}
\usepackage{subcaption}
\usepackage{wrapfig}
\usepackage{resizegather}

\def\eqref#1{equation~\ref{#1}}
\def\Eqref#1{Equation~\ref{#1}}

\numberwithin{equation}{section}

\def\1{\bm{1}}

\def\vmu{{\bm{\mu}}}

\def\vphi{{\bm{\phi}}}
\def\vPhi{{\bm{\Phi}}}

\def\vc{{\bm{c}}}

\def\vf{{\bm{f}}}

\def\vh{{\bm{h}}}

\def\vn{{\bm{n}}}

\def\vr{{\bm{r}}}
\def\vs{{\bm{s}}}

\def\vW{{\bm{W}}}
\def\vx{{\bm{x}}}
\def\vX{{\bm{X}}}
\def\vy{{\bm{y}}}
\def\vY{{\bm{Y}}}
\def\vz{{\bm{z}}}
\def\vZ{{\bm{Z}}}

\DeclareMathAlphabet{\mathsfit}{\encodingdefault}{\sfdefault}{m}{sl}
\SetMathAlphabet{\mathsfit}{bold}{\encodingdefault}{\sfdefault}{bx}{n}

\def\gN{{\mathcal{N}}}

\def\gT{{\mathcal{T}}}

\newcommand{\E}{\mathbb{E}}

\newcommand{\R}{\mathbb{R}}

\DeclareMathOperator*{\argmin}{arg\,min}

\newcommand{\norm}[1]{\left\|#1\right\|}

\newcommand{\net}[1]{\vh_{\theta}(#1)}

\renewcommand{\norm}[1]{\left|\left| #1 \right|\right|^2}

\usepackage{pifont}%
\newcommand{\cmark}{\ding{51}}%
\newcommand{\xmark}{\ding{55}}%

\definecolor{bluebell}{rgb}{0.64, 0.64, 0.82}

\setcounter{tocdepth}{4}

\makeatletter
\setlength{\@fptop}{0pt}
\makeatother

\title{A Survey on Diffusion Models for Inverse Problems }

\author{
    Giannis Daras\textsuperscript{1},
    Hyungjin Chung\textsuperscript{2},
    Chieh-Hsin Lai\textsuperscript{3}, 
    Yuki Mitsufuji\textsuperscript{3},\\
    Jong Chul Ye\textsuperscript{2},
    Peyman Milanfar\textsuperscript{4}, 
    Alexandros G. Dimakis\textsuperscript{1},
    Mauricio Delbracio\textsuperscript{4}
}

\vspace{1em}

\affil{
    \textsuperscript{1}UT Austin \quad
    \textsuperscript{2}KAIST \quad
    \textsuperscript{3}Sony AI \quad
    \textsuperscript{4}Google
}
\date{}

\begin{document}

\maketitle

\begin{abstract}
Diffusion models have become increasingly popular for generative modeling due to their ability to generate high-quality samples. This has unlocked exciting new possibilities for solving inverse problems, especially in image restoration and reconstruction, by treating diffusion models as unsupervised priors. This survey provides a comprehensive overview of methods that utilize pre-trained diffusion models to solve inverse problems without requiring further training. We introduce taxonomies to categorize these methods based on both the problems they address and the techniques they employ.  We analyze the connections between different approaches, offering insights into their practical implementation and highlighting important considerations.  We further discuss specific challenges and potential solutions associated with using latent diffusion models for inverse problems. This work aims to be a valuable resource for those interested in learning about the intersection of diffusion models and inverse problems.

\end{abstract}

\section{Introduction}
\begin{table}[!htb]
\centering
\resizebox{1.0\textwidth}{!}{%
\begin{tabular}{lcccccccc}
\toprule
{\textbf{Category}} & {\textbf{Method}} & \textbf{Non-linear} & \textbf{Blind} & \textbf{Handle noise} & \textbf{Pixel/Latent} & \textbf{Text-conditioned} & \textbf{Optimization Technique} & \textbf{Code}\textsuperscript{1}
\\
\midrule
\multirow{15}{*}{\rotatebox{90}{\parbox{4cm}{\centering Explicit approximations for measurement matching}}} & Score-ALD~\citet{mri_paper} & \xmark & \xmark & \cmark & Pixel & \xmark & Grad & \href{https://github.com/utcsilab/csgm-mri-langevin}{code}\\
 & Score-SDE~\citet{ncsnv3} & \xmark & \xmark & \xmark & Pixel & \xmark & Proj & \href{https://github.com/yang-song/score_sde}{code}\\
 & ILVR~\citet{ilvr} & \xmark & \xmark & \xmark & Pixel & \xmark & Proj & \href{https://github.com/jychoi118/ilvr_adm}{code}\\
 & DPS~\citet{dps} & \cmark & \xmark & \cmark & Pixel & \xmark & Grad & \href{https://github.com/DPS2022/diffusion-posterior-sampling}{code}\\
 & $\Pi$GDM~\citet{song2022pseudoinverse} & \cmark & \xmark & \cmark & Pixel & \xmark & Grad & \href{https://github.com/NVlabs/RED-diff}{code}\\
 & Moment Matching~\citet{rozet2024learning} & \cmark & \xmark & \cmark & Pixel & \xmark & Grad & \href{https://github.com/francois-rozet/diffusion-priors}{code}\\
 & BlindDPS~\citet{blinddps} & \cmark & \cmark & \cmark & Pixel & \xmark & Grad & \href{https://github.com/BlindDPS/blind-dps}{code}\\
 & SNIPS~\citet{kawar2021snips} & \xmark & \xmark & \cmark & Pixel & \xmark & Grad & \href{https://github.com/bahjat-kawar/snips_torch}{code}\\
 & DDRM~\citet{ddrm} & \xmark & \xmark & \cmark & Pixel & \xmark & Grad & \href{https://github.com/bahjat-kawar/ddrm}{code}\\
 & GibbsDDRM~\citet{murata2023gibbsddrm} & \xmark & \cmark & \cmark & Grad & \xmark & Samp & \href{https://github.com/sony/gibbsddrm}{code}\\
 & DDNM~\citet{ddnm} & \xmark & \xmark & \cmark & Pixel & \xmark & Proj & \href{https://github.com/wyhuai/DDNM}{code}\\
 & DDS~\citet{dds} & \xmark & \xmark & \cmark & Pixel & \xmark & Opt & \href{https://github.com/HJ-harry/DDS}{code}\\
 & DiffPIR~\citet{diffpir} & \xmark & \xmark & \cmark & Pixel & \xmark & Opt & \href{https://github.com/yuanzhi-zhu/DiffPIR}{code}\\
 & PSLD~\citet{rout2024solving} & \cmark & \xmark & \cmark & Latent & \xmark & Grad & \href{https://github.com/LituRout/PSLD}{code}\\
 & STSL~\citet{rout2024beyond} & \cmark & \xmark & \cmark & Latent & \xmark & Grad & \xmark \\
\cmidrule{1-9}
\multirow{4}{*}{\rotatebox{90}{\parbox{1.5cm}{\centering Variational inference}}} & RED-Diff~\citet{mardani2024variational} & \cmark & \xmark & \cmark & Pixel & \xmark & Opt & \href{https://github.com/NVlabs/RED-diff}{code}\\
 & Blind RED-Diff~\citet{alkan2023variational} & \cmark & \cmark & \cmark & Pixel & \xmark & Opt & \xmark \\
 & Score Prior~\citet{feng2023score} & \cmark & \xmark & \cmark & Pixel & \xmark & Opt & \href{https://github.com/berthyf96/score_prior}{code}\\
 & Efficient Score Prior~\citet{feng2023efficient} & \cmark & \xmark & \cmark & Pixel & \xmark & Opt & \href{https://github.com/berthyf96/score_prior}{code}\\
\cmidrule{1-9}
\multirow{4}{*}{\rotatebox{90}{\parbox{1.2cm}{\centering CSGM methods}}} 
& DMPlug~\citet{wang2024dmplug} & \cmark & \xmark & \cmark & Pixel & \xmark & Opt & \href{https://github.com/sun-umn/DMPlug}{code}\\
& SHRED~\citet{chihaoui2024zeroshot} & \cmark & \xmark & \cmark & Pixel & \xmark & Opt & \xmark\\
& Consistent-CSGM~\citep{xu2024consistency} & \cmark & \xmark & \cmark & Pixel & \xmark & Opt & \xmark\\ 
& Score-ILO~\citet{score_ilo} & \cmark & \xmark & \cmark & Pixel & \xmark & Opt & \href{https://github.com/giannisdaras/sgilo}{code} \\
\cmidrule{1-9} 
\multirow{5}{*}{\rotatebox{90}{\parbox{2.4cm}{\centering Asymptotically Exact Methods}}} 
 & PnP-DM~\citet{wu2024principled} & \cmark & \xmark & \cmark & Pixel & \xmark & Opt & \xmark\\
 & FPS~\citet{dou2023diffusion} & \cmark & \xmark & \cmark & Pixel & \xmark & Samp & \href{https://github.com/ZehaoDou-official/FPS-SMC-2023}{code}\\
 & PMC~\citet{sun2024provable} & \cmark & \xmark & \cmark & Pixel & \xmark & Samp & \href{https://github.com/sunyumark/PnP-MonteCarlo}{code}\\
 & SMCDiff~\citet{trippe2023diffusion} & \xmark & \xmark & \cmark & Pixel & \xmark & Samp & \href{https://github.com/blt2114/ProtDiff_SMCDiff}{code}\\
 & MCGDiff~\citet{cardoso2023monte} & \xmark & \xmark & \cmark & Pixel & \xmark & Samp & \href{https://github.com/gabrielvc/mcg_diff}{code}\\
 & TDS~\citet{wu2023practical} & \xmark & \xmark & \cmark & Pixel & \xmark & Samp & \href{https://github.com/blt2114/twisted_diffusion_sampler}{code}\\
 \cmidrule{1-9}
\multirow{6}{*}{\rotatebox{90}{\parbox{1.5cm}{\centering Other methods}}} & Implicit denoiser prior~\citet{kadkhodaie2020solving} & \xmark & \xmark & \xmark & Pixel & \xmark & Proj & \href{https://github.com/LabForComputationalVision/universal_inverse_problem}{code}\\
& MCG~\citet{mcg} & \xmark & \xmark & \xmark & Pixel & \xmark & Grad/Proj & \href{https://github.com/HJ-harry/MCG_diffusion}{code}\\
& Resample~\citet{song2024solving} & \cmark & \xmark & \cmark & Latent & \xmark & Grad/Opt & \href{https://github.com/soominkwon/resample}{code}\\
& MPGD~\citet{he2023manifold} & \cmark & \xmark & \cmark & Pixel/Latent & \cmark & Grad/Opt & \href{https://github.com/KellyYutongHe/mpgd_pytorch}{code}\\
 & P2L~\citet{chung2024prompt} & \cmark & \xmark & \cmark & Latent & \cmark & Grad/Opt & \xmark \\
 & TReg~\citet{kim2023regularization} & \cmark & \xmark & \cmark & Latent & \cmark & Grad/Opt & \xmark \\
 & DreamSampler~\citet{kim2024dreamsampler} & \cmark & \xmark & \cmark & Latent & \cmark & Grad/Opt & \href{https://github.com/DreamSampler/dream-sampler}{code} \\
\bottomrule
\end{tabular}
}
\vspace{0.5cm}
\caption{\textbf{Categorization of Diffusion-Based Inverse Problem Solvers.}  This table categorizes methods by their approach to solving inverse problems with diffusion models. We identified four families of methods. \textit{Explicit Approximations for Measurement Matching:} These methods approximate the measurement matching score, $\nabla \log p_t(\vy|\vx_t)$, with a closed-form expression. \textit{Variational Inference:} These methods approximate the true posterior distribution, $p(\vx | \vy)$, with a simpler, tractable distribution. Variational formulations are then used to optimize the parameters of this simpler distribution. \textit{CSGM-type methods:} The works in this category use backpropagation to change the initial noise of the deterministic diffusion sampler, essentially optimizing over a latent space for the diffusion model.
\textit{Asymptotically Exact Methods:} These methods aim to sample from the true posterior distribution. This is typically achieved by constructing Markov chains (MCMC) or by propagating particles through a sequence of distributions (SMC) to obtain samples that approximate the posterior. 
Further categorization is based on being able to address non-linear problems, blind formulations (unknown forward model), noise handling, pixel/latent space operation, text-conditioning, and the type of optimization technique used (gradient-based, projection, etc.). Code availability is also indicated.}

\label{tab:categorization}
\end{table}

\begin{figure}[h]
\begin{tikzpicture}[grow'=right,level distance=2.5in,sibling distance=.15in]
\tikzset{level 1/.style={level distance=3.7in},
        edge from parent/.style= 
            {thick, draw, minimum width=0.5in, text width=0.5in, edge from parent fork right, shorten <=0.1pt},
         every tree node/.style=
            {draw, rounded corners, 
            minimum width=3.2in,
            text width=3.2in, 
            align=center},
        myroot/.style={draw, minimum width=1.5in, text width=1.5in, minimum height=1.5cm},     
        mychild/.style={draw, 
            rounded corners, 
            minimum width=3.2in,
            text width=3.2in, 
            align=center},
            }
\Tree 
    [.\node[myroot]{$\nabla_{\vx_t} \log p(\vy | \vX_t=\vx_t) $}; 
        [.{Score ALD: \begin{align*}
            \propto - \eqnmarkbox[Plum]{}{A^{\top}}\eqnmarkbox[NavyBlue]{}{\left( \vy - A\vx_t\right)}
        \end{align*}}
        ]
        [.{Score-SDE:
        \begin{align*}
           \propto - \eqnmarkbox[Plum]{}{A^{\top}}(\eqnmarkbox[RoyalPurple]{}{\vy_t - A\vx_t})
        \end{align*}
        }
        ] 
        [.{ILVR:
        \begin{align*}
          \propto   -\eqnmarkbox[Plum]{}{(A^\top A)^{-1} A^\top} (\eqnmarkbox[NavyBlue]{}{\vy_t - A\vx_t)}
        \end{align*}
        } ]
        [.{DPS:
        \begin{align*}
           \propto &\eqnmarkbox[Plum]{}{\left(I + \nabla_{\vx_t}^2\log p_{t}(\vx_t) \right)^{\top}A^{\top}} \\&\cdot\left(\eqnmarkbox[NavyBlue]{}{\vy - A\E[\vX_0 | \vX_t=\vx_t]}\right)
        \end{align*}
        } ]
        [.{$\Pi$GDM:
        \begin{align*}
           \propto  -&\eqnmarkbox[Plum]{}{\frac{\partial \E[\vX_0|\vX_t = \vx_t]}{\partial \vx_t}  (r_t^2 AA^\top + \sigma_{\vy}^2 I)^{-1} A^\top} \\&\cdot(\eqnmarkbox[NavyBlue]{}{\vy - A \E[\vX_0|\vX_t = \vx_t]})
        \end{align*}
        } ]
        [.\node[mychild]{Moment Matching:
        \begin{align*}
           \propto  -&\eqnmarkbox[Plum]{}{\nabla_{\vx_t}\E[\vx_0 | \vx_t]^{\top}A^{\top}(\sigma_{\vy}^2 I + A\sigma_t^2 \nabla_{\vx_t}\E[\vx_0 | \vx_t]A^{\top})^{-1}}\\&\cdot(\eqnmarkbox[NavyBlue]{}{\vy - A\E[\vx_0 | \vx_t]})
        \end{align*}
        }; ]
        [.{SNIPS:
        \begin{align*}
            \propto 
    -\eqnmarkbox[Plum]{proj}{\Sigma^\top \left|\sigma_{\vy}^2 I_m - \sigma_t^2\Sigma\Sigma^\top\right|^\dagger}(
    \eqnmarkbox[NavyBlue]{merror}{\bar\vy - \Sigma\bar\vx_t})
        \end{align*}
        } ]
        [.{DDRM:
        \begin{align*}
            \propto 
    -\eqnmarkbox[Plum]{}{\Sigma^\top \left|\sigma_{\vy}^2 I_m - \sigma_t^2\Sigma\Sigma^\top\right|^\dagger}
    (
    \eqnmarkbox[NavyBlue]{}{\bar\vy - \Sigma\bar\vx_{0|t}}
    )
        \end{align*}
        } ]
        [.{DDNM:
        \begin{align*}
        \propto
    \eqnmarkbox[Plum]{}{\Sigma_t A^\dagger}
    \left(
    \eqnmarkbox[NavyBlue]{}{\vy - A\E[\vX_0 | \vX_t=\vx_t, \vy]}
    \right)
        \end{align*}
        }
        ]
    ]
\end{tikzpicture}
\caption{Approximations for the measurements score proposed by different methods.}
\label{fig:tree}
\end{figure}
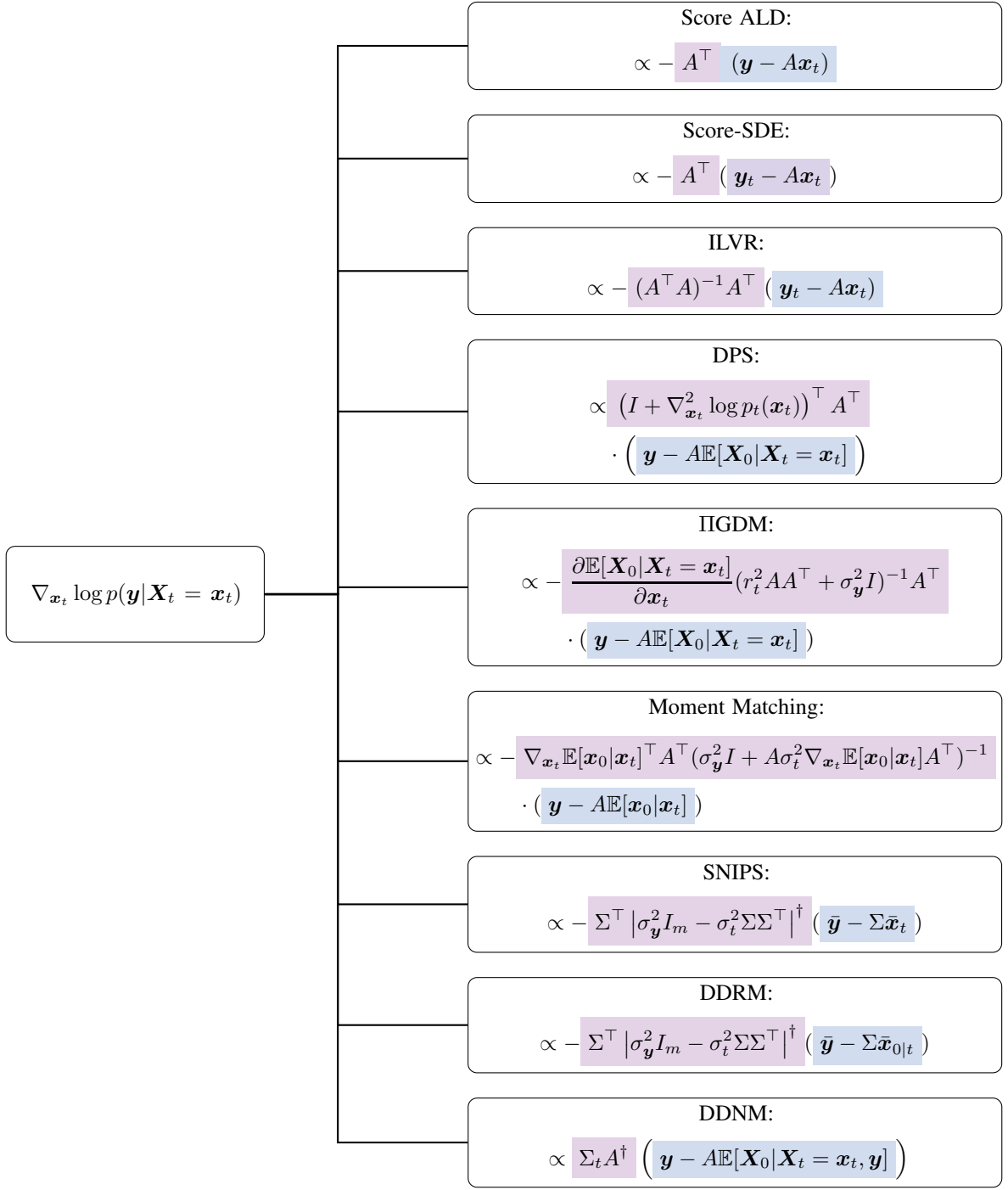

\subsection{Problem Setting}

Inverse problems are ubiquitous and the associated reconstruction problems have tremendous applications across different domains such as seismic imaging~\citep{lailly1983seismic, virieux2009overview}, weather prediction~\citep{huang2005inverse}, oceanography~\citep{wunsch1996ocean}, audio signal processing~\citep{lemercier2024diffusion,saito2023unsupervised,moliner2023blind, moliner2023solving,moliner2023diffusion,hernandez2024vrdmg},  medical imaging~\citep{song2021solving,chung2023solving,aali2024ambient,chung2022score}, etc. Despite their generality, inverse problems across different domains follow a fairly unified mathematical setting. Specifically, in inverse problems, the goal is to recover an unknown sample $\vx \in \R^n$ from a distribution $p_{\vX}$, assuming access to measurements $\vy \in \R^m$ and a corruption model 
\begin{align}\label{eq:problem}
    \vY = \mathcal A(\vX) + \sigma_\vy \vZ, \ \vZ \sim \mathcal N(\bm{0}, I_m).
\end{align}
In what follows, we present some well-known examples of measurement models that fit under this general formulation.

\begin{example}[Denoising] The simplest interesting example is the denoising inverse problem, i.e. when $A$ is the identity matrix and $\sigma_{\vy} > 0$. In fact, the noise model does not have to be Gaussian and it can be generalized to other distributions,  including the Laplacian Distribution or the Poisson Distribution~\citep{fan2019brief}. For the purposes of this survey, we focus on additive Gaussian noise. \end{example}

A lot of practical applications arise from the non-invertible linear setting, i.e. for $\mathcal A(\vX) = A\vX$ and $A$ being an $m\times n$ matrix with $m < n$. 
\begin{example}[Inpainting] 
$A$ is a masking matrix, i.e. $A_{ij} = 0$ for $i\neq j$ and $A_{ii}$ is either 0 or 1, based on whether the value at this location is observed.
\end{example}

\begin{example}[Compressed Sensing] 
$A$ is a matrix with entries sampled from a Gaussian random variable.
\end{example}

\begin{example}[Convolutions] Here $\mathcal A(\vX)$ represents the convolution of $\vX$ with a (Gaussian or other) kernel, which is again a linear operation. 
\end{example}

The same inverse problem can appear across vastly different scientific fields. To illustrate this point, we can take the inpainting case as an example. 
In Computer Vision, inpainting can be useful for applications such as object removal or object replacement~\citep{quan2024deep, rout2024solving, yu2023inpaint}. In the proteins domain, inpainting can be useful for protein engineering, e.g. by mutating certain aminoacids of the protein sequence to achieve better thermodynamical properties~\citep{ouyangzhang2023predicting, diaz2024stability, yang2019machine, xu2020deep}. MRI acceleration is also an inpainting problem but in the Fourier domain~\citep{aali2023solving, zbontar2018fastmri, desai2022skmtea, mridata_abdomens, mridata_knees}. Particularly,  for each coil measurement $y_i$ within the multi-coil setting, we have $A_i = P F S_i $, where $P$ is the masking operator, $F$ is the 2D discrete Fourier transform, and $S_i$ denotes the element-wise sensitivity value. For single-coil, $S_i$ is the identity matrix~\citet{lustig2008compressed}. Similarly, CT can be considered an inpainting problem in the Radon-transformed domain $A = P R$, where $R$ is the Radon transform~\citep{pan2009commercial, genzel2022near, beylkin1984inversion}. Depending on the circumstances such as sparse-view or limited-angle, the pattern of the masking operator $P$ differs~\citet{kak2001principles}. Finally, in the audio domain, the bandwidth extension problem, i.e. the task of recovering high-frequency content from an observed signal, is another example of inpainting in the spectrogram domain)~\citet{dietz2002spectral}.  

Inpainting is just one of many useful linear inverse problems in scientific applications and there are plenty of other important examples to consider. Cryo-EM~\citet{dubochet1988cryo} is a blind inverse problem that is defined by $A = CSR$, where $C$ is a blur kernel and $S$ is a shifting matrix, i.e. additional (unknown) shift and blur is applied to the projections.
Deconvolution appears in several applications such as super-resolution~\citep{park2003super, saharia2021image} of images and removing reverberant corruption~\citep{nakatani2010speech} in audio signals.

There are many interesting non-linear inverse problems too, i.e. where $\mathcal A$ is a nonlinear operator.

\begin{example}[Phase Retrieval~\citet{fienup1982phase}] Phase retrieval considers the nonlinear operator $\mathcal A(\vX):=|F \vX|$, where the measurement contains only the magnitude of the Fourier signal.
\end{example}
\begin{example}[Compression Removal] Here $\mathcal A(\vX;\alpha)$ represents a (non-linear) compression operator (e.g., JPEG) whose strength is controlled by the parameter $\alpha$.
\end{example}
A famous non-linear inverse problem is the problem of imaging a black hole, where the relationship between the image to be reconstructed and the interferometric measurement can be considered as a sparse and noisy Fourier phase retrieval problem~\citep{akiyama2019first}.

\subsection{Recovery types}
One common characteristic of these problems is that information is lost and perfect recovery is impossible~\citep{tarantola2005inverse}, i.e. they are \textit{ill-posed}. Hence, the type of ``recovery'' we are looking for should be carefully defined~\citep{Scarlett_2022}. For instance, one might be looking for the point that maximizes the posterior distribution $p(\vx | \vy)$~\citep{bassett2019maximum, pereyra2019revisiting}. Often, the Maximum a posteriori (MAP) estimation coincides with the Minimum Mean Squared Error Estimator, i.e. the conditional expectation $\E[\vx | \vy]$~\citep{young2005essentials, murphy2012machine}.  MMSE estimation attempts to minimize distortion of the unknown signal $\vx$, but often lead to unrealistic recoveries. A different approach is to sample from the full posterior distribution, $p(\vx | \vy)$. Posterior sampling accounts for the uncertainty of the estimation, and typically produces samples that have higher perception quality. \citet{blau2018perception} show that, in general, it is impossible to find a sample that maximizes perception and minimizes distortion at the same time. Yet, posterior sampling is nearly optimal~\citep{mri_paper} in terms of distortion error.

\subsection{Approaches for Solving Inverse Problems}

Inverse problems have a rich history, with approaches evolving significantly over the decades~\citet{ribes2008linear,barrett2013foundations}. While a comprehensive review is beyond the scope of this survey, we highlight key trends to provide context.  Early approaches, prevalent in the 2000s, often framed inverse problems as optimization tasks~\citet{daubechies2004iterative,candes2006robust,donoho2006compressed, figueiredo2003algorithm,daubechies2004iterative,hale2007fixed, shlezinger2023model}. These methods sought to balance data fidelity with regularization terms that encouraged desired solution properties like smoothness~\citet{rudin1992nonlinear,beck2009fast} or sparsity in specific representations (e.g., wavelets, dictionaries)~\citet{figueiredo2003algorithm,daubechies2004iterative,candes2006robust,donoho2006compressed,hale2007fixed}.

The advent of deep learning brought a paradigm shift~\citet{ongie2020deep}.  Researchers began leveraging large paired datasets to directly learn mappings from measurements to clean signals using neural networks~\citet{dong2015image,lim2017enhanced,tao2018scale,chen2018learning,zamir2022restormer,chen2022simple,tu2022maxim,zamir2021multi}. These approaches focus on minimizing some reconstruction loss during training, with various techniques employed to penalize distortions, and optimize for specific application goals (e.g., perceptual quality~\citet{isola2017image,kupyn2018deblurgan}). Traditional point estimates aim to recover a single reconstruction by for example minimizing the average reconstruction error (i.e., MMSE) or by finding the most probable reconstruction through Maximum a Posteriori estimate (MAP), i.e., finding the $\vx$ that \textit{maximizes} $p(\vx | \vy)$. While powerful, this approach can suffer from ``regression to the mean'', where the network predicts an average solution that may lack important details or even be outside the desired solution space~\citet{blau2018perception,delbracio2023inversion}.  In fact, learning a mapping to minimize a certain distortion metric will lead, in the best case, to an average of all the plausible reconstructions (e.g., when using a L2 reconstruction loss, the best-case solution will be the posterior mean). This reconstruction might not be in the target space (e.g., a blurry image being the average of all plausible reconstructions)~\citet{blau2018perception}.

Recent research has revealed a striking connection between denoising algorithms and inverse problems. Powerful denoisers, often based on deep learning, implicitly encode valuable information about natural signals. By integrating these denoisers into optimization frameworks, we can harness their learned priors to achieve exceptional results in a variety of inverse problems~\citet{pnp,sreehari2016plug, chan2016plug,romano2016little,cohen2021regularization,kadkhodaie2021stochastic, kamilov2023plug,milanfar2024denoising}. This approach bridges the gap between traditional regularization methods and modern denoising techniques, offering a promising new paradigm for solving these challenging tasks.

An alternative perspective views inverse problems through the lens of Bayesian inference. Given measurements $\vy$, the goal becomes generating plausible reconstructions by sampling from the posterior distribution  $p(\vX|\vY=\vy)$ – the distribution of possible signals $\vx$ given the observed measurements $\vy$. 

In this survey we explore a specific class of methods that utilize diffusion models as priors for $p_\vX$, and then try to generate plausible reconstructions (e.g., by sampling from the posterior). While other approaches exist, such as directly learning conditional diffusion models or flows for specific inverse problems~\citet{li2021srdiff,saharia2021image,saharia2022palette,whang2022deblurring,luo2023image,luo2023refusion,albergo2023building,albergo2023stochastic,lipman2023flow,liu2023i2sb,liu2023flow,shi2023diffusion}, these often require retraining for each new application. In contrast, the methods covered in this survey offer a more general framework applicable to arbitrary inverse problems without retraining or fine-tuning.

\paragraph{Unsupervised methods.} We refer as \emph{unsupervised methods} to those that focus on characterizing the distribution of target signals, $p_\vX$, and applying this knowledge during the inversion process. Since they don't rely on paired data, they can be flexibly applied to different inverse problems using the same prior knowledge.

Unsupervised methods can be used to maximize the likelihood of $p(\vx | \vy)$ or to sample from this distribution. Algorithmically, to solve the former problem we typically use (some variation of) Gradient Descent and to solve the latter (some variation of) Monte Carlo Simulation (e.g., Langevin Dynamics). Either way, one typically requires to compute the gradient of the conditional log-likelihood, i.e., $\nabla_{\vx}\log p(\vx| \vy)$.

A simple application of Bayes Rule reveals that:
\begin{gather}
    \underbrace{\nabla_{\vx}\log p(\vx| \vy)}_{\textrm{conditional score}} = \underbrace{\nabla_{\vx} \log p(\vx)}_{\textrm{unconditional score}} + \underbrace{\nabla_{\vx} \log p(\vy|\vx)}_{\textrm{measurements matching term}}.
    \label{eq:conv_approach}
\end{gather}

The last term typically has a closed-form expression, e.g. for the linear case, we have that: $\nabla_{\vx}\log p(\vy|\vx) = \frac{\vy - A\vx}{\sigma_\vy^2}$. However, the first term, known as the score function, might be hard to estimate when the data lie on low-dimensional manifolds. The problem arises from the fact that we do not get observations outside of the manifold and hence the vector-field estimation is inaccurate in these regions.

One way to sidestep this issue is by using a ``smoothed'' version of the score function, representing the score function of noisy data that will be supported everywhere. The central idea behind diffusion generative models is to learn score functions that correspond to different levels of smoothing. Specifically, in diffusion modeling, we attempt to learn the smoothed score functions, $\nabla_{\vx_t} \log p_t(\vx_t)$, where $\vX_t = \vX_0 + \sigma_t \vZ, \quad \vZ \sim \mathcal N(\bm{0}, I)$, for different noise levels $t$. During sampling, we progressively move from more smoothed vector fields to the true score function. At the very end, the score function corresponding to the data distribution is only queried at points for which the estimation is accurate because of the warm-start effect of the sampling method.

Even though estimating the unconditional score becomes easier (because of the smoothing), the measurement matching term becomes \textit{time dependent} and loses its closed form expression. Indeed, the likelihood of the measurements is given by the intractable integral:
\begin{gather}
\label{eq:intractable_integral}
    p_t(\vy|\vx_t) = \int_{} p(\vy|\vx_0) p(\vx_0|\vx_t)\d\vx_0.
\end{gather}

The computational challenge that emerges from the intractability of the conditional likelihood has led to the proposal of numerous approaches to use diffusion models to solve inverse problems~\citep{mri_paper, dps, song2022pseudoinverse, ncsnv3, ilvr, ddrm, kawar2021snips, ddnm, dds, diffpir, rozet2024learning, lugmayr2022repaint, rout2023theoretical, mardani2024variational, feng2023score, feng2023efficient, wu2024principled, dou2023diffusion, cardoso2023monte, wu2023practical, kadkhodaie2020solving, wang2024dmplug, chihaoui2024zeroshot, chung2024deep, shen2024understanding}. The sheer number of the proposed methods, but also the different perspectives under which these methods have been developed, make it hard for both newcomers and experts in the field to understand the connections between them and the unifying underlying principles. This work attempts to \textit{explain}, \textit{taxonomize} and \textit{relate} prominent methods in the field of using diffusion models for inverse problems. Our list of methods is by no means exhaustive. The goal of this manuscript is not to list all the methods that have been proposed but to review some representative methods of different approaches and present them under a unifying framework. We believe this survey will be useful as a reference point for people interested in this field.

\section{Background}
\subsection{Diffusion Processes}
\paragraph{Forward and Reverse Processes.}
The idea of a diffusion model is to transform a a simple distribution (e.g., normal distribution) into the unknown data distribution $p_0(\vx)$, that we don't know explicitly but we have access to some of its samples. The first step is to define a \textit{corruption process}. The popular Denoising Diffusion Probabilistic Models (DDPM)~\citet{ddpm, ncsn}, adopt a discrete time Markovian process to transform the input Normal distribution into the target one by incrementally adding Gaussian noise. More generally, the corruption processes of interest can be generalized to continuous time by a stochastic differential eqaution (SDE)~\citep{ncsnv3}:
\begin{gather}
    \dx_t = \underbrace{\vf(\vx_t, t)}_{\mathrm{drift \ coeff.}}\dt + \underbrace{g(t)}_{\mathrm{diffusion \ coeff.}}\dW_t,
    \label{eq:forward_process}
\end{gather}
with $\vx_0 \sim p_0, \vx_0 \in \R^n$, and $\bm{W}_t$ denotes a Wiener process (i.e., Brownian motion). 
This SDE gradually transforms the data distribution into Gaussian noise. We denote with $p_t$ the distribution that arises by running this dynamical system up to time $t$.

A remarkable result by \citet{anderson} shows that we can sample from $p_0$ by running backwards in time the reverse SDE:
\begin{gather}
    \dx_t = \left(\vf(\vx_t, t) - g^2(t)\underbrace{\nabla_{\vx_t}\log p_t(\vx_t)}_{\mathrm{score}}\right)\dt + g(t)\dW_t,
    \label{eq:backward_process}
\end{gather}
initialized at $\vx_T \sim p_T$. For sufficiently large $T$ and for linear drift functions $\vf(\cdot, \cdot)$, the latter distribution approaches a Gaussian distribution with known parameters that can be used for initializing the process. Hence, the remaining goal becomes to estimate the score function $\nabla_{\vx_t} \log p_t(\vx_t)$.

\paragraph{Probability Flow ODE.} \citet{ncsnv3, maoutsa2020interacting} observe that the (deterministic) differential equation:
\begin{gather}
    \frac{\dx_t}{\dt} = \left(\vf(\vx_t, t) - \frac{g^2(t)}{2}\underbrace{\nabla_{\vx_t}\log p_t(\vx_t)}_{\mathrm{score}}\right)
    \label{eq:det_backward_process}
\end{gather}
corresponds to the same Fokker-Planck equations as the SDE of \Eqref{eq:backward_process}. An implication of this is we can use the \textit{deterministic} sampling scheme of \Eqref{eq:det_backward_process}. Any well-built numerical ODE solver can be used to solve \Eqref{eq:det_backward_process}, such as the Euler solver:
\begin{align}
    \vx_{t-\Delta t} = \vx_t + \Delta t \left(\vf(\vx_t, t) - \frac{g^2(t)}{2}\nabla_{\vx_t}\log p_t(\vx_t)\right).
    \label{eq:euler_discretized_reverse_ode}
\end{align}

\paragraph{SDE variants: Variance Exploding and Variance Preserving Processes.} The drift coefficients, $\vf(\vx_t, t)$, and the diffusion coefficients $g(t)$ are design choices. One popular choice, known as the Variance Exploding SDE, is setting $\vf(\vx_t, t) = \bm{0}$ and $g(t) = \sqrt{\frac{\mathrm{d}\sigma_t^2}{\mathrm{d}t}}$ for some variance scheduling $\{\sigma_t\}_{t=0}^{T}$. Under these choices, the marginal distribution at time $t$ of the forward process of \Eqref{eq:forward_process} can be alternatively described as:
\begin{gather}
    \vX_t = \vX_0 + \sigma_t \vZ, \quad \vX_0 \sim p(\vX_0), \quad \vZ \sim \mathcal N(\bm{0}, I_n).
    \label{eq:ve_marginals}
\end{gather}
The typical noise scheduling for this SDE is $\sigma_t = \sqrt{t}$ (that corresponds to $g(t)=1$).

Another popular choice is to set the drift function to be $\vf(\vx_t, t) = - \vx_t$, which is known as the Variance Preserving (VP) SDE. A famous process in the VP SDE family is the Ornstein–Uhlenbeck (OU) process:
\begin{gather}
    \dx_t = -\vx_t \dt + \sqrt{2}\dW_t,
\end{gather}
which gives:
\begin{gather}
 \vX_t = \exp(-t) \vX_0 + \sqrt{1 - \exp(-2t)}\vZ, \quad \vZ \sim \mathcal N(\bm{0}, I_n).
\end{gather}

The VP SDE~\citep{ddpm} takes a more general form: 
\begin{gather}
    \vX_t = \sqrt{\alpha_t} \vX_0 + (1-\alpha_t) \vZ, \quad \vX_0 \sim p(\vX_0), \quad \vZ \sim \mathcal N(\bm{0}, I_n).
    \label{eq:vp_marginals}
\end{gather}
With reparametrization and the Euler solver, this leads to an efficient solution to \Eqref{eq:det_backward_process}, known as DDIM~\citep{ddim}: 
\begin{align}\label{eq:ddim}
    \vx_{t-1} = \sqrt{\alpha_{t-1}}\underbrace{\Bigg(\frac{\vx_t+(1-\alpha_t)\nabla_{\vx_t}\log p_t(\vx_t)}{\sqrt{\alpha_t}}\Bigg)}_{=:\widehat{\vx}_0=\text{predicted }\vx_0}+\sqrt{1-\alpha_{t-1}-\sigma_t^2}\underbrace{\Bigg(-\sqrt{1-\alpha_t} \nabla_{\vx_t}\log p_t(\vx_t) \Bigg)}_{\text{direction toward }\vx_t}.
\end{align}
For convenience, in the rest of the paper, this update will be written as: $\vx_{t-1}\leftarrow \mathrm{UnconditionalDDIM}(\widehat{\vx}_0, \vx_t)$.

\subsection{Tweedie's Formula and Denoising Score Matching} In what follows, we will discuss how one can learn the score function $\nabla_{\vx_t}\log p_t(\vx_t)$ that appears in \Eqref{eq:bayes_rule}. We will focus on the VE SDE, since the mathematical calculations are simpler. 

Tweedie's formula~\citep{efron2011tweedie} is a famous result in statistics that shows that for an additive Gaussian corruption, $\vX_t = \vX_0 + \sigma_t \vZ, \vZ \sim \mathcal N(\mathbf{0}, I_n)$, it holds that:
\begin{gather}
    \nabla_{\vx_t} \log p_t(\vx_t) = \frac{\E[\vX_0 | \vX_t=\vx_t] - \vx_t}{\sigma_t^2}.
    \label{eq:tweedies}
\end{gather}

The formal statement and a self-contained proof can be found in the Appendix, Lemma \ref{lemma:Tweedies}.

Tweedie's formula gives us a way to derive the unconditional score function needed in \Eqref{eq:bayes_rule}, by optimizing for the conditional expectation, $\E[\vX_0 |  \vX_t=\vx_t]$. The conditional expectation $\E[\vX_0 |  \vX_t=\vx_t]$, is nothing more than the minimum mean square error estimator (MMSE) of the clean image given the noisy observation $\vx_t$, that is a \emph{denoiser}.

In practice, we don't know analytically this denoiser but we can parametrize it using a neural network $\vh_{\bm{\theta}}(\vx_t)$ and learn it in a supervised way by minimizing the following objective:
\begin{gather}
    J_{\mathrm{DSM}}({\bm{\theta}}) = \E_{\vx_0, \vx_t}\left[ \norm{\net{\vx_t} - \vx_0} \right].
    \label{eq:dsm}
\end{gather}
Assuming a rich enough family $\bm{\Theta}$, the minimizer of \Eqref{eq:dsm} is $\vh_{\theta}(\vx_t) = \E[\vx_0 | \vX_t=\vx_t]$ (see Lemma \ref{lemma:mmse_exp}) and the score in \Eqref{eq:tweedies} is approximated as $\big(\vh_\theta(\vx_t) - \vx_t\big)/\sigma_t^2$. 
Note that for each $\sigma_t$ we would need to learn a different denoiser (since the noise strength is different), or alternative the neural network $\vh_{{\bm{\theta}}}$ should also take as input the value of $t$ or $\sigma_t$. Diffusion models are trained following the later paradigm, i.e. the same neural network approximates the optimal denoisers at all noise levels by conditioning it on the noise level through $t$.

Interestingly, \citet{vincent2011connection} independently discovered that the score function can be learned by minimizing an $l_2$ objective, similar to \Eqref{eq:dsm}. The formal statement and a self-contained proof of this alternative derivation is included in the Appendix, Theorem \ref{theorem:dsm}.

\subsection{Latent Diffusion Processes}
\label{sec:ldms}
For high-dimensional distributions, diffusion models training (see Equation \ref{eq:dsm}) and sampling (see Equation \ref{eq:det_backward_process}) require massive computational resources. To make the training and sampling more efficient, the authors of Stable Diffusion \citet{ldm} propose performing the diffusion in the latent space of a pre-trained powerful autoencoder. Specifically, given an encoder $\mathrm{Enc}: \R^n \to \R^k$ and a decoder $\mathrm{Dec}: \R^k \to \R^n$, one can create noisy samples:
\begin{gather}
     \vX^\mathrm{E}_t = \underbrace{\vX_0^\mathrm{E}}_{\mathrm{Enc}(\vX_0)} + \sigma_t \vZ, \quad \vZ \sim \mathcal N(\bm 0, I_k),
    \label{eq:latent_noisy_iterates}
\end{gather}
and train a denoiser network in the latent space. At inference time, one starts with pure noise, samples a clean latent $\tilde \vx^\mathrm{E}_0$ by running the reverse process,  and outputs $\vx_0 = \mathrm{Dec}(\tilde \vx^\mathrm{E}_0)$\footnote{To give an idea, the original work introducing latent diffusion models~\citet{ldm}, propose to use a latent space of dimension $64\times 64 \times 4$ to represent images of size $512 \times 512 \times 3$.}. Solving inverse problems with Latent Diffusion models requires special treatment. We discuss the reasons and approaches in this space in Section \ref{sec:solving_inv_with_ldms}.

\subsection{Conditional Sampling} 
\subsubsection{Stochastic Samplers for Inverse Problems}
The goal in inverse problems is to sample from $p_0(\cdot | \vy)$ assuming a corruption model $\vY = \mathcal A(\vX_0) + \sigma_\vy \vZ, \ \vZ \sim \mathcal N(\mathbf{0}, I_m)$. We can easily adapt the original unconditional formulation given by \Eqref{eq:backward_process} into a conditional one to generate samples from $p_0(\cdot | \vy)$. Specifically, the associated reverse process is given by the stochastic dynamical system~\citep{oksendal2010stochastic}:
\begin{gather}
    \dx_t = \left(\vf(\vx_t, t) - g^2(t)\underbrace{\nabla_{\vx_t}\log p_t(\vx_t| \ \vy)}_{\mathrm{conditional\, score}}\right)\dt + g(t)\dW_t,
    \label{eq:cond_backward_process}
\end{gather}
initialized at $\vx_T \sim p_T(\cdot | \vy)$. For sufficiently large $T$ and for linear drift functions $\vf(\cdot, \cdot)$, the distribution $p_T(\cdot | \vy)$ is a Gaussian distribution with parameters independent of $\vy$. In the conditional case, the goal becomes to estimate the score function $\nabla_{\vx_t} \log p_t(\vx_t | \vy)$.

\subsubsection{Deterministic Samplers for Inverse Problems} It is worth noting that (as in the unconditional setting) it is possible to derive deterministic sampling algorithms as well. Particularly, one can use the following dynamical system~\citep{ncsnv3, oksendal2010stochastic}:
\begin{gather}
        \frac{\dx_t}{\dt} = - \frac{g^2(t)}{2}\nabla_{\vx_t}\log p_t(\vx_t|\vy).
\end{gather}
initialized at $p_T(\cdot | \vy)$ to get sample from the conditional distribution $p_0(\cdot | \vy)$. 
Once again, to run this discrete dynamical system, one needs to know the conditional score, $\nabla_{\vx_t} \log p_t(\vx_t|\vy)$.

\subsubsection{Conditional Diffusion Models}
Similarly to the unconditional setting, one can directly train a network to approximate the conditional score, $\nabla_{\vx_t} \log p_t(\vx_t|\vy)$. A generalization of Tweedie's formula gives that:
\begin{gather}
    \nabla \log p_t(\vx_t | \vy) = \frac{\E[\vX_0 | \vX_t=\vx_t, \vY=\vy] - \vx_t}{\sigma_t^2}.
    \label{eq:cond_tweedie}
\end{gather}
Hence, one can train a network using a generalized version of the Denoising Score Matching, 
\begin{gather}
    J_{\mathrm{cond, \ DSM}}({\bm{\theta}}) = \E_{\vx_0, \vx_t, \vy}\left[ \norm{\net{\vx_t, \vy} - \vx_0} \right],
    \label{eq:cond_dsm}
\end{gather}
and then use it in \Eqref{eq:cond_tweedie} in place of the conditional expectation. The main issue with this approach is that the forward model (degradation operator) needs to be known at training time. If the corruption model $\mathcal A(\vX)$ changes, then the model needs to be retrained. Further, with this approach we need to train new models and we cannot directly leverage powerful unconditional models that are already available. The focus of this work is on methods that use pre-trained unconditional diffusion models to solve inverse problems, without further training.   

\subsubsection{Using pre-trained diffusion models to solve inverse problems}
\label{sec:uncond_for_inverse}
As we showed earlier, the conditional score can be decomposed using Bayes Rule into:
\begin{gather}
    \nabla_{\vx_t} \log p_t(\vx_t | \vy) = \underbrace{\nabla_{\vx_t} \log p_t(\vx_t)}_{\mathrm{score}} + \underbrace{\nabla_{\vx_t}\log p(\vy | \vx_t)}_{\mathrm{measurements \ matching \ term}}.
    \label{eq:bayes_rule}
\end{gather}
that is, the (smoothed) score function, and the measurements matching term that is given by the inverse problem we are interested in solving. Applying this to \eqref{eq:cond_backward_process}, we get that:
\begin{gather}
    \dx_t = \left(\vf(\vx_t, t) - g^2(t)\left(\nabla_{\vx_t}\log p_t(\vx_t) + \nabla \log p(\vy | \vx_t)\right)\right)\dt + g(t)\dW_t.
    \label{eq:backwards_sde_after_bayes}
\end{gather}
Similarly, one can use the deterministic process:
\begin{gather}
    \dx_t = \left(\vf(\vx_t, t) - \frac{1}{2}g^2(t)\left(\nabla_{\vx_t}\log p_t(\vx_t) + \nabla \log p(\vy | \vx_t)\right)\right)\dt.
    \label{eq:backwards_ode_after_bayes}
\end{gather}

We have already discussed how to train a neural network to approximate $\nabla_{\vx_t}\log p_t(\vx_t)$ (using Tweedie's Formula / Denoising Score Matching). However, here we further need access to the term $\nabla_{\vx_t}\log p(\vy | \vx_t)$. The likelihood of the measurements is given by the intractable integral:
\begin{gather}
    p_t(\vy|\vx_t) = \int_{} p(\vy|\vx_0) p(\vx_0|\vx_t)\d\vx_0.
\end{gather}

Gupta et. al~\citep{gupta2024diffusion} prove that there are instances of the posterior sampling problem for which \emph{every} algorithm takes superpolynomial time, even though \emph{unconditional} sampling is provably fast. Hence, diffusion models excel at performing unconditional sampling but are hard to use as priors for solving inverse problems because of the time dependence in the measurements matching term. Since the very introduction of diffusion models, there has been a plethora of methods proposed to use them to solve inverse problems without retraining. This survey serves as a reference point for different techniques that have been developed in this space.

\subsubsection{Ambient Diffusion: Learning to solve inverse problems using only measurements} 
The goal of the unsupervised learning approach for solving inverse problems (Section \ref{sec:uncond_for_inverse}) is to use a prior $p(\vx)$ to approximate the measurements matching term, $\nabla \log p_t(\vy|\vx_t)$. However, in certain applications, it is expensive or even impossible to get data from (and hence learn) $p(\vx)$ in the first place. For instance, in MRI the quality of the data is proportionate to the time spent under the scanner~\citep{zbontar2018fastmri} and it is infeasible to acquire full measurements from black holes~\citep {akiyama2019first}. This creates a chicken-egg problem: we need access to $p(\vx)$ to solve inverse problems and we do not have access to samples from $p(\vx)$ unless we can solve inverse problems. In certain scenarios, it is possible to break this seemingly impossible cycle. 

Ambient Diffusion~\citet{ambientdiffusion} was one of the first frameworks to train diffusion models with linearly corrupted data. The key concept behind the Ambient Diffusion framework is the idea of further corruption. Specifically, the given measurements get further corrupted and the model is trained to predict a clean image by using the measurements before further corruption for validation. Ambient DPS~\citep{aali2024ambient} shows that priors learned from corrupted data can even outperform (in terms of usefulness for inverse problems), at the high-corruption regime, priors learned from clean data. Ambient Diffusion was extended to handle additive Gaussian Noise in the measurements. The paper Consistent Diffusion Meets Tweedie~\citet{constweedie} was the first diffusion-based framework to provide guarantees for sampling from the distribution of interest, given only access to noisy data. This paper extends the idea of further corruption to the noisy case and proposes a novel consistency loss~\citet{daras2023consistent} to learn the score function for diffusion times that correspond to noise levels below the level of the noise in the dataset. 

Both Ambient Diffusion and Consistent Diffusion Meets Tweedie have connections to deep ideas from the literature in learning restoration models from corrupted data, such as Stein’s Unbiased Risk Estimate (SURE)~\citet{SURE, stein1981estimation} and Noise2X~\citet{noise2noise,krull2019noise2void,batson2019noise2self}. These connections are also leveraged by alternative frameworks to Ambient Diffusion, as in~\citep{kawar2021snips, aali2023solving}. A different approach for learning diffusion models from measurements is based on the Expectation-Maximization (EM) algorithm~\citep{bai2024expectation, rozet2024learning, wang2024integrating}. The convergence of these methods to the true distribution depends on the convergence of the EM algorithm, which might get stuck in a local minimum.

In this survey, we focus on the setting where a pre-trained prior $p(\vx)$ is available, regardless of whether it was learned from clean or corrupted data.

\section{Reconstruction Algorithms}
We summarize all the methods analyzed in this work in Table \ref{tab:categorization}. The methods have been taxonomized based on the approach they use to solve the inverse problem (explicit score approximations, variational methods, CSGM-type methods and asymptotically exact methods), the type of inverse problems they can solve and the optimization techniques used to solve the problem at hand (gradient descent, sampling, projections, parameter optimization). Additionally, we provide links to the official code repositories associated with the papers included in this survey.  Please note that we have not conducted a review or evaluation of these codebases to verify their consistency with the corresponding papers. These links are included for informational purposes only.

\paragraph{Taxonomy based on the type of the reconstruction algorithm.}
We identified four families of methods. \textit{Explicit Approximations for Measurement Matching:} These methods approximate the measurement matching score, $\nabla \log p_t(\vy|\vx_t)$, with a closed-form expression. \textit{Variational Inference:} These methods approximate the true posterior distribution, $p(\vx | \vy)$, with a simpler, tractable distribution. Variational formulations are then used to optimize the parameters of this simpler distribution. \textit{CSGM-type methods:} The works in this category use backpropagation to change the initial noise of the deterministic diffusion sampler, essentially optimizing over a latent space for the diffusion model.
\textit{Asymptotically Exact Methods:} These methods aim to sample from the true posterior distribution. This is typically achieved by constructing Markov chains (MCMC) or by propagating particles through a sequence of distributions (SMC) to obtain samples that approximate the posterior. Methods that do not fall into any of these categories are classified as \textit{Others}.

\paragraph{Taxonomy based on the type of optimization techniques used.} The objective of all methods is to explain the measurements. The measurement consistency can be enforced with different optimization techniques, e.g. through gradients (Grad), projections (Proj), sampling (Samp), or other optimization techniques (Opt). Methods that belong to the Grad-type take a single gradient step (either it be deterministic or stochastic) to $\vx_t$ to enforce measurement consistency. Proj-type projects $\vx_t$ or $\E[\vX_0|\vX_t=\vx_t]$ to the measurement subspace. Samp-type samples the next particles by defining a proposal distribution, and propagates multiple chains of particles to solve the problem. Opt-type either defines and solves an optimization problem for every timestep, or defines a global optimization problem that encompasses all timesteps. When the method belongs to more than one type, we seperate them with /. Note that the categorization of different ``types'' is subjective, and more often than not, the category that the method belongs to may be interpreted in multiple ways. For instance, a projection step is also a gradient descent step with a specific step size.

\paragraph{Taxonomy based on the type of the inverse problem.}

Based on the linearity of the corruption operator $\mathcal{A}$, the inverse problems can be classified as linear or nonlinear. The inverse problems can be further categorized based on whether there is noise in the measurements. Additionally, they are classified as non-blind or blind depending on whether full information about $\mathcal{A}$ is available. In blind problems, the degradation operator (e.g., convolution kernel, inpainting kernel) is known, while its coefficients are unknown but parametrized. For example, we might know that we have measurements with additive Gaussian noise, but the variance of the noise might be unknown. Finally, in certain inverse problems, there is additional text-conditioning. Such inverse problems are typically solved with text-to-image latent diffusion models~\citep{ldm}.

\subsection{Explicit Approximations for the Measurements Matching Term}

The first family of reconstruction algorithms we identify is the one were explicit approximations for the measurements matching term, $\nabla_{\vx_t}  \log p(\vy | \vX_t=\vx_t)$, are made. It is important to underline that these approximations are not always clearly stated in the works that propose them, 
which makes it hard to understand the differences and commonalities between different methods. In what follows, we attempt to elucidate the different approximations that are being made and present different works under a common framework. To provide some insights, we often provide the explicit approximation formulas for the measurements matching term in the setting of linear inverse problems. In general, it follows the template form:

\begin{equation}
    \nabla_{\vx_t} \log p(\vy | \vX_t=\vx_t) \approx - \frac{\eqnmarkbox[Plum]{proj}{\mathcal{L}_t}\eqnmarkbox[NavyBlue]{merror}{\,\,\,\,\mathcal{M}_t\,\,\,\,}}{\eqnmarkbox[OliveGreen]{guidance}{\,\,\mathcal{G}_t\,\,}}.
\end{equation}

Here,
\begin{itemize}
    \item $\mathcal{M}_t$ represents the error vector measuring the discrepancy between the observation $\vy$ and the estimated restored vector; for example, in Score ALD~\citep{mri_paper}, $\mathcal{M}_t=\vy - A\vx_t$.
    \item $\mathcal{L}_t$ denotes a matrix that projects the error vector  $\mathcal{M}_t$ from $\mathbb{R}^m$ back into an appropriate space in $\mathbb{R}^n$; for instance, in Score ALD,  $\mathcal{L}_t= A^\top$.
    \item $\mathcal{G}_t$ is the re-scaling scalar for the guidance vector $\mathcal{L}_t\mathcal{M}_t$; for example, in Score ALD, $\mathcal{G}_t=\sigma_\vy^2 + \gamma_t^2$ with a hyperparameter $\gamma_t$.
\end{itemize}
In Figure~\ref{fig:tree}, we summarize the approximation-based methods in this section using the template above. We use $\propto$ to omit the guidance strength terms $\mathcal{G}_t$.

\setcounter{subsubsection}{-1}
\subsubsection{Sampling from a Denoiser~\citet{kadkhodaie2020solving}}

\citet{kadkhodaie2020solving} introduce a method for solving linear inverse problems by using the implicit prior knowledge captured by a pre-trained denoiser on multiple noise levels. The method is anchored on Tweedie's formula that connects the least-squares solution for Gaussian denoising to the gradient of the log-density of noisy images given in \Eqref{eq:tweedies}
\begin{align}
\hat{\vx}(\vy) = \vy + \sigma^2 \nabla_\vy \log p(\vy),
\end{align}
where $\vy=\vx + \vn$, $\vn \sim \mathcal{N}(\mathbf{0}, \sigma^2 I_n)$.

By interpreting the denoiser's output as an approximation of this gradient, the authors develop a stochastic gradient ascent algorithm to generate high-probability samples from the implicit prior
\begin{align}
\vy_t = \vy_{t-1} + h_t \vr(\vy_{t-1}) + \epsilon_t \vz_t,
\end{align}
where $\vr(\vy) = \hat{\vx}(\vy) - \vy$ is the denoiser residual, $h_t$ is a step size (parameter), and $\epsilon_t$ controls the amount of newly introduced Gaussian noise $\vz_t$.

To solve linear inverse problems such as deblurring, super-resolution, and compressive sensing, the generative method is extended to handle constrained sampling. Given a set of linear measurements $\vx_c = M^{\top} \vx$ of an image $\vx$, where $M$ is a low-rank measurement matrix, the goal is to reconstruct the original image by utilizing the following gradient:
\begin{align}
\nabla_\vy \log p(\vy|\vx_c) = (I - MM^{\top})\vr(\vy) + M(\vx_c - M^{\top}\vy),
\end{align}

This approach is particularly interesting because its mathematical foundation relies solely on Tweedie's formula, providing a simple yet powerful framework for tackling inverse problems using denoisers.

\subsubsection{Score ALD~\citep{mri_paper}}
One of the first proposed methods for solving \textit{linear} inverse problems with diffusion models is the 
Score-Based Annealed Langevin Dynamics (Score ALD)~\citep{mri_paper} method. The approximation of this work is that:

\begin{equation}
    \nabla_{\vx_t} \log p(\vy | \vX_t=\vx_t) \approx - \frac{\eqnmarkbox[Plum]{proj}{A^{\top}}\eqnmarkbox[NavyBlue]{merror}{\left( \vy - A\vx_t\right)}}{\eqnmarkbox[OliveGreen]{guidance}{\sigma_{\vy}^2 + \gamma_t^2}},
    \label{eq:score_ALD}
\end{equation}
where $\gamma_t$ is a parameter to be tuned. 
\annotate[yshift=1em]{above}{merror}{measurements error}
\annotate[yshift=1em]{above,left}{proj}{``lifting'' matrix}
\annotate[yshift=-0.5em,xshift=3em]{below,right}{guidance}{guidance strength}

It is pretty straightforward to understand what this term is doing. The diffusion process is guided towards the opposite direction of the ``lifting'' (application of the $A^{\top}$ operator) of the measurements error, i.e. $(\vy - A\vx_t)$, where the denominator controls the guidance strength.

\subsubsection{Score-SDE~\citep{ncsnv3}}
Score-SDE~\citep{ncsnv3} is another one of the first works that discussed solving inverse problems with pre-trained diffusion models. For linear inverse problems, the difference between Score-ALD and Score-SDE is that the latter noises the measurements before computing the measurements error. Specifically, for $t: \sigma_t > \sigma_\vy$, the approximation becomes:
\vspace{1.5em}
\begin{gather}
\label{eq:score-sde-approx}
    \nabla_{\vx_t} \log p(\vy|\vX_t = \vx_t) \approx - \eqnmarkbox[Plum]{proj}{A^\top} (\eqnmarkbox[RoyalPurple]{noisymeasurements}{\vy + \sigma_t\bm{\epsilon}} - A\vx_t)
\end{gather}
\annotate[yshift=0.5em]{above,left}{proj}{``lifting'' matrix}
\annotate[yshift=0.5em]{above}{noisymeasurements}{$y_t$ (noised measurements)}

where $\bm{\epsilon}$ is sampled from $\mathcal N(\bm{0}, I_m)$. Here, $A$ is an orthogonal matrix, and taking a gradient step with \Eqref{eq:score-sde-approx} yields a noisy projection to $\vy_t = A\vx_t$ where $\vy_t = \vy + \sigma_t\bm{\epsilon}$. Hence, we categorize Score-SDE as ``projection''.

Disregarding the guidance strength of \Eqref{eq:score_ALD}, \Eqref{eq:score_ALD} and \Eqref{eq:score-sde-approx} look very similar. Indeed, the only difference is that the latter has stochasticity that arises from the noising of the measurements.

\paragraph{Special case: Inpainting (Repaint~\citep{lugmayr2022repaint})}
Observe that for the simplest case of inpainting, \Eqref{eq:score-sde-approx} would be replacing the pixel values in the current estimate $\vx_t$ with the known pixel values from the {\em noised} $\vy_t$. Coincidentally, this is exactly the Repaint~\citet{lugmayr2022repaint} algorithm that was proposed for solving the inpainting inverse problem with pre-trained diffusion models. RePaint++~\citet{rout2023theoretical} improves upon this approximation to run the forward-reverse diffusion processes multiple times, so that the errors arising (e.g. boundaries) can be mitigated. This can be thought of as analogous to running MCMC corrector steps as in predictor-corrector sampling~\citep{ncsnv3}.

\subsubsection{ILVR~\citep{ilvr}}
ILVR is a similar approach that was initially proposed for the task of super-resolution. The approximation made here is the following:
\vspace{1em}
\begin{align}
    \nabla_{\vx_t} \log p(\vy|\vX_t = \vx_t) \approx -A^\dagger(\vy_t - A\vx_t) = -\eqnmarkbox[Plum]{moore}{(A^\top A)^{-1} A^\top} (\eqnmarkbox[NavyBlue]{merror}{\vy_t - A\vx_t)},
\end{align}
where $A^\dagger$ is the Moore-Penrose pseudo-inverse of $A$, and similar to Score-SDE, $\vy_t = \vy + \sigma_t \bm{\epsilon}$. 
\annotate[yshift=0.5em]{above,left}{moore}{``lifting'' matrix}
\annotate[yshift=0.5em]{above}{merror}{measurements error}

ILVR can be regarded as a pre-conditioned version of score-SDE. In ILVR, the projection to the space of images happens using the Moore-Penrose pseudo-inverse of A, instead of the simple $A^{\top}$.

\subsubsection{DPS}
All of the previous algorithms were proposed for linear inverse problems. Diffusion Posterior Sampling (DPS) is one of the most well known reconstruction algorithms for solving non-linear inverse problems. The underlying approximation behind DPS is that:

\begin{gather}
    \nabla_{\vx_t} \log p(\vy | \vX_t=\vx_t) \approx \nabla_{\vx_t} \log p(\vy | \vX_0 = \E[\vX_0 | \vX_t=\vx_t]).
    \label{eq:DPS_approx}
\end{gather}

It is easy to see that:
\begin{gather}
    p(\vy | \vX_0 = \E[\vX_0 | \vX_t=\vx_t]) = \mathcal N\left(\vy; \vmu = \mathcal A(\E[\vX_0 | \vX_t=\vx_t]), \Sigma=\sigma_{\vy}^2I\right).
\end{gather}
Hence, the DPS approximation can be stated as:

\begin{gather}
    \nabla_{\vx_t} \log p(\vy | \vX_t=\vx_t) \approx \nabla_{\vx_t} \log \mathcal N\left(\vy; \vmu = \mathcal A(\E[\vX_0 | \vX_t=\vx_t]), \Sigma=\sigma_{\vy}^2I\right) \\
    = \nabla_{\vx_t} \left( \frac{1}{2\sigma_{\vy}^2}||\vy - \mathcal A(\E[\vX_0 | \vX_t=\vx_t])||^2\right) \\
    = \frac{1}{2\sigma_{\vy}^2}  \nabla_{\vx_t}^{\top}\mathcal A(\E[\vX_0 | \vX_t=\vx_t])\left(\mathcal A(\E[\vX_0 | \vX_t=\vx_t]) - \vy\right).
\end{gather}
For linear inverse problems, this simplifies to: \vspace{1em}
\begin{gather}
    \nabla_{\vx_t} \log p(\vy | \vX_t=\vx_t) \approx -\eqnmarkbox[OliveGreen]{guidance}{\frac{1}{2\sigma_{\vy}^2}}  \eqnmarkbox[Plum]{proj}{\nabla_{\vx_t}^{\top}\mathcal \E[\vX_0 | \vX_t=\vx_t]A^{\top}}\left(\eqnmarkbox[NavyBlue]{merror}{\vy - A\E[\vX_0 | \vX_t=\vx_t]}\right).
    \label{eq:linear_dps}
\end{gather}
\annotate[yshift=0.5em]{above}{merror}{measurements error}
\annotate[yshift=0.5em]{above,left}{proj}{``lifting'' matrix}
\annotate[yshift=-0.25em,xshift=-1em]{below,left}{guidance}{guidance strength}
\vspace{1em}

We can further use Tweedie's formula to further write it as:
\begin{gather}
   \nabla_{\vx_t} \log p(\vy | \vX_t=\vx_t) \approx -\eqnmarkbox[OliveGreen]{guidance}{\frac{1}{2\sigma_{\vy}^2}}\eqnmarkbox[Plum]{proj}{\left(I + \nabla_{\vx_t}^2\log p_{t}(\vx_t) \right)^{\top}A^{\top}}\left(\eqnmarkbox[NavyBlue]{merror}{\vy - A\E[\vX_0 | \vX_t=\vx_t]}\right).
   \label{eq:dps_written_for_score}
\end{gather}
In practice, DPS does not use the theoretical guidance strength but instead proposes to use a reweighting with a step size inversely proportional to the norm of the measurement error.

MCG~\citet{mcg} provides a geometric interpretation of DPS by showing that the approximation used in DPS can guarantee the noisy samples stay on the manifold. DSG~\citet{yang2024guidance} showed that one can choose a theoretically ``correct'' step size under the geometric view of MCG, and combined with projected gradient descent, one can achieve superior sample quality. MPGD~\citet{he2023manifold} showed that by constraining the gradient update step to stay on the low dimensional subspace by autoencoding, one can acquire better results.

\subsubsection{$\Pi$GDM~\citet{song2022pseudoinverse}}

Recall the intractable integral in \Eqref{eq:intractable_integral}.
According to this relation, the DPS approximation is achieved by setting
\begin{align}
    p(\vX_0|\vX_t) \approx \delta(\vX_0 - \E[\vX_0|\vX_t = \vx_t]).
\end{align}
In $\Pi$GDM, the authors propose to use a Gaussian distribution for approximation
\begin{align}
    p(\vX_0|\vX_t) \approx \mathcal{N}(\E[\vX_0|\vX_t = \vx_t], r_t^2 I_n),
\end{align}
where $r_t$ is a hyperparameter. For linear inverse problems, this leads to
\begin{align}
    p(\vy|\vX_t) \approx  \mathcal{N}(A\hat\E[\vX_0|\vX_t = \vx_t], r_t^2 AA^\top + \sigma_{\vy}^2 I_n).
\end{align}
Subsequently, we have
\begin{gather}
    \nabla_{\vx_t} \log p(\vy | \vX_t=\vx_t) \nonumber \\
    \approx -\eqnmarkbox[Plum]{proj}{\frac{\partial \E[\vX_0|\vX_t = \vx_t]}{\partial \vx_t}  (r_t^2 AA^\top + \sigma_{\vy}^2 I)^{-1} A^\top} (\eqnmarkbox[NavyBlue]{merror}{\vy - A \E[\vX_0|\vX_t = \vx_t]}).
\label{eq:pigdm}
\end{gather}
\annotate[yshift=-0.5em]{below,left}{proj}{``lifting'' matrix}
\annotate[yshift=-0.5em]{below}{merror}{measurements error}
\vspace{1em}

\subsubsection{Moment Matching~\citep{rozet2024learning}}
In $\Pi$GDM, the distribution $p(\vx_0 | \vx_t)$ was assumed to be isotropic Gaussian. However, one can calculate explicitly the variance matrix, $V[\vx_0 | \vx_t]$. As shown in Lemma \ref{lemma:score_jacobian}, it holds that:
\begin{gather}
    V[\vx_0 | \vx_t] = \sigma_t^4 H(\log p_t(\vx_t)) + \sigma_t^2I_n \\
    = \sigma_t^2 \nabla_{\vx_t}\E[\vx_0 | \vx_t].
\end{gather}
The Moment Matching~\citep{rozet2024learning} method approximates the distribution $p(\vx_0 | \vx_t)$ with an anisotropic Gaussian:
\begin{gather}
    p(\vx_0 | \vx_t) \approx \mathcal N(\E[\vx_0 | \vx_t], V[\vx_0 | \vx_t]).
\end{gather}
For linear inverse problems, this leads to the following approximation for the measurements' score:
\begin{gather}
    \nabla\log p(\vy | \vX_t=\vx_t) \nonumber \\ 
    \approx -\eqnmarkbox[Plum]{proj}{\nabla_{\vx_t}\E[\vx_0 | \vx_t]^{\top}A^{\top}(\sigma_{\vy}^2 I + A\sigma_t^2 \nabla_{\vx_t}\E[\vx_0 | \vx_t]A^{\top})^{-1}}(\eqnmarkbox[NavyBlue]{merror}{\vy - A\E[\vx_0 | \vx_t]}).
\end{gather}
\annotate[yshift=-0.5em]{below,right}{merror}{measurements error}
\annotate[yshift=-0.5em]{below,left}{proj}{``lifting'' matrix}
\vspace{1em}

In high-dimensions, even materializing the matrix $\nabla_{\vx_t}\E[\vx_0|\vx_t]$ is computationally intensive. Instead, the authors of \citep{rozet2024learning} use automatic differentiation to compute the Jacobian-vector products.

\subsubsection{BlindDPS~\citet{blinddps}}

Methods that were considered so far were designed for non-blind inverse problems, where $A$ is fully known. BlindDPS targets the case where we have a parametrized {\em unknown} forward model $A_\vphi$ (e.g. blurring with an unknown kernel $\vphi$). In BlindDPS, on top of the posterior mean approximation of $\vx_t$, one approximates the parameter of the forward model, again, with the posterior mean. Specifically, we design two parallel generative SDEs
\begin{align}
    \dx_t &= \left(\vf(\vx_t, t) - g^2(t)\nabla_{\vx_t}\log p_t(\vx_t, \vphi_t|\vy)\right)\dt + g(t)\dW_t \\
    \mathrm{d}\vphi_t &= \left(\vf(\vphi_t, t) - g^2(t)\nabla_{\vphi_t}\log p_t(\vx_t, \vphi_t|\vy)\right)\dt + g(t)\dW_t,
\end{align}
where the two SDEs are coupled through $\log p_t(\vx_t, \vphi_t|\vy)$, where under the independence between $\vX_t$ and $\vPhi_t$, the Bayes rule reads
\begin{align}
    \nabla_{\vx_t} \log p_t(\vx_t, \vphi_t|\vy) &= \nabla_{\vx_t} \log p_t(\vx_t) + \nabla_{\vx_t} \log p_t(\vy|\vX_t=\vx_t, \vPhi_t=\vphi_t) \\
    \nabla_{\vphi_t} \log p_t(\vx_t, \vphi_t|\vy) &= \nabla_{\vphi_t} \log p_t(\vphi_t) + 
    \nabla_{\vx_t} \log p_t(\vy|\vX_t=\vx_t, \vPhi_t=\vphi_t),
\end{align}
where we see that $\vX_t$ and $\vPhi_t$ are coupled through the likelihood $p(\vy|\vX_t, \vPhi_t)$.
In BlindDPS, the approximation used in DPS is applied to both the image and the operator, leading to
\begin{align}
    p(\vy|\vX_t=\vx_t, \vPhi_t=\vphi_t) \approx p(\vy|\vX_0 = \E[\vX_0|\vX_t = \vx_t], \vPhi_0 = \E[\vPhi_0|\vPhi_t = \vphi_t]).
\end{align}
The gradient of the coupled likelihood with respect to $\vx_t$ leads to
\begin{gather}
    \nabla_{\vx_t} \log p(\vy | \vX_t=\vx_t, \vPhi_t=\vphi_t) \nonumber \\ \approx -
    \eqnmarkbox[OliveGreen]{guidance}{\frac{1}{2\sigma_{\vy}^2}}  
    \eqnmarkbox[Plum]{proj}{\nabla_{\vx_t}^{\top}\E[\vX_0|\vX_t = \vx_t]A_{\E[\vPhi_0|\vPhi_t = \vphi_t]}^{\top}}
    \left(
    \eqnmarkbox[NavyBlue]{merror}{\vy - A_{\E[\vPhi_0|\vPhi_t = \vphi_t]}\E[\vX_0|\vX_t = \vx_t]}
    \right).
\end{gather}
\annotate[yshift=0.5em]{above}{merror}{measurements error}
\annotate[yshift=0.5em]{above,left}{proj}{``lifting'' matrix}
\annotate[yshift=0.25em,xshift=-1em]{above,left}{guidance}{guidance strength}
Similarly, for $\vphi_t$, we have
\begin{gather}
    \nabla_{\vphi_t} \log p(\vy | \vX_t=\vx_t, \vPhi_t=\vphi_t) \nonumber \\ \approx -
    \eqnmarkbox[OliveGreen]{guidance}{\frac{1}{2\sigma_{\vy}^2}}  
    \eqnmarkbox[Plum]{proj}{\nabla_{\vphi_t}^{\top}\E[\vX_0|\vX_t = \vx_t]A_{\E[\vPhi_0|\vPhi_t = \vphi_t]}^{\top}}
    \left(
    \eqnmarkbox[NavyBlue]{merror}{\vy - A_{\E[\vPhi_0|\vPhi_t = \vphi_t]}\E[\vX_0|\vX_t = \vx_t]}
    \right).
\end{gather}
\annotate[yshift=0.5em]{above}{merror}{measurements error}
\annotate[yshift=0.5em]{above,left}{proj}{``lifting'' matrix}
\annotate[yshift=0.25em,xshift=-1em]{above,left}{guidance}{guidance strength}

\subsubsection{DDRM Family}

The methods under the DDRM family poses all linear inverse problems to a noisy inpainting problem, by decomposing the measurement matrix with singular value decomposition (SVD), i.e. $A = U\Sigma V^\top$, where $U \in \R^{m \times m}, V \in \R^{n \times n}$ are orthogonal matrices, and $\Sigma \in \R^{m \times n}$ is a rectangular diagonal matrix with singular values $\{s_j\}_{j=1}^m$ as the elements. One can then rewrite $\vy = A\vx + \sigma_{\vy}\vz,\, \vz \sim \gN(\mathbf{0}, I_m)$ as
\begin{gather}
\label{eq:spectral_inpainting}
    \bar\vy = \Sigma\bar\vx + \sigma_{\vy}\bar\vz, \quad \mbox{where} \quad \bar\vy := U^\top \vy,\, \bar\vx := V^\top\vx,\, \bar\vz := U^\top\vz.
\end{gather}
Subsequently, \Eqref{eq:spectral_inpainting} becomes an inpainting problem in the spectral space.

\paragraph{SNIPS~\citep{kawar2021snips}.} SNIPS proceeds by first solving the inverse problem posed as \Eqref{eq:spectral_inpainting} in the spectral space to achieve a sample $\bar\vx \sim p(\bar\vx|\bar\vy)$, then retrieving the posterior sample with $\hat\vx = V\bar\vx$. The key approximation can be concisely represented as
\begin{align}
\label{eq:snips}
    \nabla_{\bar\vx_t} \log p(\bar\vy|\bar\vX_t = \vx_t) \approx 
    -\eqnmarkbox[Plum]{proj}{\Sigma^\top \left|\sigma_{\vy}^2 I_m - \sigma_t^2\Sigma\Sigma^\top\right|^\dagger}(
    \eqnmarkbox[NavyBlue]{merror}{\bar\vy - \Sigma\bar\vx_t}
    ),
\end{align}
\annotate[yshift=-0.5em]{below,right}{merror}{measurements error}
\annotate[yshift=-0.5em]{below,left}{proj}{``lifting'' matrix}
\vspace{1em}

For the simplest case of denoising where $m = n$ and $\Sigma = A = I$, the method becomes~\citep{kawar2021stochastic}
\begin{align}
    \nabla_{\vX_t} \log p(\vy|\vX_t = \vx_t) \approx \frac{\vy - \vx_t}{|\sigma_{\vy}^2 - \sigma_t^2|}.
\label{eq:kawar_posterior_denoising}
\end{align}
which produces a vector direction that is weighted by the absolute difference between the diffusion noise level $\sigma_t^2$, and the measurement noise level $\sigma_{\vy}^2$. For the fully general case in \Eqref{eq:snips}, elements in different indices are weighted according to the singular values contained in $\Sigma$. In practice, SNIPS uses pre-conditioning with the approximate negative inverse Hessian of $\log p(\bar\vx_t|\bar\vy)$ when running annealed Langevin dynamics.

\paragraph{DDRM~\citep{ddrm}.} DDRM extends SNIPS by leveraging the posterior mean $\bar\vx_{0|t} := V\E[\vX_0|\vX_t=\vx_t]$ in the place of $\bar\vx_t$ used in SNIPS. i.e.,
\begin{align}
\label{eq:ddrm_approx}
    \nabla_{\bar\vx_t} \log p(\bar\vy|\bar\vX_t = \vx_t) \approx 
    -\eqnmarkbox[Plum]{proj}{\Sigma^\top \left|\sigma_{\vy}^2 I_m - \sigma_t^2\Sigma\Sigma^\top\right|^\dagger}
    (
    \eqnmarkbox[NavyBlue]{merror}{\bar\vy - \Sigma\bar\vx_{0|t}}
    ).
\end{align}
\annotate[yshift=-0.5em]{below,right}{merror}{measurements error}
\annotate[yshift=-0.5em]{below,left}{proj}{``lifting'' matrix}
\vspace{1em}

Expressing \Eqref{eq:ddrm_approx} element-wise, we get
\begin{align}
    p(\bar\vx_t^{(i)}|\vX_{t+1} = \vx_{t+1},\vy) = 
    \begin{cases}
        \gN(\bar\vx_t^{(i)}; \bar\vx_{0|t+1}^{(i)}, \sigma_t^2) & \mbox{if  } s_i = 0\\
        \gN(\bar\vx_t^{(i)}; \bar\vx_{0|t+1}^{(i)}, \sigma_t^2) & \mbox{if  } \sigma_t < \frac{\sigma_{\vy}}{s_i}\\
        \gN(\bar\vx_t^{(i)}; \bar\vy^{(i)}, \sigma_t^2 - \frac{\sigma_{\vy}^2}{s_i^2}) & \mbox{if  } \sigma_t \geq \frac{\sigma_{\vy}}{s_i} \\
    \end{cases},
\label{eq:ddrm_simple}
\end{align}
where $\vx^{(i)}$ denotes the $i$-th element of the vector, and $s_i$ its corresponding singular value. Here, DDRM introduces another hyper-parameter $\eta$ to control the stochasticity of the sampling process
\begin{align}
    p(\bar\vx_t^{(i)}|\vX_{t+1} = \vx_{t+1},\vy) = 
    \begin{cases}
        \gN(\bar\vx_t^{(i)}; \bar\vx_{0|t+1}^{(i)} + \sqrt{1 - \eta^2}\sigma_t\frac{\bar\vx_{t+1}^{(i)} - \bar\vx_{0|t+1}^{(i)}}{\sigma_{t+1}}, \eta^2\sigma_t^2) & \mbox{if  } s_i = 0\\
        \gN(\bar\vx_t^{(i)}; \bar\vx_{0|t+1}^{(i)} + \sqrt{1 - \eta^2}\sigma_t\frac{\bar\vy^{(i)} - \bar\vx_{0|t+1}^{(i)}}{\sigma_{\vy}/s_i}, \eta^2\sigma_t^2) & \mbox{if  } \sigma_t < \frac{\sigma_{\vy}}{s_i}\\
        \gN(\bar\vx_t^{(i)}; \bar\vy^{(i)}, \sigma_t^2 - \frac{\sigma_{\vy}^2}{s_i^2}) & \mbox{if  } \sigma_t \geq \frac{\sigma_{\vy}}{s_i} \\
    \end{cases},
\label{eq:ddrm}
\end{align}
with $\eta \in (0, 1]$ such that $\eta = 1.0$ recovers \Eqref{eq:ddrm_simple}.

\paragraph{GibbsDDRM.}
GibbsDDRM~\citet{murata2023gibbsddrm} extends DDRM to the following blind linear problem
    $\vy = A_{\bm{\varphi}}\vx + \sigma_{\vy}\vz$, where $A_{\bm{\varphi}}$ is a linear operator parameterized by $\bm{\varphi}$. Here, $A_{\bm{\varphi}} = U_{\bm{\varphi}}\Sigma_{\bm{\varphi}} V_{\bm{\varphi}}^\top$ has a ${\bm{\varphi}}$ dependence SVD decomposition with singular values $\{s_{j,\bm{\varphi}}\}_{j=1}^m$ as the elements of the diagonal matrix $\Sigma_{\bm{\varphi}}$. In the spectral space, $\bar\vy_{\bm{\varphi}} := U_{\bm{\varphi}}^\top \vy_{\bm{\varphi}},\, \bar\vx_{\bm{\varphi}} := V_{\bm{\varphi}}^\top\vx_{\bm{\varphi}},\, \bar\vz_{\bm{\varphi}} := U_{\bm{\varphi}}^\top\vz_{\bm{\varphi}}$.
    Subsequently, the posterior mean in DDRM is replaced with $\bar\vx_{0|t, \bm{\varphi}} := V_{\bm{\varphi}}\E[\vX_0|\vX_t=\vx_t]$, also depending on $\bm{\varphi}$. Thus, it leads to the sampling process

\begin{align}
    p(\bar\vx_{t,\bm{\varphi}}^{(i)}|\vX_{t+1} = \vx_{t+1},\vy,\bm{\varphi}) = 
    \begin{cases}
        \gN(\bar\vx_{t,\bm{\varphi}}^{(i)}; \bar\vx_{0|t+1, \bm{\varphi}}^{(i)} + \sqrt{1 - \eta^2}\sigma_t\frac{\bar\vx_{t+1, \bm{\varphi}}^{(i)} - \bar\vx_{0|t+1, \bm{\varphi}}^{(i)}}{\sigma_{t+1}}, \eta^2\sigma_t^2) & \mbox{if  } s_{i,\bm{\varphi}} = 0\\
        \gN(\bar\vx_{t,\bm{\varphi}}^{(i)}; \bar\vx_{0|t+1, \bm{\varphi}}^{(i)} + \sqrt{1 - \eta^2}\sigma_t\frac{\bar\vy^{(i)}_{\bm{\varphi}} - \bar\vx_{0|t+1, \bm{\varphi}}^{(i)}}{\sigma_{\vy}/s_{i,\bm{\varphi}}}, \eta^2\sigma_t^2) & \mbox{if  } \sigma_t < \frac{\sigma_{\vy}}{s_{i,\bm{\varphi}}}\\
        \gN(\bar\vx_{t,\bm{\varphi}}^{(i)}; \bar\vy^{(i)}_{\bm{\varphi}}, \sigma_t^2 - \frac{\sigma_{\vy}^2}{s_{i,\bm{\varphi}}^2}) & \mbox{if  } \sigma_t \geq \frac{\sigma_{\vy}}{s_{i,\bm{\varphi}}} \\
    \end{cases}.
\label{eq:gibbsddrm}
\end{align}
 At time step $t$, $\bm{\varphi}$ is sampled by using the  conditional distribution $p(\bm{\varphi}|\vx_{t:T},\vy)$ and updated for several iterations in a Langevin manner:
 \begin{align*}
     \bm{\varphi} \leftarrow \bm{\varphi} + \frac{\xi}{2}\nabla_{\bm{\varphi}}\log p(\bm{\varphi}|\vx_{t:T},\vy) + \sqrt{\xi} \bm{\epsilon},
 \end{align*}
where $\xi$ is a stepsize and $\bm{\epsilon}\sim\mathcal{N}(\bm{0},I_n)$. Here, $\nabla_{\bm{\varphi}}\log p(\bm{\varphi}|\vx_{t:T},\vy)\approx\nabla_{\bm{\varphi}}\log p(\bm{\varphi}|\bar\vx_{0|t, \bm{\varphi}},\vy)$, and the gradient can be computed as:
\begin{align}
    \nabla_{\bm{\varphi}}\log p(\bm{\varphi}|\bar\vx_{0|t, \bm{\varphi}},\vy)=-\frac{1}{2\sigma^2_{\vy}}\nabla_{\bm{\varphi}}\norm{\vy - \mathbf{A}_{\bm{\varphi}}\bar\vx_{0|t, \bm{\varphi}}}.
\end{align}

\subsubsection{DDNM~\citet{ddnm} family}
A different way to find meaningful approximations for the conditional score is to look at the conditional version of Tweedie's formula, see \Eqref{eq:cond_tweedie}.
Using Bayes rule and rearranging~\citet{ravula2023optimizing}, we have
\begin{align}
    \E[\vX_0 | \vX_t=\vx_t, \vy] &= \vx_t + \sigma_t^2 \nabla_{\vx_t} \log p_t(\vX_t|\vy) \\
    &= \vx_t + \sigma_t^2 \nabla_{\vx_t} \log p_t(\vx_t) + \sigma_t^2 \nabla_{\vx_t} \log p_t(\vy|\vX_t=\vx_t) \\
    &= \E[\vX_0 | \vX_t=\vx_t] + \sigma_t^2 \nabla_{\vx_t} \log p_t(\vy|\vX_t=\vx_t).
\label{eq:cond_tweedie_score}
\end{align}
The methods that belong to the DDNM family make approximations to $\E[\vX_0|\vX_t=\vx_t,\vy]$ by making certain data consistency updates to $\E[\vX_0|\vX_t=\vx_t]$.
\paragraph{DDNM~\citet{ddnm}.}
The simplest form of update when considering no noise can be obtained through range-null space decomposition, assuming that one can compute the pseudo-inverse. In DDNM, this condition is trivially met by considering operations that are SVD-decomposable. DDNM proposes to use the following projection step to the posterior mean to obtain an approximation of the conditional posterior mean
\begin{align}
    \E[\vX_0 | \vX_t=\vx_t, \vy] \approx (I - A^\dagger A)\E[\vX_0 | \vX_t=\vx_t] + A^\dagger \vy,
\label{eq:ddnm}
\end{align}
where $A^\dagger$ is the Moore-Penrose pseudo-inverse of $A$. 
One can also express \Eqref{eq:ddnm} as an approximation of the likelihood, consistent to other methods in the chapter. Specifically,
notice that by using the relation in \Eqref{eq:cond_tweedie_score},
\begin{align}
\label{eq:cond_score_tweedie}
    \nabla_{\vx_t}\log p_t(\vy|\vX_t=\vx_t) = \frac{1}{\sigma_t^2}(\E[\vX_0 | \vX_t=\vx_t, \vy] - \E[\vX_0 | \vX_t=\vx_t]).
\end{align}
Plugging in \Eqref{eq:ddnm} to \Eqref{eq:cond_score_tweedie},
\vspace{1em}
\begin{align}
    \nabla_{\vx_t}\log p_t(\vy|\vX_t=\vx_t) \approx
    \eqnmarkbox[OliveGreen]{guidance}{-\frac{1}{\sigma_t^2}}
    \eqnmarkbox[Plum]{proj}{A^\dagger}
    \left(
    \eqnmarkbox[NavyBlue]{merror}{\vy - A\E[\vX_0 | \vX_t=\vx_t, \vy]}
    \right)
\label{eq:ddnm_label}
\end{align}
\annotate[yshift=0.5em]{above}{merror}{measurements error}
\annotate[yshift=0.5em]{above,left}{proj}{``lifting'' matrix}
\annotate[yshift=0.25em,xshift=-1em]{below,left}{guidance}{guidance strength}
\vspace{1em}

When there is noise in the measurement, one can make soft updates
\begin{align}
    \E[\vX_0 | \vX_t=\vx_t, \vy] \approx (I - \Sigma_t A^\dagger A)\E[\vX_0 | \vX_t=\vx_t] + \Sigma_t A^\dagger \vy, \quad \Sigma \in \R^{n \times n}.
\end{align}
Also, similar to \Eqref{eq:ddnm_label},
\vspace{1em}
\begin{align}
    \nabla_{\vx_t}\log p_t(\vy|\vX_t=\vx_t) \approx 
    \eqnmarkbox[OliveGreen]{guidance}{-\frac{1}{\sigma_t^2}}
    \eqnmarkbox[Plum]{proj}{\Sigma_t A^\dagger}
    \left(
    \eqnmarkbox[NavyBlue]{merror}{\vy - A\E[\vX_0 | \vX_t=\vx_t, \vy]}
    \right)
\label{eq:ddnm_label_noisy}
\end{align}
\annotate[yshift=0.5em]{above}{merror}{measurements error}
\annotate[yshift=0.5em]{above,left}{proj}{``lifting'' matrix}
\annotate[yshift=0.25em,xshift=-1em]{below,left}{guidance}{guidance strength}
\vspace{1em}

Here, one can choose a simple $\Sigma_t = \lambda_t I$ with $\lambda_t$ set as a hyper-parameter, or use different scaling for each spectral component. Observe that due to the relationship between the (conditional) score function and the posterior mean established in \Eqref{eq:cond_tweedie_score}, we can also easily rewrite the approximation in terms of the score of the posterior.
\paragraph{DDS~\citet{dds}, DiffPIR~\citet{diffpir}.}
Both DDS and DiffPIR propose a proximal update to approximate the conditional posterior mean, albeit from different motivations. The resulting approximation reads
\begin{align}
    \E[\vX_0 | \vX_t=\vx_t, \vy] \approx \argmin_{\vx} \frac{1}{2} \|\vy - A\vx\|^2 + \frac{\lambda_t}{2} \|\vx - \E[\vX_0|\vX_t=\vx_t]\|^2.
\label{eq:proximal_tweedie}
\end{align}
The difference between the two algorithms comes from how one solves the optimization problem in \Eqref{eq:proximal_tweedie}, and how one chooses the hyperparameter $\lambda_t$. In DDS, the optimization is solved with a few-step conjugate gradient (CG) update steps, by showing that DPS gradient update steps can be effectively replaced with the CG steps under assumptions on the data manifold~\citet{dds}. $\lambda_t$ is taken to be a constant value across all $t$. DiffPIR uses a closed-form solution for \Eqref{eq:proximal_tweedie}, and proposes a schedule for $\lambda_t$ that is proportional to the signal-to-noise (SNR) ratio of the diffusion at time $t$. Specifically, one chooses $\lambda_t = \sigma_t\zeta$, where $\zeta$ is a constant.

\subsection{Variational Inference}
These methods approximate the true posterior distribution, $p(\vx | \vy)$, with a simpler, tractable distribution. Variational formulations are then used to optimize the parameters of this simpler distribution. 

\subsubsection{RED-Diff~\citet{mardani2024variational}}

\citet{mardani2024variational} introduce RED-diff, a new approach for solving inverse problems by leveraging stochastic optimization and diffusion models. The core idea is to use variational method by introducing a simpler distribution, $q :=\mathcal{N}(\bm{\mu},\sigma^2 I_n)$, to approximate the true posterior  $p(\vx_0|\vy)$ by minimizing the KL divergence $\mathcal D_{\mathrm{KL}}$ between them:
\begin{align}
\min_{q} \mathcal D_{\mathrm{KL}}(q(\vx_0|\vy)\Vert p(\vx_0|\vy)).
\label{eq:var_inference}
\end{align}
Here, $\mathcal D_{\mathrm{KL}}(q(\vx_0|\vy)\Vert p(\vx_0|\vy))$ can be written as follows:
\begin{align}
    \mathcal D_{\mathrm{KL}}(q(\vx_0|\vy)\Vert p(\vx_0|\vy)) = \underbrace{-\mathbb{E}_{q(\vx_0|\vy)}[\log p(\vy|\vx_0)] + \mathcal D_{\mathrm{KL}}\big(q(\vx_0|\vy)\Vert p(\vx_0)\big)}_{\text{Variational Bound (VB)}} + \mathrm{constant}.
\end{align}
via classic variational inference argument. The first term in VB can be simplified into reconstruction loss, and the second term can be decomposed as score-matching objective which involves matching the score function of the variational distribution with the score function of the true posterior denoisers at different timesteps:
\begin{align}
\min_{\mu} \frac{||\vy-\mathcal A(\bm{\mu})||^2}{2\sigma_\vy^2} + \E_{t,\bm{\epsilon}}[\lambda_t||\epsilon_{\bm{\theta}}(\vx_t;t)-\bm{\epsilon}||^2]
\label{eq:REDdiff}
\end{align}
where $\bm{\mu}$ is the mean of the variational distribution,  
and $\sigma_v^2$ is the noise variance in the observation, 
$\bm{\epsilon}_{\bm{\theta}}(x_t;t)$ is the score function of the diffusion model at timestep ($t$) and $\lambda_t$ is a time-weighting factor.

\noindent \textbf{Sampling as optimization.} The goal is then to find an image $\bm{\mu}$ that reconstructs the observation $\vy$ given by $f$, while having a high likelihood under the denoising diffusion prior (regularizer). This score-matching objective is optimized using stochastic gradient descent, effectively turning the sampling problem into an optimization problem. The weighting factor ($\lambda_t$) is chosen based on the signal-to-noise ratio (SNR) at each timestep to balance the contribution of different denoisers in the diffusion process.

\subsubsection{Blind RED-Diff~\citet{alkan2023variational}}
In~\citet{alkan2023variational} authors introduce blind RED-diff, an extension of the RED-diff framework~\citet{mardani2024variational} to solve blind inverse problems. The main idea is to use variational inference to jointly estimate the latent image and the unknown forward model parameters.

Similar to RED-Diff, the key mathematical formulation is the minimization of the KL-divergence between the true posterior distribution $p(\vx_0, \gamma|\vy)$ and a variational approximation $q(\vx_0, \gamma | \vy)$:
$$
\min_q \mathcal D_{\mathrm{KL}}(q(\vx_0, \gamma|\vy)\|p(\vx_0, \gamma|\vy)).
$$
If we assume the latent image and the forward model parameters are independent, the KL-divergence can be decomposed as:
$$
\mathcal D_{\mathrm{KL}}(q(\vx_0|\vy)||p(\vx_0)) + \mathcal D_{\mathrm{KL}}(q(\gamma|\vy)\|p(\gamma)) - \E_{q(\vx_0,\gamma|\vy)}[\log p(\vy|\vx_0, \gamma)] + \log p(\vy).
$$
The minimization with respect to $q$ involves three terms:
\begin{enumerate}
\item[i.] $\mathcal D_{\mathrm{KL}}(q(\vx_0|\vy)\|p(\vx_0))$ represents the KL divergence between the variational distribution of the image ($\vx_0$) and its prior distribution. This term is approximated using a score-matching loss, which leverages denoising score matching with a diffusion model (as in RED-Diff).

\item[ii.] $\mathcal D_{\mathrm{KL}}(q(\gamma|\vy)\|p(\gamma))$ is the KL divergence between the variational distribution of the forward model parameters ($\gamma$) and their prior distribution. This term acts as a \emph{regularizer} on $\gamma$.

\item[iii.] $-\E_{q(\vx_0,\gamma|\vy)}[\log p(\vy|\vx_0, \gamma)]$ is the expectation of the negative log-likelihood of the observed data $\vy$ given the image $\vx_0$ and the forward model parameters $\gamma$. This term ensures data consistency.
\end{enumerate}

The resulting optimization can be achieved using alternating stochastic optimization, where the image $x_0$ and the forward model parameters $\gamma$ are updated iteratively.

The formulation assumes conditional independence between $\vx_0$ and $\gamma$ given the measurement $\vy$, and it also requires a specific form for the prior distribution $p(\gamma)$.

\subsubsection{Score Prior~\citet{feng2023score}}
We again start by introducing a variational distribution $q_{\bm{\phi}}(\vx_0)$ that aims to approximate the posterior distribution determined by the diffusion prior. The optimization problem becomes
\begin{align}
\label{eq:vi}
    &\min_{\bm{\phi}} \mathcal D_{\mathrm{KL}}(q_{\bm{\phi}}(\vx_0)||p_{\bm{\theta}}(\vx_0|\vy)) \\
    &\min_{\bm{\phi}} \int q_{\bm{\phi}}(\vx_0|\vy)[-\log p(\vy|\vx_0) - \log p_{\bm{\theta}}(\vx_0) + \log q_{\bm{\phi}}(\vx_0)].
\end{align}
One of the most expressive yet tractable proposal distributions is normalizing flows (NF)~\citet{rezende2015variational,dinh2016density}. Choosing $q_{\bm{\phi}}$ to be an NF, we can transform the optimization problem to
\begin{align}
\label{eq:berthy}
    \min_{\bm{\phi}} \E_{\vz \sim \mathcal{N}(\mathbf{0}, I_n)}\left[
    \underbrace{-\log p(\vy|G_{\bm{\phi}}(\vz))}_{\rm Likelihood} - \underbrace{\log p_{\bm{\theta}}(G_{\bm{\phi}}(\vz))}_{\rm Prior} + \underbrace{\log \pi(\vz) - \log \left|\det \frac{dG_{\bm{\phi}}(\vz)}{d\vz}\right|}_{\rm Entropy}
    \right],
\end{align}
where the expectation is over the input latent variable $\vz$, and $\pi$ is the reference Gaussian distribution. Observe that the likelihood term and the entropy can be efficiently computed with a single forward/backward pass through the NF due to the parametrization of $q_{\bm{\phi}}$ with an NF. All that is left for us is to compute the prior term $\log p_{\bm{\theta}}(G_{\bm{\phi}}(\vz))$. In score prior~\citet{feng2023score}, this is solved by leveraging the instantaneous change-of-variables formula with the diffusion PF-ODE, as originally proposed in~\citet{ncsnv3}
\begin{align}
\label{eq:likelihood_comp_pfode}
    \log p_{\bm{\theta}} (\vx_0) = \log p_T (\vx_T) + \int_0^T \nabla \cdot \tilde \vf_{\bm{\theta}}(\vx_t, t)\, \d t,
\end{align}
where $\vf_{\bm{\theta}}(\vx_t, t)$ is the drift term of the reverse SDE in \Eqref{eq:backward_process} with the score replaced by the network approximation. Notice that by plugging in \Eqref{eq:likelihood_comp_pfode} to \Eqref{eq:berthy}, we can optimize the NF model in an unsupervised fashion. Notice that while this formulation does not incur approximation errors, it is very costly as every optimization steps involve computing \Eqref{eq:likelihood_comp_pfode}. Moreover, observe that the training of NF is done for a specific measurement $\vy$. One has to run \Eqref{eq:berthy} for every different measurement that one wishes to recover.
\subsubsection{Efficient Score Prior~\citet{feng2023efficient}}
As computing \Eqref{eq:likelihood_comp_pfode} is costly, Feng {\em et al.~} proposed to optimize $q_{\bm{\phi}}$ with the evidence lower bound (ELBO), originally presented in the work of Score-flow~\citet{song2021maximum} $b_{\bm{\theta}}(\vx_0) \leq \log p_{\bm{\theta}}(\vx_0)$
\begin{align}
\label{eq:elbo_likelihood}
    b_{\bm{\theta}}(\vx_0) = \E_{p(\vx_T|\vx_0)}\left[ \log \pi(\vx_T) \right] - \frac{1}{2} \int_0^T g(t)^2 h(t)\,\d t,
\end{align}
where
\begin{align}
\label{eq:eff_h}
    h(t) := \E_{p(\vx_t|\vX_0)} \Bigg[
    \underbrace{\frac{1}{\sigma_t^4}\|\vh_{\bm{\theta}}(\vx_t) - \vx_0\|_2^2}_{\mathrm{Denoising \,\,loss}}
    - \| \nabla_{\vx_t} \log p(\vx_t|\vx_0) \|_2^2 - \frac{2}{g(t)^2} \nabla_{\vx_t} \cdot \vf(\vx_t, t)
    \Bigg].
\end{align}
Intuitively, the value of $b_{\bm{\theta}}$ is small when we have a small denoising loss, and large when our diffusion denoiser $\vh_{\bm{\theta}}$ cannot properly denoise the given image. Replacing the exact likelihood \Eqref{eq:likelihood_comp_pfode} that requires hundreds to thousands of NFEs to the surrogate denoising likelihood \Eqref{eq:elbo_likelihood} that requires only a single NFE makes the method much more efficient and scalable to higher dimensions.

\subsection{Asymptotically Exact Methods}
\label{sec:mc_methods}
These methods aim to sample from the true posterior distribution. Of course, the intractability of the posterior distribution cannot be circumvented but what these methods trade compute for approximation error: as the number of network evaluations increases to infinity, these methods will asymptotically converge to the true posterior (assuming no other approximation errors).

\subsubsection{Plug and Play Diffusion Models (PnP-DM)~\citep{wu2024principled}}
As explained in the introduction, the end goal is to sample from the distribution $p(\vx_0 | \vy) \propto p(\vx_0) p(\vy | \vx)$. The authors of~\citep{wu2024principled} introduce an auxiliary variable $\vz$ and an auxiliary distribution: 
\begin{gather}
    \pi(\vx_0, \vz | \vy) \propto p(\vx_0)\cdot p(\vy | \vz)\cdot \mathrm{exp}\left(-\frac{1}{2\rho^2}||\vx_0 -\vz||^2\right).
\end{gather}
It is easy to see that as $\rho \to 0$, the auxiliary distribution 
converges to the target distribution $p(\vx_0 | \vy)$.

To sample from the joint distribution $\pi(\vx_0, \vz|y)$, the authors use Gibbs Sampling, i.e. the alternate between sampling from the posteriors. Specifically, the sampling algorithm alternates between two steps:
\begin{itemize}
    \item Likelihood term:
    \begin{gather}
        \vz^{(k)} \sim \pi(\vz | \vy, \vx_0^{(k)}) \propto p(\vy | \vz)\cdot \mathrm{exp}\left(-\frac{1}{2\rho^2}||\vx_0^{(k)} -\vz||^2\right).
        \label{eq:sample_something_that_matches_measurements}
    \end{gather}
    \item Prior term:
    \begin{gather}
        \vx_0^{(k+1)}\sim \pi(\vx_0 | \vy, \vz^{(k)}) \propto p(\vx_0) \cdot \mathrm{exp}\left(-\frac{1}{2\rho^2}||\vx_0^{(k)} -\vz^{(k)}||^2\right).   
        \label{eq:prior_term_pnp}
    \end{gather}
\end{itemize}
The likelihood term samples a vector that satisfies the measurements and is close to $\vx_0^{(k)}$. The prior term samples a vector that is likely under $p(\vx_0)$ and is close to $\vz^{(k)}$. For most problems of interest, sampling from \Eqref{eq:sample_something_that_matches_measurements} is easy because the distribution is log-concave, e.g. that's the case for linear inverse problems. The interesting observation is that sampling from \Eqref{eq:prior_term_pnp} corresponds to a denoising problem, for which diffusion models excel. Indeed, for any $\vx_t$ at noise level $\sigma_t$, we have that:
\begin{gather}
    p(\vx_0|\vx_t) \propto  p(\vx_0) p(\vx_t | \vx_0) =  p(\vx_0) \mathrm{exp}\left(-\frac{1}{2\sigma_t^2}||\vx_0 - \vx_t||^2\right).
\end{gather}
Hence, to sample from \Eqref{eq:prior_term_pnp}, one initializes the reverse process at $\vz^{(k)}$ and time $t$ such that: $\sigma_t = \rho$.

\subsubsection{FPS~\citet{dou2023diffusion}}
FPS connects posterior sampling to Bayesian filtering and uses sequential Monte Carlo methods to solve the filtering problem, avoiding the need to handcraft approximations to the posterior  $p(\vy|\vx_t)$. Given an observation $\vy$, FPS proposes to first construct a sequence $\{\vy_t\}_{t=0}^N$ from $\vy$, and then determine a tractable distribution $p(\vx_{t-1}\vert \vx_{t}, \vy_{t-1})$. Starting from $\vx_N \sim \mathcal{N}(\mathbf{0}, I_n)$, FPS can then recursively sample $\vx_t$ for $t = N-1,\dots,1$, and finally obtain $\vx_0$. Specifically, FPS consists of two steps: 

\begin{enumerate}
    \item[\textbf{Step 1.}] \textbf{Generating a sequence of $\{\vy_t\}_{t=0}^{N}$ with an observation $\vy$.} This can be done either using the forward process or unconditional DDIM backward sampling.

For the construction via the forward process, we recursively construct $\vy_t$ as follows:
    \begin{gather}
    \vy_t=\vy_{t-1}+\sigma_{t} A\vz_t, \quad \text{initialized with } \vy_0:=\vy.
    \end{gather}
    This arises from $\vx_t = \vx_{t-1} + \sigma_t \vz_t$ and applying the linear operator $A$ to it.

    For the construction via backward sampling, FPS uses methods such as unconditional DDIM as in \Eqref{eq:ddim},
    \begin{gather}
    \vy_{t-1}=\underbrace{u_t\vy_0}_{\mathrm{clean}} +\underbrace{v_t\vy_{t}}_{\mathrm{direction\,\,to\,\,time\,\,}t \mathrm{\,\,sample}}+\underbrace{w_t A\vz_t}_{\mathrm{noise}}, \quad \text{initialized with } \vy_N\sim\mathcal{N}(\mathbf{0}, AA^{\top}). 
\end{gather}
Here, $u_t$, $v_t$, and $w_t$ are DDIM coefficients that can be explicitly computed. Note that $\vy_N$ is sampled from $\mathcal{N}(\mathbf{0}, AA^\top)$ because the prior distribution of the diffusion model is a standard Gaussian $\vx_N \sim \mathcal{N}(\mathbf{0}, I)$, and due to the linearity of the inverse problem, $\vy_N = A\vx_N$.
    \item[\textbf{Step 2.}] \textbf{Generating a backward sequence of $\{\vx_t\}_{t=0}^{N}$ from Step 1's $\{\vy_t\}_{t=0}^{N}$.} First, note that $p(\vx_{t-1}\vert \vx_{t}, \vy_{t-1})$ is a tractable normal distribution. This results from applying Bayes' rule and the conditional independence of  $\vx_t$ and the random vector $\vY_{t-1}$ given $\vx_{t-1}$:
    \begin{gather}
    p(\vx_{t-1}\vert \vx_{t}, \vY_{t-1})\propto p(\vx_{t-1}\vert \vx_{t}) p(\vY_{t-1}\vert \vx_{t-1}).
\end{gather}
 Here, $p(\vx_{t-1}\vert \vx_{t})$ is approximated via backward diffusion sampling with learned scores, and $p(\vY_{t-1}\vert \vx_{t-1})=\mathcal{N}(A\vx_{t-1}, c_{t-1}^2I)$, where $c_{t-1}$, dependent on $\sigma_{\vy}>0$, can be computed explicitly~\citep{song2021solving}.  Thus, with $\{\vy_t\}_{t=0}^{N}$ and initial condition $\vx_N\sim\mathcal{N}(\mathbf{0}, I_n)$, FPS recursively samples $\vx_{N-1},\cdots \vx_{1}$ using $p(\vx_{t-1}\vert \vx_{t}, \vY_{t-1}=\vy_{t-1})$, ultimately yielding $\vx_0$. 
\end{enumerate}

FPS algorithm is theoretically supported to recover the oracle $p(\vx|\vy)$ once the step size is sufficiently small.

\subsubsection{PMC~\citet{sun2024provable}}
Plug-and-Play (PnP)~\citet{kamilov2023plug} and RED~\citet{romano2016little} are two representative methods of using denoisers as priors for solving inverse problems. Let $g_\vy(\vx) = \frac{1}{2\sigma_\vy^2} \|\vy - A\vx\|_2^2$ be the log-likelihood function, $\vh_{\bm{\theta}}^\sigma(\cdot)$ an MMSE denoiser from \Eqref{eq:dsm} conditioned on the noise level $\sigma$, and $R_{\bm{\theta}}^\sigma(\cdot) := \mathrm{Id} - \vh_{\bm{\theta}}^\sigma(\cdot)$ the residual projector. Note that conditioning on the noise level $\sigma$ is equivalent to the network being conditioned no $t$, since the mapping is one-to-one.
A single iteration of these methods read
\begin{itemize}
    \item PnP proximal gradient method~\citet{kamilov2017plug}:
    \begin{align}
        \vx_{k+1} &= \vh_{\bm{\theta}}^\sigma(\vx_k - \gamma\nabla_{\vx_k}g_\vy(\vx_k)) \\
        &= \vx_k - \gamma\left(
        \nabla_{\vx_k} g_\vy(\vx_k) + \frac{1}{\gamma} R_{\bm{\theta}}^\sigma\left(\vx_k - \gamma \nabla_{\vx_k} g_\vy(\vx_k)\right)
        \right).
    \label{eq:pmc_pnp}
    \end{align}
    \item RED gradient descent~\citet{romano2016little}:
    \begin{align}
        \vx_{k+1} &= \vx_k - \gamma\left(
        \nabla_{\vx_k}g_\vy(\vx_k) + \tau \left(
        \vx_k - \vh_{\bm{\theta}}^\sigma(\vx_k)
        \right)
        \right) \\
        &= \vx_k - \gamma\left(
        \nabla_{\vx_k} g_\vy(\vx_k) + \tau R_{\bm{\theta}}^\sigma(\vx_k)
        \right).
        \label{eq:pmc_red}
    \end{align}
\end{itemize}

Notice that by using Tweedie's formula, we see that $R_\theta^\sigma(\vx) = -\sigma^2 \nabla_\vx \log p_\sigma(\vx)$. Rearranging \Eqref{eq:pmc_pnp} and \Eqref{eq:pmc_red},

\begin{itemize}
    \item PnP:
    \begin{align}
        \frac{\vx_{k+1} - \vx_k}{\gamma} = -P(\vx_k), \quad P(\vx) := \nabla_{\vx}g_\vy(\vx) - \frac{\sigma^2}{\gamma}\nabla_{\vx} \log p_\sigma(\vx - \gamma \nabla_\vx g_\vy(\vx)),
    \label{eq:pmc_pnp2}
    \end{align}
    \item RED:
    \begin{align}
        \frac{\vx_{k+1} - \vx_k}{\gamma} = -G(\vx_k), \quad G(\vx) := \nabla_{\vx}g_\vy(\vx) - \tau\sigma^2 \nabla_{\vx} \log p_\sigma(\vx).
        \label{eq:pmc_red2}
    \end{align}
\end{itemize}
Moreover, by setting $\gamma = \sigma^2$ and $\tau = 1 / \sigma^2$, one can show that
\begin{align}
    \lim_{\sigma \rightarrow 0} P(\vx) &= \nabla_\vx g_\vy(\vx) - \lim_{\sigma \rightarrow 0} \{
    \nabla_\vx \log p(\vx - \sigma^2 \nabla_{\vx_k} g_\vy(\vx_k))
    \} \notag \\
    &= \nabla_\vx g_\vy(\vx) - \lim_{\sigma \rightarrow 0} \{
    \lim_{\sigma \rightarrow 0} \log p_\sigma(\vx)
    \} = \lim_{\sigma \rightarrow 0} G(\vx) \notag \\
    &= - \nabla_{\vx} \log p(\vy|\vx) - \nabla_{\vx} \log p(\vx) = -\nabla_{\vx} \log p(\vx|\vy).
\label{eq:pmc_convergence}
\end{align}
In other words, we see that the iteration of PnP/RED in \Eqref{eq:pmc_pnp} and \Eqref{eq:pmc_red} will converge to sampling from the posterior as $\sigma^2 = \gamma \rightarrow 0$
\begin{align}
    \d\vx_t = \nabla_{\vx_t} \log p(\vx_t|\vy)\,\dt,
\label{eq:gf_ode}
\end{align}
where $t$ indexes the continuous time flow of $\vx$, as opposed to the discrete formulations in \Eqref{eq:pmc_pnp} and \Eqref{eq:pmc_red}. Note that this notion of $t$ does not match the diffusion time $t$, where the time index matches a specific noise level. In PMC, the authors propose to incorporate noise level annealing as done in the usual reverse diffusion process by starting from a large noise level $\sigma$ and gradually reducing the noise level.
Solving \Eqref{eq:gf_ode} with PMC then boils down to iterative application of \Eqref{eq:pmc_pnp} and \Eqref{eq:pmc_red} with the annealing strategy.
Moreover, introducing Langevin diffusion yields a stochastic version
\begin{align}
    \d\vx_t = \nabla_{\vx_t} \log p(\vx_t|\vy)\,\dt + \sqrt{2}\d\vW_t,
\label{eq:gf_langevin}
\end{align}
which can be solved in the same way, but with additional stochasticity.

\subsubsection{Sequential Monte Carlo-based methods}
SMCDiff~\citet{trippe2023diffusion}, MCGDiff~\citet{cardoso2023monte}, and TDS~\citet{wu2023practical} belong to the category of sequential Monte Carlo (SMC)-based methods~\citet{doucet2001introduction}. SMC aims to sample from the posterior by constructing a sequence of distributions $\vX_{1:T}$, which terminates at the target distribution. The evolution of the distribution is approximated by $K$ particles. In a high level, SMC can be described with three steps: 1) Transition with a proposal kernel $\{\vx_{t}^{1:K}\} \sim p(\vX_t|\vX_{t-1})$, 2) computing the weights to re-weight the importance, and 3) resampling from a reweighted multinomial distribution. Methods that belong to this category propose different ways of constructing the proposal distribution and the weighting function.

\subsection{CSGM-Type methods}
\label{sec:csgm_type}
\subsubsection{DMPlug~\citep{wang2024dmplug}, SHRED~\citep{chihaoui2024zeroshot}}
Compressed sensing generative model (CSGM)~\citep{csgm,ilo} is a general method for solving inverse problems with deep generative models by aiming to find the input latent vector $\vz$ through
\begin{align}
\label{eq:csgm}
    \vz^* = \argmin_\vz \|\vy - AG_{\bm{\theta}}(\vz)\|^2,
\end{align}
where $G_{\bm{\theta}}$ is an arbitrary generative model. DMPlug and SHRED can be seen as extensions of CSGM to the case where one uses a diffusion model. Unlike GANs or Flows where the mapping from the latent space to the image space is done through a single NFE, diffusion models require multiple NFE to solve the generative SDE/ODE. One can rewrite \Eqref{eq:csgm} as
\begin{align}
\label{eq:dmplug}
    \vz^* = \argmin_\vz \|\vy - A\hat\vx(\vz)\|^2,
\end{align}
where $\hat\vx=\hat \vx(\vz)$ is the solution of the deterministic sampler initialized at $\vz$. Essentially, the models in this category optimize over the ``latent'' space of noises that are fed to the deterministic ODE sampler. One caveat of \Eqref{eq:dmplug} is the exploding memory required for backpropagation through time. To mitigate this, when sampling from $p_{\bm{\theta}}(\vx_0|\vx_T)$, a few-step sampling (e.g. 3 for DMPlug and 10 for SHRED) is used to approximate the true sampling process.

\subsubsection{CSGM with consistent diffusion models~\citep{xu2024consistency}}
Diffusion models can be distilled into one-step models, known as Consistency Models~\citep{song2023consistency}, that solve in one step the Probability Flow ODE. These models can be used in \Eqref{eq:dmplug}, replacing the ODE sampling, to reduce the computational requirements~\citep{xu2024consistency}.

\subsubsection{Intermediate Layer Optimization~\citep{ilo, score_ilo}}
CSGM has been extended to perform the optimization in some intermediate latent space~\citep{ilo}. The problem is that the intermediate latents need to be regularized to avoid exiting the manifold of realistic images. Score-Guided Intermediate Layer Optimization (Score-ILO)~\citep{score_ilo} uses diffusion models to regularize the intermediate solutions.

\subsection{Latent Diffusion Models}
\label{sec:solving_inv_with_ldms}
\subsubsection{Motivation}
In this subsection, we focus on algorithms that have been developed for solving inverse problems with latent diffusion models (see Section \ref{sec:ldms}). There are a few additional challenges when dealing with latent diffusion models that have led to a growing literature of papers that are trying to address them. 

\paragraph{Loss of linearity.} The first challenge in solving inverse problems with latent diffusion models is that linear inverse problems become essentially non-linear. The problem stems from the fact that diffusion happens in the latent space but measurements are in the pixel-space. In order to guide the diffusion there are two potential solutions: i) either project the measurements to the latent space through the encoder, or, ii) project the latents to the pixel space as we diffuse through the decoder. Both approaches depend on non-linear functions ($\mathrm{Enc}, \ \mathrm{Dec}$ respectively) and hence even linear inverse problems need a more general treatment.

\paragraph{Decoding is expensive.} The other issue that arises is computational.  Most of the time, we need to decode the latent to pixel-space to compare with the measurements. The motivation behind latent diffusion models is to accelerate training and sampling. Hence, we want to avoid repeated calls to the decoder as we solve inverse problems.

\paragraph{Decoding-encoding map is not one-to-one.} Even if we ignore the computational challenges, it is not straightforward to decode the latent to the pixel-space, compare with the measurements and get meaningful guidance in the latent space since the decoding-encoding map is not an one-to-one function.

\paragraph{Text-conditioning.} Finally, latent diffusion models typically get a textual prompt as an additional input. A lot of algorithms that have been developed in the space of using latent diffusion models to solve inverse problems innovate on how they use text conditioning.

\subsubsection{Latent DPS}

The first algorithm we review in the space of solving inverse problems with latent diffusion models is Latent DPS, i.e. the straightforward extension of DPS for latent diffusion models. The approximation made in this algorithm is:
\begin{gather}
    \nabla_{\vx^\mathrm{E}_t} \log p(\vy | \vX^\mathrm{E}_t=\vx^\mathrm{E}_t) \approx \nabla_{\vx^\mathrm{E}_t} \log p(\vy |\vX_0 = \mathrm{Dec}(\E[\vX^\mathrm{E}_0 | \vX^\mathrm{E}_t=\vx^\mathrm{E}_t])).
    \label{eq:latent_DPS_approx}
\end{gather}

The algorithm works by performing one-step denoising in the latent space and measuring how much the decoding of the denoised latent matches the measurements $\vy$.

\subsubsection{PSLD~\citet{rout2024solving}}
The performance of Latent DPS is hindered by the fact that the decoding-encoding map is not an one-to-one function, as discussed earlier. The approximation made above could pull $\vx_t^\mathrm{E}$ towards any latent $\vx^\mathrm{E}_0$ that has a decoding that matches the measurements while the score function is pulling $\vx^\mathrm{E}_t$ towards a specific $\vx^\mathrm{E}_0$, i.e. towards $\E[\vx_0^\mathrm{E} | \vx^\mathrm{E}_t]$.

PSLD mitigates this problem by adding an additional term that pulls towards latents that are fixed points of the decoder-encoder map. Concretely, the approximation made in PSLD is:
\begin{gather}
    \nabla_{\vx^\mathrm{E}_t} \log p(\vy | \vX^\mathrm{E}_t=\vx^\mathrm{E}_t) \approx \nonumber \\ \nabla_{\vx^\mathrm{E}_t} \log p(\vy | (\vX_0 = \mathrm{Dec}(\E[\vX^\mathrm{E}_0 | \vX^\mathrm{E}_t=\vx^\mathrm{E}_t])) \nonumber \\ + \gamma_t \nabla_{\vx^\mathrm{E}_t} \left| \left| \E[\vx^\mathrm{E}_0 | \vx^\mathrm{E}_t] - \mathrm{Enc}( \mathrm{Dec}(\E[\vx^\mathrm{E}_0 | \vx^\mathrm{E}_t]) )\right|\right|^2,
    \label{eq:PSLD_approx}
\end{gather}
where $\gamma_t$ is a tunable parameter.

\subsubsection{Resample~\citet{song2024solving}}

Resample, a concurrent work with PSLD, proposes an alternative way to improve the performance of Latent DPS. After each clean prediction $\widehat{\vx}_0(\vx_{t+1}^{\mathrm{E}})$ is obtained from the previous sample $\vx_{t+1}^{\mathrm{E}}$ via Tweedie's formula in \Eqref{eq:tweedies}, and the unconditional reverse denoising process is updated using, say, DDIM:
\begin{align}
\vx_t':=\mathrm{UnconditionalDDIM}\big(\widehat{\vx}_0(\vx_{t+1}^{\mathrm{E}}), \vx_{t+1}^{\mathrm{E}}\big)
\end{align}
the authors project the latent back to a point $\widehat{\vx}_t$ that satisfies measurements using:
\begin{align}
    \mathcal{N} \Big(\widehat{\vx}_t;  \frac{\sigma_t^2\sqrt{\bar{\alpha}_t} \widehat{\vx}_0(\vy)+(1-\bar{\alpha}_t)\vx_t'}{\sigma_t^2 + (1-\bar{\alpha}_t)}, \frac{\sigma_t^2 (1-\bar{\alpha}_t)}{\sigma_t^2 + (1-\bar{\alpha}_t)}I_k \Big).
\end{align}
Here, $\sigma_t^2$ is a hyperparameter used to tune the alignment with measurements, $\bar{\alpha}_t$ is predefined in forward process, and $\widehat{\vx}_0(\vy)$ is found by solving:
\begin{align}
    \widehat{\vx}_0(\vy) \in \argmin_{\vx} \frac{1}{2}\norm{\vy - \mathcal{A}(\mathrm{Dec}(\vx))}_2 \quad \text{initialized at } \widehat{\vx}_0(\vx_{t+1}^{\mathrm{E}}).
\end{align}

\subsection{MPGD~\citet{hemanifold}}
The MPGD authors note that some methods require expensive computations for measurement alignment during gradient updates, as they involve passing through the gradient (chain rule) of the pre-trained diffusion model $\bm{\epsilon}_{\bm{\theta}}(\vx_t^{\mathrm{E}}, t)$: 
\begin{align}
    \vx_t^{\mathrm{E}} \leftarrow \vx_t^{\mathrm{E}} -\eta_t \nabla_{\vx_t^{\mathrm{E}}}\norm{\vy - \mathcal A\big(\mathrm{Dec}(\vx_{0\vert t})\big)}_2,
\end{align}
where $\vx_{0\vert t}:=\frac{1}{\sqrt{\bar{\alpha}_t}}\big(\vx_t^{\mathrm{E}} 
 - \sqrt{1-\bar{\alpha}_t} \bm{\epsilon}_{\bm{\theta}}(\vx_t^{\mathrm{E}}, t)\big)$ is  a clean estimation via Tweedie's formula in \Eqref{eq:tweedies}. This gradient bottleneck slows down the overall inverse problem solving. MPGD proposes bypassing the direct gradient $\nabla_{\vx_t^{\mathrm{E}}}$ with theoretical guarantees by updating with $\nabla_{\vx_{0 \vert t}}$:
\begin{align}
    \vx_{0\vert t}' \leftarrow \vx_{0\vert t} -\eta_t \nabla_{\vx_{0\vert t}}\norm{\vy - \mathcal A\big(\mathrm{Dec}(\vx_{0\vert t})\big)}_2
\end{align}
with 
\begin{align}
\vx_{0\vert t}:=\frac{1}{\sqrt{\bar{\alpha}_t}}\big(\vx_t^{\mathrm{E}} 
 - \sqrt{1-\bar{\alpha}_t} \bm{\epsilon}_{\bm{\theta}}(\vx_t^{\mathrm{E}}, t)\big),
\end{align}
and use the obtained $\vx_{0\vert t}'$ for unconditional reverse denoising process
\begin{align}
\vx_{t-1}':=\mathrm{UnconditionalDDIM}\big(\vx_{0\vert t}', \vx_{t}^{\mathrm{E}}\big).
\end{align}

\subsubsection{P2L~\citep{chung2024prompt}}

While text conditioning is a viable option for modern latent diffusion models such as Stable diffusion, the actual use was underexplored due to ambiguities on which text to use. P2L addresses this question by proposing an algorithm that optimizes for the text embedding on the fly while solving an inverse problem.
\begin{align}
\label{eq:p2l}
    \vc_t^{*} = \argmin_{\vc} \|\vy - A\mathrm{Dec}(\E[\vx^\mathrm{E}_0 | \vx^\mathrm{E}_t,\vc])\|^2,
\end{align}
where $\vc$ is the text embedding, and one can approximate $\E[\vx^\mathrm{E}_0 | \vx^\mathrm{E}_t,\vc]$ by using the Tweedie's formula with the denoiser conditioned on $\vc$. Using the optimized embedding at each timestep $\vc_t^*$, sampling follows the procedure of Latent DPS
\begin{align}
    \nabla_{\vx^\mathrm{E}_t} \log p(\vy | \vx^\mathrm{E}_t=\vx^\mathrm{E}_t, \vc) \approx \nabla_{\vx^\mathrm{E}_t} \log p(\vy | \vX_0 = \mathrm{Dec}(\E[\vx^\mathrm{E}_0 | \vx^\mathrm{E}_t=\vx^\mathrm{E}_t, \vc_t^*]))
\end{align}
In addition to the optimization of the text embedding, P2L further tries to leverage the VAE prior by decoding - running optimization in the pixel space - re-encoding
\begin{align}
\label{eq:p2l_xopt}
    \vx^* &= \argmin_{\vx} \|\vy - A\vx\|_2^2 + \lambda\|\vx - {\rm{Dec}}(\E[\vx^\mathrm{E}_0 | \vx^\mathrm{E}_t=\vx^\mathrm{E}_t])\|_2^2 \\
    \vx^\mathrm{E} &= {\rm{Enc}}(\vx^*)
\label{eq:p2l_zopt}
\end{align}

\subsubsection{TReg~\citep{kim2023regularization}, DreamSampler~\citep{kim2024dreamsampler}}
Instead of automatically finding a suitable text embedding to achieve maximal reconstructive performance, another advantage of text conditioning is that it can be used as an additional guiding signal to lead to a specific mode. This may seem trivial, as one has access to a conditional diffusion model. However, in practice, simply using a conditional diffusion model does not induce enough guidance as reported in~\citep{dhariwal2021diffusion,ho2022classifier}, and naively using classifier free guidance~\citep{ho2022classifier} (CFG) does not lead to satisfactory results. In addition to using data consistency imposing steps as in P2L, TReg proposes adaptive negation to update the null text embeddings used for CFG guidance.
\begin{align}
\label{eq:adaptive_negtaion}
    \vc_\varnothing^* = \argmin_{\vc} \mathrm{sim}(\mathcal{T}( \vx ^*), \vc),
\end{align}
where $\vx^*$ comes from \Eqref{eq:p2l_xopt}, $\rm{sim}$ denotes the CLIP similarity~\citep{clip} score, and $\gT$ is the CLIP image encoder. In essence, \Eqref{eq:adaptive_negtaion} minimizes the similarity between the current estimate of the image and the null text embedding. Hence, when the optimized $\vc_\varnothing$ is used for CFG with
\begin{align}
\label{eq:cfg_treg}
    \vepsilon_{\bm{\theta}}(\vx^\mathrm{E}_t, \vc_\varnothing^*) + \omega\left(
    \vepsilon_{\bm{\theta}}(\vx^\mathrm{E}_t, \vc) - \vepsilon_{\bm{\theta}}(\vx^\mathrm{E}_t, \vc_\varnothing^*),
    \right)
\end{align}
the conditioning vector direction $\vepsilon_{\bm{\theta}}(\vx^\mathrm{E}_t, \vc) - \vepsilon_{\bm{\theta}}(\vx^\mathrm{E}_t, \vc_\varnothing^*)$ is amplified. Later, TReg was further advanced by devising a way to better make use of CFG by combining score distillation sampling~\citet{poole2023dreamfusion} into the sampling framework.

\subsubsection{STSL~\citet{rout2024beyond}}
Most methods leverage the mean of the reverse diffusion distribution $p(\vX_0|\vX_t)$, and take a single gradient step with \Eqref{eq:latent_DPS_approx}. To further leverage the covariance of $p(\vX_0|\vX_t)$, Rout {\em et al.}~\citet{rout2024beyond} propose to use the following fidelity loss
\begin{align}
\label{eq:stsl_loss}
    \mathcal{L}(\vx^\mathrm{E}_t,\vy) = \nabla_{\vx^\mathrm{E}_t} \log p(\vy |\vX_0 = \mathrm{Dec}(\E[\vx^\mathrm{E}_0 | \vx^\mathrm{E}_t=\vx^\mathrm{E}_t]))  \nonumber \\ + \gamma \nabla_{\vx^\mathrm{E}_t} \left(
    {\rm Trace}\left(
    \nabla_{\vx^\mathrm{E}_t}^2 \log p(\vx^\mathrm{E}_t)
    \right)
    \right),
\end{align}
where $\gamma$ is a constant. To effectively compute the trace, one can further use the following approximation
\begin{align}
\label{eq:stsl_trace_approx}
    {\rm Trace}\left(
    \nabla_{\vx^\mathrm{E}_t}^2 \log p(\vx^\mathrm{E}_t)
    \right) \approx
    \E_{\bm{\epsilon} \sim \pi}\left[
    \bm{\epsilon}^\top\left(
    \nabla_{\vx^\mathrm{E}_t} \log p(\vx^\mathrm{E}_t + \bm{\epsilon}) - \nabla_{\vx^\mathrm{E}_t} \log p(\vx^\mathrm{E}_t)
    \right)
    \right],
\end{align}
where $\pi$ can be a Gaussian or a Rademacher distribution. Using the loss in \Eqref{eq:stsl_loss} with \Eqref{eq:stsl_trace_approx}, STSL uses multiple steps of stochastic gradient updates per timestep.

\section{Thoughts from the authors}
In the previous section, we presented several works in the space of using diffusion models to solve inverse problems. A natural question that both experts and newcomers to the field might have is, eventually,: ``\textit{which approach works the best?}''. Unfortunately, we cannot provide a conclusive answer to this question within the scope of this survey, but we can share a few thoughts. 

\paragraph{Thoughts about Explicit Approximations.} In this survey we tried to express seemingly very different works, such as DPS and DDRM, under a common mathematical language that contains the explicit approximations made for the measurements score. We observed that all the methods compute an error metric that matches consistency with the measurement and then lift the error back to the image space dimensions to perform the gradient update. Some of the methods used noised versions of the measurements to compute the error while others use the clean measurements. To the best of our knowledge, it is not clear which one works the best and one can derive new approximation algorithms by simply making the dual change to any of the methods that already exist, e.g. one can propose Score-ALD++ by using the noisy measurements to compute the error. By looking at Figure~\ref{fig:tree}, it is also evident that methods propose increasingly more complex ``lifting'' matrices. Some of these approximations require increased computation, e.g. the Moments Matching method. We strongly believe that the field would benefit from a standardized benchmark for diffusion models and inverse problems to understand better the computational performance trade-offs of different methods. We also believe that under certain distributional assumptions, it should be possible to characterize analytically the propagation of the approximation errors induced by the different methods.

\paragraph{Thoughts about Variational Methods.} Variational Methods try to estimate the parameters of a simpler distribution. The benefit here is that one can employ well-known optimization techniques to better solve the optimization problem at hand. A potential drawback of this approach is that the proposed distribution might not be able to capture the complexity of the real posterior distribution.

\paragraph{Thoughts about CSGM-type Methods.} CSGM-type frameworks can benefit from the plethora of techniques that have been previously developed to solve inverse problems with GANs and other deep generative modeling frameworks. The main issue here is computational since the generative model to be inverted here is the Probability Flow ODE mapping that requires several calls to the diffusion model. Consistency Models~\citep{song2023consistency,kim2023consistency} and other approaches such as Intermediate Layer Optimization could mitigate this issue.

\paragraph{Thoughts about Asymptotically Exact Methods.} Asymptotically Exact Methods, usually based on Monte Carlo, could be useful when sampling from the true posterior is really important. However, the theoretical guarantees of these methods only hold under the setting of infinite computation and it remains to be seen if they can scale to more practical settings.

\section{Conclusion}
In this survey, we discussed different types of inverse problems and different approaches that have been developed to solve them using diffusion priors. We identified four distinct families: methods that propose explicit approximations for the measurement score, variational inference methods, CSGM-type frameworks and finally approaches that asymptotically guarantee exact sampling (at the cost of increased computation). The different frameworks and the works therein are all trying to address the fundamental problem of the intractability of the posterior distribution. In this survey, we tried to unify seemingly different approaches and explain the trade-offs of different methods. We hope that this survey will serve as a reference point for the vibrant field of diffusion models for inverse problems.

\section*{Acknowledgments}
This research has been supported by NSF Grants AF 1901292, CNS 2148141, Tripods CCF 1934932, IFML CCF 2019844 and research gifts by Western Digital, Amazon, WNCG IAP, UT Austin Machine Learning Lab (MLL), Cisco and the Stanly P. Finch Centennial Professorship in Engineering. Giannis Daras has been supported by the Onassis Fellowship (Scholarship ID: F ZS 012-1/2022-2023), the Bodossaki Fellowship and the Leventis Fellowship. The authors would like to thank our colleagues Viraj Shah, Miki Rubinstein, Murata Naoki, Yutong He, and Stefano Ermon for helpful discussions.


\pagebreak
\appendix

\section{Proofs}

\begin{lemma}[Conditional Expectation and MMSE]
Let $\vX_0$ and $\vX_t$ be two random variables, and $h_{\bm{\theta}}(\vx_t, t)$ be a function parameterized by $\bm{\theta}$. Then:

\begin{gather}
   \textrm{argmin}_{\bm{\theta}} \mathbb{E}\left[\left|\left||h_{\bm{\theta}}(\vx_t, t) - \vx_0\right|\right|^2\right] = \textrm{argmin}_{\bm{\theta}} \mathbb{E}\left[\left|\left|h_{\bm{\theta}}(\vx_t, t) - \mathbb{E}[\vx_0|\vx_t]\right|\right|^2\right]
\end{gather}
That is, the function $h_{\bm{\theta}}(\vx_t, t)$ that minimizes the mean squared error with respect to $\vx_0$ is the one that best approximates the conditional expectation $\mathbb{E}[\vx_0|\vx_t]$.
\label{lemma:mmse_exp}
\end{lemma}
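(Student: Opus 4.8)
The plan is to prove the identity by the classical orthogonality (bias--variance) decomposition of the mean squared error around the conditional mean. Fix $\bm{\theta}$ and write $g(\vx_t) := h_{\bm{\theta}}(\vx_t, t)$ and $m(\vx_t) := \mathbb{E}[\vx_0 \mid \vx_t]$. The first step is the algebraic splitting
\[
\vx_0 - g(\vx_t) = \big(\vx_0 - m(\vx_t)\big) + \big(m(\vx_t) - g(\vx_t)\big),
\]
so that expanding the square gives
\[
\|\vx_0 - g(\vx_t)\|^2 = \|\vx_0 - m(\vx_t)\|^2 + \|m(\vx_t) - g(\vx_t)\|^2 + 2\big\langle \vx_0 - m(\vx_t),\, m(\vx_t) - g(\vx_t)\big\rangle .
\]

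The second step is to show that the cross term has zero expectation. Here I would condition on $\vx_t$ and use the tower property: since $m(\vx_t) - g(\vx_t)$ is a (deterministic) function of $\vx_t$, it can be pulled outside the inner conditional expectation, leaving
\[
\mathbb{E}\big[\langle \vx_0 - m(\vx_t),\, m(\vx_t) - g(\vx_t)\rangle \mid \vx_t\big] = \big\langle \mathbb{E}[\vx_0 \mid \vx_t] - m(\vx_t),\, m(\vx_t) - g(\vx_t)\big\rangle = 0,
\]
because $\mathbb{E}[\vx_0 \mid \vx_t] = m(\vx_t)$ by definition of $m$. Taking the outer expectation, the cross term vanishes, so
\[
\mathbb{E}\big[\|h_{\bm{\theta}}(\vx_t, t) - \vx_0\|^2\big] = \mathbb{E}\big[\|h_{\bm{\theta}}(\vx_t, t) - m(\vx_t)\|^2\big] + \mathbb{E}\big[\|\vx_0 - m(\vx_t)\|^2\big].
\]

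The final step is to observe that the last term, $\mathbb{E}\big[\|\vx_0 - \mathbb{E}[\vx_0 \mid \vx_t]\|^2\big]$, does not depend on $\bm{\theta}$. Hence the two objectives in the lemma differ only by a constant in $\bm{\theta}$, and therefore have exactly the same set of minimizers, which is the claim.

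I do not anticipate a serious obstacle; the only points requiring care are integrability --- one should assume $\vx_0$ and $h_{\bm{\theta}}(\vx_t,t)$ are square-integrable so that every expectation above is finite and the conditioning/tower-property manipulation of the cross term is justified --- and noting that the argument never uses attainability of the minimum, so it remains valid even when the model class $\{h_{\bm{\theta}}\}$ does not contain $m$, since all that is used is the constant-offset relation between the two objectives.
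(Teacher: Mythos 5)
Your proof is correct and follows essentially the same route as the paper's: expand the squared error around $\mathbb{E}[\vx_0\mid\vx_t]$ and kill the cross term via the tower property, so the two objectives differ by a $\bm{\theta}$-independent constant. The only cosmetic difference is that you annihilate the full cross term $\langle \vx_0 - m(\vx_t),\, m(\vx_t) - h_{\bm{\theta}}(\vx_t,t)\rangle$ at once, whereas the paper first discards the $\bm{\theta}$-independent pieces and then applies the same conditioning argument to the remaining $h_{\bm{\theta}}$-dependent term.
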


\begin{proof}
\begin{gather}
    \textrm{argmin}_{{\bm{\theta}}} \mathbb E\left[ \left| \left| \vh_{{\bm{\theta}}}(\vx_t, t) - \vx_0\right|\right|^2\right] \\
    = \textrm{argmin}_{{\bm{\theta}}} \mathbb E\left[ \left| \left| \vh_{{\bm{\theta}}}(\vx_t, t) - \mathbb E[\vx_0 |\vx_t] + \mathbb E[\vx_0 |\vx_t] - \vx_0\right|\right|^2\right] \\
    = \textrm{argmin}_{{\bm{\theta}}} \mathbb E\bigg[ \left| \left| \vh_{{\bm{\theta}}}(\vx_t, t) - \mathbb E[\vx_0 |\vx_t]\right|\right|^2 - 2(\vh_{{\bm{\theta}}}(\vx_t, t) - \mathbb E[\vx_0 |\vx_t] )^{\top}(\vx_0  - \mathbb E[\vx_0 |\vx_t]) \nonumber \\ + ||\vx_0 - \mathbb E[\vx_0 |\vx_t]||^2\bigg] \\
    = \textrm{argmin}_{{\bm{\theta}}} \mathbb E\left[ \left| \left| h_{{\bm{\theta}}}(\vx_t, t)- \mathbb E[\vx_0 |\vx_t]\right|\right|^2 - 2h_{{\bm{\theta}}}(\vx_t, t)^{\top}(\vx_0 - \mathbb E[\vx_0 |\vx_t])\right].
\end{gather}
Now, for the second term, we have:
\begin{gather}
    \mathbb E_{\vx_0, \vx_t}\left[ \vh_{{\bm{\theta}}}(\vx_t, t)^{\top} (\vx_0 - \E[\vx_0 | \vx_t])\right] = \mathbb E_{\vx_t} \mathbb E_{\vx_0 | \vx_t} \left[ \vh_{{\bm{\theta}}}(\vx_t, t)^{\top} (\vx_0 - \E[\vx_0 | \vx_t])\right] \\ 
    =  \mathbb \E_{\vx_t} \left[\vh_{{\bm{\theta}}}(\vx_t, t)^{\top} \left(\mathbb \E_{\vx_0 | \vx_t} \left[ (\vx_0 - \E[\vx_0 | \vx_t])\right]\right)\right] = 0,
\end{gather}
which concludes the proof.
\end{proof}

\subsection{Tweedie's Formula}
\begin{lemma}[Tweedie's Formula]
\label{lemma:Tweedies}
    Let:
    \begin{gather}
        \vX_t = \vX_0 + \sigma_t \vZ,
    \end{gather}
    for $\vX_0 \sim p_{\vX_0}$ and $\vZ \sim \mathcal N(0, I)$.
    Then,
    \begin{gather}
        \nabla_{\vx_t} \log p_t(\vx_t) = \frac{\E[\vX_0 | \vx_t] - \vx_t}{\sigma_t^2}.
    \end{gather}
\end{lemma}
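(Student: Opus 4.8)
The plan is to work directly with the marginal density $p_t$ of $\vX_t = \vX_0 + \sigma_t\vZ$, which by the convolution structure is
\begin{gather}
    p_t(\vx_t) = \int p_{\vX_0}(\vx_0)\, \phi_{\sigma_t}(\vx_t - \vx_0)\, \d\vx_0, \qquad \phi_{\sigma_t}(\vu) := (2\pi\sigma_t^2)^{-n/2}\exp\!\left(-\tfrac{\|\vu\|^2}{2\sigma_t^2}\right).
\end{gather}
First I would record the elementary identity for the Gaussian kernel, $\nabla_{\vx_t}\phi_{\sigma_t}(\vx_t - \vx_0) = \tfrac{\vx_0 - \vx_t}{\sigma_t^2}\,\phi_{\sigma_t}(\vx_t - \vx_0)$, which is just the gradient of an exponential of a quadratic. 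This is the only computational ingredient.

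Next I would differentiate $p_t$ under the integral sign:
\begin{gather}
    \nabla_{\vx_t} p_t(\vx_t) = \int p_{\vX_0}(\vx_0)\, \frac{\vx_0 - \vx_t}{\sigma_t^2}\, \phi_{\sigma_t}(\vx_t - \vx_0)\, \d\vx_0.
\end{gather}
Then, using $\nabla_{\vx_t}\log p_t(\vx_t) = \nabla_{\vx_t} p_t(\vx_t) / p_t(\vx_t)$ and recognizing that $p(\vx_0 \mid \vx_t) = p_{\vX_0}(\vx_0)\phi_{\sigma_t}(\vx_t-\vx_0)/p_t(\vx_t)$ by Bayes' rule, I would rewrite
\begin{gather}
    \nabla_{\vx_t}\log p_t(\vx_t) = \int p(\vx_0\mid\vx_t)\,\frac{\vx_0 - \vx_t}{\sigma_t^2}\,\d\vx_0 = \frac{\E[\vX_0\mid\vX_t = \vx_t] - \vx_t}{\sigma_t^2},
\end{gather}
since $\vx_t$ is constant under the $\vx_0$-integration. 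This closes the argument.

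The one step that needs care — and the main obstacle — is justifying the interchange of gradient and integral in the second display. The clean way is to invoke dominated convergence / the standard differentiation-under-the-integral lemma: for $\vx_t$ in a neighborhood of any fixed point, the integrand and its $\vx_t$-gradient are dominated by an integrable function of $\vx_0$ (the Gaussian tail beats the polynomial factor $\|\vx_0 - \vx_t\|$, uniformly on compacts, provided $p_{\vX_0}$ is a genuine probability measure). I would state this as a mild regularity hypothesis; note it holds for any $p_{\vX_0}$ with, say, finite first moment, and in fact the convolution with a Gaussian makes $p_t$ smooth even when $p_{\vX_0}$ is supported on a lower-dimensional set, so the result is robust. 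If one wants to avoid even this, an alternative is to phrase everything measure-theoretically: $p_t$ is automatically $C^\infty$ as a Gaussian convolution, and the dominated-convergence bound is then immediate on each compact set.
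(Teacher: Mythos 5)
Your argument is essentially the same as the paper's proof: both write the score as $\nabla_{\vx_t} p_t(\vx_t)/p_t(\vx_t)$, express $p_t$ as the Gaussian convolution of $p_{\vX_0}$, differentiate under the integral using the Gaussian score $(\vx_0-\vx_t)/\sigma_t^2$, and recognize the posterior $p(\vx_0\mid\vx_t)$ via Bayes' rule to obtain $\E[\vX_0\mid\vX_t=\vx_t]$. Your explicit attention to justifying the interchange of gradient and integral is a nice touch the paper leaves implicit, but the route is the same and correct.
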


\begin{proof}
    \begin{gather}
        \nabla_{\vx_t} \log p_t(\vx_t) = \frac{1}{p_t(\vx_t)}\nabla_{\vx_t} p_t(\vx_t) = \frac{1}{p_t(\vx_t)} \nabla_{\vx_t} \int p_t(\vx_t, \vx_0)\dx_0 \\
        = \frac{1}{p_t(\vx_t)} \nabla_{\vx_t} \int p_t(\vx_t | \vx_0) p_0(\vx_0)\dx_0 \\
        =  \frac{1}{p_t(\vx_t)}  \int \nabla_{\vx_t} p_t(\vx_t | \vx_0) p_0(\vx_0)\dx_0 \\
        = \frac{1}{p_t(\vx_t)}  \int p_t(\vx_t | \vx_0) \nabla_{\vx_t} \log p_t(\vx_t | \vx_0) p_0(\vx_0)\dx_0 \\
        =  \int p_0(\vx_0 | \vx_t) \frac{\vx_0 - \vx_t}{\sigma_t^2} \dx_0 \\
        = \frac{\E[\vX_0 | \vx_t] - \vx_t}{\sigma_t^2}.
    \end{gather}
\end{proof}

\subsection{Denoising Score Matching}

By leveraging the MMSE interpretation of the conditional expectation and Tweedie's formula, one can approximate the score function by training a model to predict the clean image from a corrupted observation (via supervised learning). At inference time, the trained network can be converted to a model that approximates the score through Tweedie's formula. This training procedure is typically known as $\vx_0$-prediction loss. An alternative, but equivalent, way is to train for the score directly. \citet{vincent2011connection} independently discovered Denoising Score Matching, which has as a unique minimizer the score function. DSM and the $\vx_0$-prediction objective are the same up to a simple network reparametrization.

\begin{theorem}[Denoising Score Matching~\citep{vincent2011connection}]
Let $p_0, p_t$ be two distributions in $\mathbb R^n$. Assume that all the conditional distributions, $p_t(\vx_t|\vx_0)$, are supported and differentiable in $\mathbb R^n$. Let:
\begin{gather}
    J_1(\theta) = \frac{1}{2}\mathbb E_{\vx_t \sim p_t}\left[\left|\left| \vs_{\theta}(\vx_t) - \nabla_{\vx_t} \log p_t(\vx_t)\right|\right|^2\right],
\end{gather}
\begin{gather}
    J_2(\theta) = \frac{1}{2}\mathbb E_{(\vx_0, \vx_t) \sim p_0(\vx_0) p_t(\vx_t|\vx_0)}\left[\left| \left| \vs_{\theta}(\vx_t) - \nabla_{\vx_t}\log p_t(\vx_t|\vx_0) \right|\right|^2\right].
\end{gather}
Then, $J_1$ and $J_2$ have the same minimizer.
\label{theorem:dsm}
\end{theorem}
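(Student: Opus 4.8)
The plan is to show that $J_1$ and $J_2$ differ by a function of $\theta$ only through a term that is independent of $\theta$, so that the two objectives have identical gradients and hence the same minimizer. First I would expand the square in $J_1$ as
\begin{align}
J_1(\theta) = \frac12 \E_{\vx_t}\left[\|\vs_\theta(\vx_t)\|^2\right] - \E_{\vx_t}\left[\vs_\theta(\vx_t)^\top \nabla_{\vx_t}\log p_t(\vx_t)\right] + C_1,
\end{align}
where $C_1 = \frac12\E_{\vx_t}[\|\nabla_{\vx_t}\log p_t(\vx_t)\|^2]$ does not depend on $\theta$, and similarly expand $J_2$ as
\begin{align}
J_2(\theta) = \frac12 \E_{\vx_0,\vx_t}\left[\|\vs_\theta(\vx_t)\|^2\right] - \E_{\vx_0,\vx_t}\left[\vs_\theta(\vx_t)^\top \nabla_{\vx_t}\log p_t(\vx_t|\vx_0)\right] + C_2,
\end{align}
with $C_2 = \frac12\E_{\vx_0,\vx_t}[\|\nabla_{\vx_t}\log p_t(\vx_t|\vx_0)\|^2]$ again independent of $\theta$. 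Since the marginal of $\vx_t$ under $p_0(\vx_0)p_t(\vx_t|\vx_0)$ is exactly $p_t(\vx_t)$, the first (quadratic) terms in the two expansions coincide.

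The crux is then to show the two cross terms agree. Here I would write the $J_1$ cross term using $p_t(\vx_t) = \int p_t(\vx_t|\vx_0) p_0(\vx_0)\, \d\vx_0$ and the identity $\nabla_{\vx_t} p_t(\vx_t) = \int \nabla_{\vx_t} p_t(\vx_t|\vx_0)\, p_0(\vx_0)\, \d\vx_0$ (interchanging derivative and integral, which is where the differentiability and support assumptions on $p_t(\cdot|\vx_0)$ are used):
\begin{align}
\E_{\vx_t}\left[\vs_\theta(\vx_t)^\top \nabla_{\vx_t}\log p_t(\vx_t)\right]
&= \int \vs_\theta(\vx_t)^\top \frac{\nabla_{\vx_t} p_t(\vx_t)}{p_t(\vx_t)} p_t(\vx_t)\, \d\vx_t \nonumber\\
&= \int \vs_\theta(\vx_t)^\top \left(\int \nabla_{\vx_t} p_t(\vx_t|\vx_0)\, p_0(\vx_0)\, \d\vx_0\right) \d\vx_t.
\end{align}
Using $\nabla_{\vx_t} p_t(\vx_t|\vx_0) = p_t(\vx_t|\vx_0)\, \nabla_{\vx_t}\log p_t(\vx_t|\vx_0)$ and Fubini to swap the order of integration, this becomes $\E_{\vx_0,\vx_t}[\vs_\theta(\vx_t)^\top \nabla_{\vx_t}\log p_t(\vx_t|\vx_0)]$, which is precisely the $J_2$ cross term. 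Hence $J_1(\theta) - J_2(\theta) = C_1 - C_2$ is a constant in $\theta$, so $J_1$ and $J_2$ share the same minimizer (and indeed the same gradient everywhere).

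I expect the main obstacle to be the regularity bookkeeping: justifying the interchange of $\nabla_{\vx_t}$ with the integral over $\vx_0$ and the application of Fubini to swap the $\vx_0$ and $\vx_t$ integrals. These are exactly the points the hypotheses (conditionals supported and differentiable on $\R^n$) are meant to license, so in the write-up I would invoke dominated convergence / standard differentiation-under-the-integral conditions and move on rather than belabor them. Everything else — completing the square, recognizing the marginal, and the log-derivative identity — is routine.
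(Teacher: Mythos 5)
Your proposal is correct and follows essentially the same route as the paper's proof: expand both objectives, observe the quadratic and constant terms match (the latter being $\theta$-independent), and equate the cross terms by writing the marginal score as an integral over $\vx_0$, pulling the gradient inside, applying the log-derivative identity, and swapping the order of integration. The only cosmetic difference is that the paper runs the cross-term computation starting from the $J_2$ side rather than the $J_1$ side.
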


We include the proof listed in \citep{daras2023soft} for completeness. 
\begin{proof}
\begin{gather}
    J_1(\theta) = \frac{1}{2}\mathbb E_{\vx_t \sim p_t}\left[\left|\left| \vs_{\theta}(\vx_t)\right| \right|^2 - 2 \vs_{\theta}(\vx_t)^{\top}\nabla_{\vx_t}\log p_t(\vx_t) + ||\nabla_{\vx_t} \log p_t(\vx_t)||^2\right] \\
    = \frac{1}{2} \mathbb E_{\vx_t\sim p_t}\left[ || \vs_{\theta}(\vx_t)||^2\right] -  \mathbb E_{\vx_t\sim p_t}\left[s_{\theta}(\vx_t)^{\top}\nabla_{\vx_t}\log p_t(\vx_t)\right] + C_1.
\end{gather}

Similarly,
\begin{gather}
    J_2(\theta) = \frac{1}{2} \mathbb E_{\vx_t\sim p_t}\left[ || \vs_{\theta}(\vx_t)||^2\right] -   \mathbb E_{(\vx_0, \vx_t)} \left[\vs_{\theta}(\vx_t)^{\top}\nabla_{\vx_t}\log p_t(\vx_t|\vx_0)\right] + C_2.
\end{gather}

It suffices to show that:
\begin{gather}
    \mathbb E_{\vx_t\sim p_t}\left[ s_{\theta}(\vx_t)^{\top}\nabla_{\vx_t}\log p_t(\vx_t)\right] \nonumber \\ = \mathbb E_{(\vx_0, \vx_t) \sim p_0(\vx_0) p_t(\vx_t|\vx_0)} \left[s_{\theta}(\vx_t)^{\top}\nabla_{\vx_t}\log p_t(\vx_t|\vx_0)\right].
\end{gather}

We start with the second term.
\begin{gather}
    \mathbb E_{(\vx_0, \vx_t) \sim p_0(\vx_0) p_t(\vx_t|\vx_0)} \left[\vs_{\theta}(\vx_t)^{\top}\nabla_{\vx_t}\log p_t(\vx_t|\vx_0)\right] \nonumber \\ = \int_{\vx_0}\int_{\vx_t}p_0(\vx_0) p_t(\vx_t|\vx_0) \vs_{\theta}(\vx_t)^{\top} \nabla_{\vx_t}\log p_t(\vx_t|\vx_0)\mathrm{d}{\vx_t}\mathrm{d}{\vx_0} \\
    = \int_{\vx_0}\int_{\vx_t} \vs_{\theta}^{\top}(\vx_t) \left( p_0(\vx_0) p_t(\vx_t|\vx_0) \nabla_{\vx_t}\log p_t(\vx_t|\vx_0)\right)\mathrm{d}{\vx_t}\mathrm{d}{\vx_0} \\
    =  \int_{\vx_0}\int_{\vx_t}\vs_{\theta}^{\top}(\vx_t) \left( p_0(\vx_0) p_t(\vx_t|\vx_0) \frac{1}{p_t(\vx_t|\vx_0)} \nabla_{\vx_t} p_t(\vx_t|\vx_0)\right)\mathrm{d}{\vx_t}\mathrm{d}{\vx_0} \\
    = \int_{\vx_0}\int_{\vx_t} \vs_{\theta}^{\top}(\vx_t) \left( p_0(\vx_0)  \nabla_{\vx_t} p_t(\vx_t|\vx_0)\right)\mathrm{d}{\vx_t}\mathrm{d}{\vx_0} \\ 
    = \int_{\vx_t}\int_{\vx_0} \vs_{\theta}^{\top}(\vx_t) \left( p_0(\vx_0)  \nabla_{\vx_t} p_t(\vx_t|\vx_0)\right)\mathrm{d}{\vx_0}\mathrm{d}{\vx_t} \\ 
    = \int_{\vx_t} \vs_{\theta}^{\top}(\vx_t) \left(\int_{\vx_0}p_0(\vx_0) \nabla_{\vx_t} p_t(\vx_t|\vx_0) \mathrm{d}{\vx_0}\right)\mathrm{d}{\vx_t} \\ 
    = \int_{\vx_t}\vs_{\theta}^{\top}(\vx_t) \left(\int_{\vx_0} \nabla_{\vx_t} \left(p_0(\vx_0)  p_t(\vx_t|\vx_0)\right) \mathrm{d}{\vx_0}\right)\mathrm{d}{\vx_t} \\
    = \int_{\vx_t}\vs_{\theta}^{\top}(x_t) \left(\nabla_{\vx_t}\left(\int_{\vx_0} p_0(\vx_0) p_t(\vx_t|\vx_0) \mathrm{d}{\vx_0}\right)\right)\mathrm{d}{\vx_t} \\
    = \int_{\vx_t}\vs_{\theta}^{\top}(\vx_t) \nabla_{\vx_t}p_t(\vx_t)\mathrm{d}{\vx_t} \\ 
    = \int_{\vx_t}p_t(\vx_t) \vs_{\theta}^{\top}(\vx_t) \nabla_{\vx_t}\log p_t(\vx_t)\mathrm{d}{\vx_t} \\
    = \mathbb E_{\vx_t \sim p_t(\vx_t)}\left[ \vs_{\theta}^{\top}(\vx_t)\nabla_{\vx_t}\log p_t(\vx_t)\right].
\end{gather}
\end{proof}

\renewcommand{\H}{\mathrm{H}}
\newcommand{\J}{\mathrm{Jacob}}

\subsection{Jacobian of the score}
\begin{lemma}[Jacobian of score-function]
    Let:
    \begin{gather}
        \vX_t = \vX_0 + \sigma_t \vZ,
    \end{gather}
    for $\vX_0 \sim p_{\vX_0}$ and $\vZ \sim \mathcal N(0, I)$.
    Then,
    \begin{gather}
        \H(\log p_{\vX_t})(\vx_t) = \frac{\E[\vX_0\vX_0^{\top} | \vx_t] -\E[\vX_0 |\vx_t]\E^{\top}[\vX_0 |\vx_t]}{\sigma_t^4} - \frac{I}{\sigma_t^2}.
        \label{eq:score_jacobian}
    \end{gather}
    \label{lemma:score_jacobian}
\end{lemma}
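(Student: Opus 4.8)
The plan is to differentiate Tweedie's formula once more. By Lemma~\ref{lemma:Tweedies} the score equals $\nabla_{\vx_t}\log p_t(\vx_t) = \sigma_t^{-2}\big(\E[\vX_0 | \vx_t] - \vx_t\big)$, and the Hessian of $\log p_t$ is by definition the Jacobian of this vector field, so
\begin{gather}
    \H(\log p_{\vX_t})(\vx_t) = \frac{1}{\sigma_t^2}\Big(\nabla_{\vx_t}\E[\vX_0 | \vx_t] - I\Big).
\end{gather}
Hence it suffices to show that $\nabla_{\vx_t}\E[\vX_0 | \vx_t] = \sigma_t^{-2}\big(\E[\vX_0\vX_0^{\top} | \vx_t] - \E[\vX_0 | \vx_t]\E^{\top}[\vX_0 | \vx_t]\big)$; plugging this into the display above immediately yields \eqref{eq:score_jacobian}.

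To compute that Jacobian I would write $\E[\vX_0 | \vx_t] = N(\vx_t)/D(\vx_t)$ with $N(\vx_t) := \int \vx_0\, p_t(\vx_t | \vx_0)\,p_0(\vx_0)\,\d\vx_0$ and $D(\vx_t) := p_t(\vx_t) = \int p_t(\vx_t | \vx_0)\,p_0(\vx_0)\,\d\vx_0$. Since $p_t(\vx_t | \vx_0)$ is the $\mathcal N(\vx_0,\sigma_t^2 I)$ density, it satisfies $\nabla_{\vx_t}p_t(\vx_t | \vx_0) = p_t(\vx_t | \vx_0)\,\sigma_t^{-2}(\vx_0-\vx_t)$ — the very identity used inside the proof of Lemma~\ref{lemma:Tweedies}. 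Differentiating under the integral sign (justified by the same regularity assumption invoked there) and dividing by $D$ rewrites the derivatives as conditional expectations:
\begin{gather}
    \frac{\nabla_{\vx_t}D}{D} = \frac{1}{\sigma_t^2}\big(\E[\vX_0 | \vx_t]-\vx_t\big)^{\top}, \qquad \frac{\nabla_{\vx_t}N}{D} = \frac{1}{\sigma_t^2}\big(\E[\vX_0\vX_0^{\top} | \vx_t] - \E[\vX_0 | \vx_t]\,\vx_t^{\top}\big),
\end{gather}
where $\nabla_{\vx_t}N$ denotes the $n\times n$ Jacobian of the vector-valued $N$ and $\nabla_{\vx_t}D$ the gradient row vector.

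Finally I would apply the quotient rule $\nabla_{\vx_t}(N/D) = (\nabla_{\vx_t}N)/D - \E[\vX_0 | \vx_t]\,(\nabla_{\vx_t}D)/D$ and substitute the two expressions above; the terms carrying the factor $\vx_t^{\top}$ cancel, leaving
\begin{gather}
    \nabla_{\vx_t}\E[\vX_0 | \vx_t] = \frac{1}{\sigma_t^2}\Big(\E[\vX_0\vX_0^{\top} | \vx_t] - \E[\vX_0 | \vx_t]\E^{\top}[\vX_0 | \vx_t]\Big),
\end{gather}
which completes the argument. I do not anticipate a genuine obstacle: the content is just a second application of the differentiation-under-the-integral trick already used for Tweedie's formula, and the one step requiring care is the matrix/vector bookkeeping in the quotient rule — in particular that stacking the coordinatewise derivatives of $N$ produces $\E[\vX_0\vX_0^{\top}|\vx_t]$ (rather than its transpose) and that the outer-product term enters with the correct orientation so that the $\vx_t^{\top}$ contributions really do cancel.
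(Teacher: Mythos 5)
Your proposal is correct and follows essentially the same route as the paper: differentiate Tweedie's formula to reduce the claim to computing $\nabla_{\vx_t}\E[\vX_0|\vx_t]$, then evaluate that Jacobian by differentiating under the integral using the Gaussian likelihood identity $\nabla_{\vx_t}p_t(\vx_t|\vx_0)=p_t(\vx_t|\vx_0)\,\sigma_t^{-2}(\vx_0-\vx_t)$. Your quotient rule on $N/D$ produces exactly the same two terms as the paper's expansion of $\nabla_{\vx_t}\log p(\vx_0|\vx_t)$ via Bayes' rule (likelihood score minus marginal score), so the difference is only bookkeeping, and your cancellation of the $\vx_t^{\top}$ contributions matches the paper's final step.
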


\begin{proof}

\begin{gather}
    \nabla_{\vx_t} \log p_{\vX_t}(\vx_t) = \frac{\E[\vX_0 |\vx_t] - \vx_t}{\sigma_t^2} \\ 
    \Rightarrow \sigma_t^2 \H(\log p_{\vX_t})(\vx_t) = \J\left(\E[\vX_0|\vx_t] \right) - I.
\end{gather}
We will now analyze the Jacobian.
\begin{gather}
    \J\left(\E[\vX_0|\vx_t] \right) = \int_{} \nabla_{\vx_t}\left(p_{\vX_0}(\vx_0|\vx_t)\vx_0\right)\dx_0 \\
    = \int_{} \vx_0\nabla_{\vx_t}^{\top} p_{\vX_0}(\vx_0|\vx_t)\dx_0 \\ 
    = \int_{} \vx_0 p_{\vX_0}(\vx_0 |\vx_t) \nabla_{\vx_t}^{\top} \log\left(  \frac{p_{\vX_t}(\vx_t | \vx_0) p_{\vX_0}(\vx_0)}{p_{\vX_t}(\vx_t)} \right)  \dx_0 \\
    = \int_{} \vx_0 p_{\vX_0}(\vx_0 |\vx_t) \nabla_{\vx_t}^{\top} \log\left(  \frac{p_{\vX_t}(\vx_t | \vx_0)}{p_{\vX_t}(\vx_t)} \right)  \dx_0 \\
    = \int_{} \vx_0 p_{\vX_0}(\vx_0 |\vx_t) \nabla_{\vx_t}^{\top} \log p_{\vX_t}(\vx_t | \vx_0)  \dx_0 - \int_{} \vx_0 p_{\vX_0}(\vx_0 |\vx_t) \nabla_{\vx_t}^{\top} \log p_{\vX_t}(\vx_t)  \dx_0 \\
    = \int_{} \vx_0 p_{\vX_0}(\vx_0 |\vx_t) \frac{\vx_0^{\top} - \vx_t^{\top}}{\sigma_t^2} \dx_0 - \int_{} \vx_0 p_{\vX_0}(\vx_0 |\vx_t) \nabla_{\vx_t}^{\top} \log p_{\vX_t}(\vx_t)    \dx_0 \\
    = \int_{} \vx_0 p_{\vX_0}(\vx_0 |\vx_t) \frac{\vx_0^{\top} - \vx_t^{\top}}{\sigma_t^2} \dx_0 - \int_{} \vx_0 p_{\vX_0}(\vx_0 |\vx_t)  \frac{\E^{\top}[\vx_0 | \vx_t] - \vx_t^{\top}}{\sigma_t^2}   \dx_0 \\
    = \frac{1}{\sigma_t^2}\left( \E[\vx_0\vx_0^{\top} | \vx_t] - \E[\vx_0 | \vx_t]\E[\vx_0|\vx_t]^{\top}\right).
\end{gather}

\end{proof}

\begin{corollary}
    Let:
    \begin{gather}
        \vX_t = \vX_0 + \sigma_t \vZ,
    \end{gather}
    for $\vX_0 \sim p_{\vX_0}, \ \vX_0 \in \R^n$ and $\vZ \sim \mathcal N(0, I)$.
    Then,
    \begin{gather}
        \nabla_{\vx_t}^2 \log p_t(\vx_t) = \frac{\E[||\vX_0||^2 \ | \ \vx_t] - \norm{\E[\vX_0|\vx_t]}}{\sigma_t^4} - \frac{n}{\sigma_t^2}.
        \label{eq:score_laplacian}
    \end{gather}
    \label{corr:score_laplacian}
\end{corollary}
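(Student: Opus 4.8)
The plan is to deduce the Laplacian formula \eqref{eq:score_laplacian} directly from the Hessian formula of Lemma~\ref{lemma:score_jacobian} by taking a trace. The first point to settle is that in this corollary the symbol $\nabla_{\vx_t}^2$ denotes the Laplacian (the sum of the diagonal entries of the Hessian), so that
\begin{gather}
    \nabla_{\vx_t}^2 \log p_t(\vx_t) = \Tr\big( \H(\log p_{\vX_t})(\vx_t) \big),
\end{gather}
where $\H(\log p_{\vX_t})(\vx_t)$ is precisely the matrix computed in \eqref{eq:score_jacobian}. This reading is consistent with the fact that the right-hand side of \eqref{eq:score_laplacian} is a scalar.

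Next I would apply $\Tr(\cdot)$ to both sides of \eqref{eq:score_jacobian}. By linearity of the trace and of conditional expectation,
\begin{gather}
    \Tr\big( \E[\vX_0 \vX_0^\top \mid \vx_t] \big) = \E\big[ \Tr(\vX_0\vX_0^\top) \mid \vx_t \big] = \E\big[ \|\vX_0\|^2 \mid \vx_t \big],
\end{gather}
using the elementary identity $\Tr(\vv\vv^\top) = \|\vv\|^2$ for a column vector $\vv$; in the same way $\Tr\big( \E[\vX_0\mid \vx_t]\, \E^\top[\vX_0\mid \vx_t] \big) = \|\E[\vX_0 \mid \vx_t]\|^2$; and $\Tr(I_n) = n$. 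Substituting these three identities into the trace of \eqref{eq:score_jacobian} yields
\begin{gather}
    \nabla_{\vx_t}^2 \log p_t(\vx_t) = \frac{\E[\|\vX_0\|^2 \mid \vx_t] - \|\E[\vX_0 \mid \vx_t]\|^2}{\sigma_t^4} - \frac{n}{\sigma_t^2},
\end{gather}
which is exactly \eqref{eq:score_laplacian} (recalling that the macro $\norm{\cdot}$ in the statement already denotes the squared Euclidean norm).

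There is essentially no obstacle in this argument: the only things worth double-checking are the scalar identity $\Tr(\vv\vv^\top) = \|\vv\|^2$ and the interchange of the trace with the conditional expectation, which is legitimate because both operations are linear in finite dimension, so no integrability issue arises beyond the finiteness of $\E[\|\vX_0\|^2 \mid \vx_t]$ that is implicit in the statement. As an alternative route that avoids quoting Lemma~\ref{lemma:score_jacobian}, one could differentiate Tweedie's formula \eqref{eq:tweedies} once more and compute the divergence of $\vx_t \mapsto \E[\vX_0 \mid \vx_t]$ coordinate by coordinate, which reproduces the same computation; reusing the lemma is the cleanest presentation.
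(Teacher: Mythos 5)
Your proof is correct and is exactly the argument the paper intends: the corollary is stated as an immediate consequence of Lemma~\ref{lemma:score_jacobian}, obtained by taking the trace of the Hessian identity, with $\Tr(\vv\vv^\top)=\|\vv\|^2$, linearity of trace and conditional expectation, and $\Tr(I_n)=n$, and your reading of $\nabla^2_{\vx_t}$ as the Laplacian (and of the paper's $\norm{\cdot}$ macro as the squared norm) is the right one.
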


\bibliographystyle{IEEEtranN}
\bibliography{references}

\begin{thebibliography}{164}
\providecommand{\natexlab}[1]{#1}
\providecommand{\url}[1]{#1}
\csname url@samestyle\endcsname
\providecommand{\newblock}{\relax}
\providecommand{\bibinfo}[2]{#2}
\providecommand{\BIBentrySTDinterwordspacing}{\spaceskip=0pt\relax}
\providecommand{\BIBentryALTinterwordstretchfactor}{4}
\providecommand{\BIBentryALTinterwordspacing}{\spaceskip=\fontdimen2\font plus
\BIBentryALTinterwordstretchfactor\fontdimen3\font minus \fontdimen4\font\relax}
\providecommand{\BIBforeignlanguage}[2]{{%
\expandafter\ifx\csname l@#1\endcsname\relax
\typeout{** WARNING: IEEEtranN.bst: No hyphenation pattern has been}%
\typeout{** loaded for the language `#1'. Using the pattern for}%
\typeout{** the default language instead.}%
\else
\language=\csname l@#1\endcsname
\fi
#2}}
\providecommand{\BIBdecl}{\relax}
\BIBdecl

\bibitem[Jalal et~al.(2021)Jalal, Arvinte, Daras, Price, Dimakis, and Tamir]{mri_paper}
A.~Jalal, M.~Arvinte, G.~Daras, E.~Price, A.~G. Dimakis, and J.~Tamir, ``Robust compressed sensing mri with deep generative priors,'' \emph{Advances in Neural Information Processing Systems}, vol.~34, pp. 14\,938--14\,954, 2021.

\bibitem[Song et~al.(2020{\natexlab{a}})Song, Sohl-Dickstein, Kingma, Kumar, Ermon, and Poole]{ncsnv3}
Y.~Song, J.~Sohl-Dickstein, D.~P. Kingma, A.~Kumar, S.~Ermon, and B.~Poole, ``Score-based generative modeling through stochastic differential equations,'' \emph{arXiv preprint arXiv:2011.13456}, 2020.

\bibitem[Choi et~al.(2021)Choi, Kim, Jeong, Gwon, and Yoon]{ilvr}
J.~Choi, S.~Kim, Y.~Jeong, Y.~Gwon, and S.~Yoon, ``Ilvr: Conditioning method for denoising diffusion probabilistic models,'' in \emph{Proceedings of the IEEE/CVF International Conference on Computer Vision (ICCV)}, 2021, pp. 14\,367--14\,376.

\bibitem[Chung et~al.(2023{\natexlab{a}})Chung, Kim, Mccann, Klasky, and Ye]{dps}
\BIBentryALTinterwordspacing
H.~Chung, J.~Kim, M.~T. Mccann, M.~L. Klasky, and J.~C. Ye, ``Diffusion posterior sampling for general noisy inverse problems,'' in \emph{The Eleventh International Conference on Learning Representations}, 2023. [Online]. Available: \url{https://openreview.net/forum?id=OnD9zGAGT0k}
\BIBentrySTDinterwordspacing

\bibitem[Song et~al.(2022)Song, Vahdat, Mardani, and Kautz]{song2022pseudoinverse}
J.~Song, A.~Vahdat, M.~Mardani, and J.~Kautz, ``Pseudoinverse-guided diffusion models for inverse problems,'' in \emph{International Conference on Learning Representations}, 2022.

\bibitem[Rozet et~al.(2024)Rozet, Andry, Lanusse, and Louppe]{rozet2024learning}
F.~Rozet, G.~Andry, F.~Lanusse, and G.~Louppe, ``Learning diffusion priors from observations by expectation maximization,'' \emph{arXiv preprint arXiv:2405.13712}, 2024.

\bibitem[Chung et~al.(2023{\natexlab{b}})Chung, Kim, Kim, and Ye]{blinddps}
H.~Chung, J.~Kim, S.~Kim, and J.~C. Ye, ``Parallel diffusion models of operator and image for blind inverse problems,'' in \emph{Proceedings of the IEEE/CVF Conference on Computer Vision and Pattern Recognition}, 2023, pp. 6059--6069.

\bibitem[Kawar et~al.(2021{\natexlab{a}})Kawar, Vaksman, and Elad]{kawar2021snips}
B.~Kawar, G.~Vaksman, and M.~Elad, ``Snips: Solving noisy inverse problems stochastically,'' \emph{Advances in Neural Information Processing Systems}, vol.~34, pp. 21\,757--21\,769, 2021.

\bibitem[Kawar et~al.(2022)Kawar, Elad, Ermon, and Song]{ddrm}
B.~Kawar, M.~Elad, S.~Ermon, and J.~Song, ``Denoising diffusion restoration models,'' in \emph{Advances in Neural Information Processing Systems}, 2022.

\bibitem[Murata et~al.(2023)Murata, Saito, Lai, Takida, Uesaka, Mitsufuji, and Ermon]{murata2023gibbsddrm}
N.~Murata, K.~Saito, C.-H. Lai, Y.~Takida, T.~Uesaka, Y.~Mitsufuji, and S.~Ermon, ``Gibbsddrm: A partially collapsed gibbs sampler for solving blind inverse problems with denoising diffusion restoration,'' in \emph{International Conference on Machine Learning}.\hskip 1em plus 0.5em minus 0.4em\relax PMLR, 2023, pp. 25\,501--25\,522.

\bibitem[Wang et~al.(2022)Wang, Yu, and Zhang]{ddnm}
Y.~Wang, J.~Yu, and J.~Zhang, ``Zero-shot image restoration using denoising diffusion null-space model,'' \emph{arXiv preprint arXiv:2212.00490}, 2022.

\bibitem[Chung et~al.(2023{\natexlab{c}})Chung, Lee, and Ye]{dds}
H.~Chung, S.~Lee, and J.~C. Ye, ``Decomposed diffusion sampler for accelerating large-scale inverse problems,'' \emph{arXiv preprint arXiv:2303.05754}, 2023.

\bibitem[Zhu et~al.(2023)Zhu, Zhang, Liang, Cao, Wen, Timofte, and Van~Gool]{diffpir}
Y.~Zhu, K.~Zhang, J.~Liang, J.~Cao, B.~Wen, R.~Timofte, and L.~Van~Gool, ``Denoising diffusion models for plug-and-play image restoration,'' in \emph{Proceedings of the IEEE/CVF Conference on Computer Vision and Pattern Recognition}, 2023, pp. 1219--1229.

\bibitem[Rout et~al.(2024{\natexlab{a}})Rout, Raoof, Daras, Caramanis, Dimakis, and Shakkottai]{rout2024solving}
L.~Rout, N.~Raoof, G.~Daras, C.~Caramanis, A.~Dimakis, and S.~Shakkottai, ``Solving linear inverse problems provably via posterior sampling with latent diffusion models,'' \emph{Advances in Neural Information Processing Systems}, vol.~36, 2024.

\bibitem[Rout et~al.(2024{\natexlab{b}})Rout, Chen, Kumar, Caramanis, Shakkottai, and Chu]{rout2024beyond}
L.~Rout, Y.~Chen, A.~Kumar, C.~Caramanis, S.~Shakkottai, and W.-S. Chu, ``Beyond first-order tweedie: Solving inverse problems using latent diffusion,'' in \emph{Proceedings of the IEEE/CVF Conference on Computer Vision and Pattern Recognition}, 2024, pp. 9472--9481.

\bibitem[Mardani et~al.(2024)Mardani, Song, Kautz, and Vahdat]{mardani2024variational}
M.~Mardani, J.~Song, J.~Kautz, and A.~Vahdat, ``A variational perspective on solving inverse problems with diffusion models,'' in \emph{The Twelfth International Conference on Learning Representations}, 2024.

\bibitem[Alkan et~al.(2023)Alkan, Oscanoa, Abraham, Gao, Nurdinova, Setsompop, Pauly, Mardani, and Vasanawala]{alkan2023variational}
C.~Alkan, J.~Oscanoa, D.~Abraham, M.~Gao, A.~Nurdinova, K.~Setsompop, J.~M. Pauly, M.~Mardani, and S.~Vasanawala, ``Variational diffusion models for blind mri inverse problems,'' in \emph{NeurIPS 2023 Workshop on Deep Learning and Inverse Problems}, 2023.

\bibitem[Feng et~al.(2023)Feng, Smith, Rubinstein, Chang, Bouman, and Freeman]{feng2023score}
B.~T. Feng, J.~Smith, M.~Rubinstein, H.~Chang, K.~L. Bouman, and W.~T. Freeman, ``Score-based diffusion models as principled priors for inverse imaging,'' \emph{arXiv preprint arXiv:2304.11751}, 2023.

\bibitem[Feng and Bouman(2023)]{feng2023efficient}
B.~T. Feng and K.~L. Bouman, ``Efficient bayesian computational imaging with a surrogate score-based prior,'' \emph{arXiv preprint arXiv:2309.01949}, 2023.

\bibitem[Wang et~al.(2024{\natexlab{a}})Wang, Zhang, Li, Wan, Chen, and Sun]{wang2024dmplug}
H.~Wang, X.~Zhang, T.~Li, Y.~Wan, T.~Chen, and J.~Sun, ``Dmplug: A plug-in method for solving inverse problems with diffusion models,'' \emph{arXiv preprint arXiv:2405.16749}, 2024.

\bibitem[Chihaoui et~al.(2024)Chihaoui, Lemkhenter, and Favaro]{chihaoui2024zeroshot}
\BIBentryALTinterwordspacing
H.~Chihaoui, A.~Lemkhenter, and P.~Favaro, ``Zero-shot image restoration via diffusion inversion,'' 2024. [Online]. Available: \url{https://openreview.net/forum?id=ZnmofqLWMQ}
\BIBentrySTDinterwordspacing

\bibitem[Xu et~al.(2024)Xu, Zhu, Li, He, Wang, Sun, Li, Qin, Wang, Liu, and Zhang]{xu2024consistency}
T.~Xu, Z.~Zhu, J.~Li, D.~He, Y.~Wang, M.~Sun, L.~Li, H.~Qin, Y.~Wang, J.~Liu, and Y.-Q. Zhang, ``Consistency model is an effective posterior sample approximation for diffusion inverse solvers,'' 2024.

\bibitem[Daras et~al.(2022)Daras, Dagan, Dimakis, and Daskalakis]{score_ilo}
\BIBentryALTinterwordspacing
G.~Daras, Y.~Dagan, A.~Dimakis, and C.~Daskalakis, ``Score-guided intermediate level optimization: Fast {L}angevin mixing for inverse problems,'' in \emph{Proceedings of the 39th International Conference on Machine Learning}, ser. Proceedings of Machine Learning Research, K.~Chaudhuri, S.~Jegelka, L.~Song, C.~Szepesvari, G.~Niu, and S.~Sabato, Eds., vol. 162.\hskip 1em plus 0.5em minus 0.4em\relax PMLR, 17--23 Jul 2022, pp. 4722--4753. [Online]. Available: \url{https://proceedings.mlr.press/v162/daras22a.html}
\BIBentrySTDinterwordspacing

\bibitem[Wu et~al.(2024)Wu, Sun, Chen, Zhang, Yue, and Bouman]{wu2024principled}
Z.~Wu, Y.~Sun, Y.~Chen, B.~Zhang, Y.~Yue, and K.~L. Bouman, ``Principled probabilistic imaging using diffusion models as plug-and-play priors,'' 2024.

\bibitem[Dou and Song(2023)]{dou2023diffusion}
Z.~Dou and Y.~Song, ``Diffusion posterior sampling for linear inverse problem solving: A filtering perspective,'' in \emph{The Twelfth International Conference on Learning Representations}, 2023.

\bibitem[Sun et~al.(2024)Sun, Wu, Chen, Feng, and Bouman]{sun2024provable}
Y.~Sun, Z.~Wu, Y.~Chen, B.~T. Feng, and K.~L. Bouman, ``Provable probabilistic imaging using score-based generative priors,'' \emph{IEEE Transactions on Computational Imaging}, 2024.

\bibitem[Trippe et~al.(2023)Trippe, Yim, Tischer, Baker, Broderick, Barzilay, and Jaakkola]{trippe2023diffusion}
\BIBentryALTinterwordspacing
B.~L. Trippe, J.~Yim, D.~Tischer, D.~Baker, T.~Broderick, R.~Barzilay, and T.~S. Jaakkola, ``Diffusion probabilistic modeling of protein backbones in 3d for the motif-scaffolding problem,'' in \emph{The Eleventh International Conference on Learning Representations}, 2023. [Online]. Available: \url{https://openreview.net/forum?id=6TxBxqNME1Y}
\BIBentrySTDinterwordspacing

\bibitem[Cardoso et~al.(2023)Cardoso, Le~Corff, Moulines, et~al.]{cardoso2023monte}
G.~Cardoso, S.~Le~Corff, E.~Moulines \emph{et~al.}, ``Monte carlo guided denoising diffusion models for bayesian linear inverse problems.'' in \emph{The Twelfth International Conference on Learning Representations}, 2023.

\bibitem[Wu et~al.(2023)Wu, Trippe, Naesseth, Cunningham, and Blei]{wu2023practical}
\BIBentryALTinterwordspacing
L.~Wu, B.~L. Trippe, C.~A. Naesseth, J.~P. Cunningham, and D.~Blei, ``Practical and asymptotically exact conditional sampling in diffusion models,'' in \emph{Thirty-seventh Conference on Neural Information Processing Systems}, 2023. [Online]. Available: \url{https://openreview.net/forum?id=eWKqr1zcRv}
\BIBentrySTDinterwordspacing

\bibitem[Kadkhodaie and Simoncelli(2020)]{kadkhodaie2020solving}
Z.~Kadkhodaie and E.~P. Simoncelli, ``Solving linear inverse problems using the prior implicit in a denoiser,'' \emph{arXiv preprint arXiv:2007.13640}, 2020.

\bibitem[Chung et~al.(2022)Chung, Sim, Ryu, and Ye]{mcg}
H.~Chung, B.~Sim, D.~Ryu, and J.~C. Ye, ``Improving diffusion models for inverse problems using manifold constraints,'' \emph{Advances in Neural Information Processing Systems}, vol.~35, pp. 25\,683--25\,696, 2022.

\bibitem[Song et~al.(2024)Song, Kwon, Zhang, Hu, Qu, and Shen]{song2024solving}
\BIBentryALTinterwordspacing
B.~Song, S.~M. Kwon, Z.~Zhang, X.~Hu, Q.~Qu, and L.~Shen, ``Solving inverse problems with latent diffusion models via hard data consistency,'' in \emph{The Twelfth International Conference on Learning Representations}, 2024. [Online]. Available: \url{https://openreview.net/forum?id=j8hdRqOUhN}
\BIBentrySTDinterwordspacing

\bibitem[He et~al.(2023{\natexlab{a}})He, Murata, Lai, Takida, Uesaka, Kim, Liao, Mitsufuji, Kolter, Salakhutdinov, et~al.]{he2023manifold}
Y.~He, N.~Murata, C.-H. Lai, Y.~Takida, T.~Uesaka, D.~Kim, W.-H. Liao, Y.~Mitsufuji, J.~Z. Kolter, R.~Salakhutdinov \emph{et~al.}, ``Manifold preserving guided diffusion,'' \emph{arXiv preprint arXiv:2311.16424}, 2023.

\bibitem[Chung et~al.(2014)Chung, Ye, Milanfar, and Delbracio]{chung2024prompt}
H.~Chung, J.~C. Ye, P.~Milanfar, and M.~Delbracio, ``Prompt-tuning latent diffusion models for inverse problems,'' in \emph{International Conference on Machine Learning}.\hskip 1em plus 0.5em minus 0.4em\relax PMLR, 2014.

\bibitem[Kim et~al.(2023{\natexlab{a}})Kim, Park, Chung, and Ye]{kim2023regularization}
J.~Kim, G.~Y. Park, H.~Chung, and J.~C. Ye, ``Regularization by texts for latent diffusion inverse solvers,'' \emph{arXiv preprint arXiv:2311.15658}, 2023.

\bibitem[Kim et~al.(2024)Kim, Park, and Ye]{kim2024dreamsampler}
J.~Kim, G.~Y. Park, and J.~C. Ye, ``Dreamsampler: Unifying diffusion sampling and score distillation for image manipulation,'' \emph{arXiv preprint arXiv:2403.11415}, 2024.

\bibitem[Lailly and Bednar(1983)]{lailly1983seismic}
P.~Lailly and J.~Bednar, ``The seismic inverse problem as a sequence of before stack migrations,'' in \emph{Conference on inverse scattering: theory and application}, vol. 1983.\hskip 1em plus 0.5em minus 0.4em\relax Philadelphia, Pa, 1983, pp. 206--220.

\bibitem[Virieux and Operto(2009)]{virieux2009overview}
J.~Virieux and S.~Operto, ``An overview of full-waveform inversion in exploration geophysics,'' \emph{Geophysics}, vol.~74, no.~6, pp. WCC1--WCC26, 2009.

\bibitem[Huang et~al.(2005)Huang, Xiang, Du, and Cao]{huang2005inverse}
S.~Huang, J.~Xiang, H.~Du, and X.~Cao, ``Inverse problems in atmospheric science and their application,'' in \emph{Journal of Physics: Conference Series}, vol.~12, no.~1.\hskip 1em plus 0.5em minus 0.4em\relax IOP Publishing, 2005, p.~45.

\bibitem[Wunsch(1996)]{wunsch1996ocean}
C.~Wunsch, \emph{The ocean circulation inverse problem}.\hskip 1em plus 0.5em minus 0.4em\relax Cambridge University Press, 1996.

\bibitem[Lemercier et~al.(2024)Lemercier, Richter, Welker, Moliner, V{\"a}lim{\"a}ki, and Gerkmann]{lemercier2024diffusion}
J.-M. Lemercier, J.~Richter, S.~Welker, E.~Moliner, V.~V{\"a}lim{\"a}ki, and T.~Gerkmann, ``Diffusion models for audio restoration,'' \emph{arXiv preprint arXiv:2402.09821}, 2024.

\bibitem[Saito et~al.(2023)Saito, Murata, Uesaka, Lai, Takida, Fukui, and Mitsufuji]{saito2023unsupervised}
K.~Saito, N.~Murata, T.~Uesaka, C.-H. Lai, Y.~Takida, T.~Fukui, and Y.~Mitsufuji, ``Unsupervised vocal dereverberation with diffusion-based generative models,'' in \emph{ICASSP 2023-2023 IEEE International Conference on Acoustics, Speech and Signal Processing (ICASSP)}.\hskip 1em plus 0.5em minus 0.4em\relax IEEE, 2023, pp. 1--5.

\bibitem[Moliner et~al.(2023{\natexlab{a}})Moliner, Elvander, and V{\"a}lim{\"a}ki]{moliner2023blind}
E.~Moliner, F.~Elvander, and V.~V{\"a}lim{\"a}ki, ``Blind audio bandwidth extension: A diffusion-based zero-shot approach,'' \emph{arXiv preprint arXiv:2306.01433}, 2023.

\bibitem[Moliner et~al.(2023{\natexlab{b}})Moliner, Lehtinen, and V{\"a}lim{\"a}ki]{moliner2023solving}
E.~Moliner, J.~Lehtinen, and V.~V{\"a}lim{\"a}ki, ``Solving audio inverse problems with a diffusion model,'' in \emph{ICASSP 2023-2023 IEEE International Conference on Acoustics, Speech and Signal Processing (ICASSP)}.\hskip 1em plus 0.5em minus 0.4em\relax IEEE, 2023, pp. 1--5.

\bibitem[Moliner and V{\"a}lim{\"a}ki(2023)]{moliner2023diffusion}
E.~Moliner and V.~V{\"a}lim{\"a}ki, ``Diffusion-based audio inpainting,'' \emph{arXiv preprint arXiv:2305.15266}, 2023.

\bibitem[Hernandez-Olivan et~al.(2024)Hernandez-Olivan, Saito, Murata, Lai, Mart{\'\i}nez-Ramirez, Liao, and Mitsufuji]{hernandez2024vrdmg}
C.~Hernandez-Olivan, K.~Saito, N.~Murata, C.-H. Lai, M.~A. Mart{\'\i}nez-Ramirez, W.-H. Liao, and Y.~Mitsufuji, ``Vrdmg: Vocal restoration via diffusion posterior sampling with multiple guidance,'' in \emph{ICASSP 2024-2024 IEEE International Conference on Acoustics, Speech and Signal Processing (ICASSP)}.\hskip 1em plus 0.5em minus 0.4em\relax IEEE, 2024, pp. 596--600.

\bibitem[Song et~al.(2021{\natexlab{a}})Song, Shen, Xing, and Ermon]{song2021solving}
Y.~Song, L.~Shen, L.~Xing, and S.~Ermon, ``Solving inverse problems in medical imaging with score-based generative models,'' \emph{arXiv preprint arXiv:2111.08005}, 2021.

\bibitem[Chung et~al.(2023{\natexlab{d}})Chung, Ryu, McCann, Klasky, and Ye]{chung2023solving}
H.~Chung, D.~Ryu, M.~T. McCann, M.~L. Klasky, and J.~C. Ye, ``Solving 3d inverse problems using pre-trained 2d diffusion models,'' in \emph{Proceedings of the IEEE/CVF Conference on Computer Vision and Pattern Recognition}, 2023, pp. 22\,542--22\,551.

\bibitem[Aali et~al.(2024)Aali, Daras, Levac, Kumar, Dimakis, and Tamir]{aali2024ambient}
A.~Aali, G.~Daras, B.~Levac, S.~Kumar, A.~G. Dimakis, and J.~I. Tamir, ``Ambient diffusion posterior sampling: Solving inverse problems with diffusion models trained on corrupted data,'' \emph{arXiv preprint arXiv:2403.08728}, 2024.

\bibitem[Chung and Ye(2022)]{chung2022score}
H.~Chung and J.~C. Ye, ``Score-based diffusion models for accelerated mri,'' \emph{Medical image analysis}, vol.~80, p. 102479, 2022.

\bibitem[Fan et~al.(2019)Fan, Zhang, Fan, and Zhang]{fan2019brief}
L.~Fan, F.~Zhang, H.~Fan, and C.~Zhang, ``Brief review of image denoising techniques,'' \emph{Visual Computing for Industry, Biomedicine, and Art}, vol.~2, no.~1, p.~7, 2019.

\bibitem[Quan et~al.(2024)Quan, Chen, Liu, Yan, and Wonka]{quan2024deep}
W.~Quan, J.~Chen, Y.~Liu, D.-M. Yan, and P.~Wonka, ``Deep learning-based image and video inpainting: A survey,'' \emph{International Journal of Computer Vision}, vol. 132, no.~7, pp. 2367--2400, 2024.

\bibitem[Yu et~al.(2023)Yu, Feng, Feng, Liu, Jin, Zeng, and Chen]{yu2023inpaint}
T.~Yu, R.~Feng, R.~Feng, J.~Liu, X.~Jin, W.~Zeng, and Z.~Chen, ``Inpaint anything: Segment anything meets image inpainting,'' \emph{arXiv preprint arXiv:2304.06790}, 2023.

\bibitem[Ouyang-Zhang et~al.(2023)Ouyang-Zhang, Diaz, Klivans, and Kr{\"a}henb{\"u}hl]{ouyangzhang2023predicting}
J.~Ouyang-Zhang, D.~J. Diaz, A.~Klivans, and P.~Kr{\"a}henb{\"u}hl, ``Predicting a protein’s stability under a million mutations,'' \emph{NeurIPS}, 2023.

\bibitem[Diaz et~al.(2024)Diaz, Gong, Ouyang-Zhang, Loy, Wells, Yang, Ellington, Dimakis, and Klivans]{diaz2024stability}
D.~J. Diaz, C.~Gong, J.~Ouyang-Zhang, J.~M. Loy, J.~Wells, D.~Yang, A.~D. Ellington, A.~G. Dimakis, and A.~R. Klivans, ``Stability oracle: a structure-based graph-transformer framework for identifying stabilizing mutations,'' \emph{Nature Communications}, vol.~15, no.~1, p. 6170, 2024.

\bibitem[Yang et~al.(2019)Yang, Wu, and Arnold]{yang2019machine}
K.~K. Yang, Z.~Wu, and F.~H. Arnold, ``Machine-learning-guided directed evolution for protein engineering,'' \emph{Nature methods}, vol.~16, no.~8, pp. 687--694, 2019.

\bibitem[Xu et~al.(2020)Xu, Verma, Sheridan, Liaw, Ma, Marshall, McIntosh, Sherer, Svetnik, and Johnston]{xu2020deep}
Y.~Xu, D.~Verma, R.~P. Sheridan, A.~Liaw, J.~Ma, N.~M. Marshall, J.~McIntosh, E.~C. Sherer, V.~Svetnik, and J.~M. Johnston, ``Deep dive into machine learning models for protein engineering,'' \emph{Journal of chemical information and modeling}, vol.~60, no.~6, pp. 2773--2790, 2020.

\bibitem[Aali et~al.(2023)Aali, Arvinte, Kumar, and Tamir]{aali2023solving}
A.~Aali, M.~Arvinte, S.~Kumar, and J.~I. Tamir, ``Solving inverse problems with score-based generative priors learned from noisy data,'' \emph{arXiv preprint arXiv:2305.01166}, 2023.

\bibitem[Zbontar et~al.(2018)Zbontar, Knoll, Sriram, Murrell, Huang, Muckley, Defazio, Stern, Johnson, Bruno, et~al.]{zbontar2018fastmri}
J.~Zbontar, F.~Knoll, A.~Sriram, T.~Murrell, Z.~Huang, M.~J. Muckley, A.~Defazio, R.~Stern, P.~Johnson, M.~Bruno \emph{et~al.}, ``fastmri: An open dataset and benchmarks for accelerated mri,'' \emph{arXiv preprint arXiv:1811.08839}, 2018.

\bibitem[Desai et~al.(2022)Desai, Schmidt, Rubin, Sandino, Black, Mazzoli, Stevens, Boutin, Ré, Gold, Hargreaves, and Chaudhari]{desai2022skmtea}
A.~D. Desai, A.~M. Schmidt, E.~B. Rubin, C.~M. Sandino, M.~S. Black, V.~Mazzoli, K.~J. Stevens, R.~Boutin, C.~Ré, G.~E. Gold, B.~A. Hargreaves, and A.~S. Chaudhari, ``Skm-tea: A dataset for accelerated mri reconstruction with dense image labels for quantitative clinical evaluation,'' 2022.

\bibitem[Zhang et~al.()Zhang, Pauly, Vasanawala, and Lustig]{mridata_abdomens}
\BIBentryALTinterwordspacing
T.~Zhang, J.~Pauly, S.~Vasanawala, and M.~Lustig, ``{MRI Data: Undersampled Abdomens},'' \emph{Undersampled Abdomens | MRI Data}. [Online]. Available: \url{http://old.mridata.org/undersampled/abdomens}
\BIBentrySTDinterwordspacing

\bibitem[Tariq et~al.()Tariq, Lai, Lustig, Alley, Zhang, Gold, and S]{mridata_knees}
\BIBentryALTinterwordspacing
U.~Tariq, P.~Lai, M.~Lustig, M.~Alley, M.~Zhang, G.~Gold, and V.~S. S, ``{MRI Data: Undersampled Knees},'' \emph{Undersampled Knees | MRI Data}. [Online]. Available: \url{http://old.mridata.org/undersampled/knees}
\BIBentrySTDinterwordspacing

\bibitem[Lustig et~al.(2008)Lustig, Donoho, Santos, and Pauly]{lustig2008compressed}
M.~Lustig, D.~L. Donoho, J.~M. Santos, and J.~M. Pauly, ``Compressed sensing mri,'' \emph{IEEE signal processing magazine}, vol.~25, no.~2, pp. 72--82, 2008.

\bibitem[Pan et~al.(2009)Pan, Sidky, and Vannier]{pan2009commercial}
X.~Pan, E.~Y. Sidky, and M.~Vannier, ``Why do commercial ct scanners still employ traditional, filtered back-projection for image reconstruction?'' \emph{Inverse problems}, vol.~25, no.~12, p. 123009, 2009.

\bibitem[Genzel et~al.(2022)Genzel, G{\"u}hring, Macdonald, and M{\"a}rz]{genzel2022near}
M.~Genzel, I.~G{\"u}hring, J.~Macdonald, and M.~M{\"a}rz, ``Near-exact recovery for tomographic inverse problems via deep learning,'' in \emph{International Conference on Machine Learning}.\hskip 1em plus 0.5em minus 0.4em\relax PMLR, 2022, pp. 7368--7381.

\bibitem[Beylkin(1984)]{beylkin1984inversion}
G.~Beylkin, ``The inversion problem and applications of the generalized radon transform,'' \emph{Communications on pure and applied mathematics}, vol.~37, no.~5, pp. 579--599, 1984.

\bibitem[Kak and Slaney(2001)]{kak2001principles}
A.~C. Kak and M.~Slaney, \emph{Principles of computerized tomographic imaging}.\hskip 1em plus 0.5em minus 0.4em\relax SIAM, 2001.

\bibitem[Dietz et~al.(2002)Dietz, Liljeryd, Kjorling, and Kunz]{dietz2002spectral}
M.~Dietz, L.~Liljeryd, K.~Kjorling, and O.~Kunz, ``Spectral band replication, a novel approach in audio coding,'' in \emph{Audio Engineering Society Convention 112}.\hskip 1em plus 0.5em minus 0.4em\relax Audio Engineering Society, 2002.

\bibitem[Dubochet et~al.(1988)Dubochet, Adrian, Chang, Homo, Lepault, McDowall, and Schultz]{dubochet1988cryo}
J.~Dubochet, M.~Adrian, J.-J. Chang, J.-C. Homo, J.~Lepault, A.~W. McDowall, and P.~Schultz, ``Cryo-electron microscopy of vitrified specimens,'' \emph{Quarterly reviews of biophysics}, vol.~21, no.~2, pp. 129--228, 1988.

\bibitem[Park et~al.(2003)Park, Park, and Kang]{park2003super}
S.~C. Park, M.~K. Park, and M.~G. Kang, ``Super-resolution image reconstruction: a technical overview,'' \emph{IEEE signal processing magazine}, vol.~20, no.~3, pp. 21--36, 2003.

\bibitem[Saharia et~al.(2021)Saharia, Ho, Chan, Salimans, Fleet, and Norouzi]{saharia2021image}
C.~Saharia, J.~Ho, W.~Chan, T.~Salimans, D.~J. Fleet, and M.~Norouzi, ``Image super-resolution via iterative refinement,'' \emph{arXiv preprint arXiv:2104.07636}, 2021.

\bibitem[Nakatani et~al.(2010)Nakatani, Yoshioka, Kinoshita, Miyoshi, and Juang]{nakatani2010speech}
T.~Nakatani, T.~Yoshioka, K.~Kinoshita, M.~Miyoshi, and B.-H. Juang, ``Speech dereverberation based on variance-normalized delayed linear prediction,'' \emph{IEEE Transactions on Audio, Speech, and Language Processing}, vol.~18, no.~7, pp. 1717--1731, 2010.

\bibitem[Fienup(1982)]{fienup1982phase}
J.~R. Fienup, ``Phase retrieval algorithms: a comparison,'' \emph{Applied optics}, vol.~21, no.~15, pp. 2758--2769, 1982.

\bibitem[Akiyama et~al.(2019)Akiyama, Alberdi, Alef, Asada, Azulay, Baczko, Ball, Balokovi{\'c}, Barrett, Bintley, et~al.]{akiyama2019first}
K.~Akiyama, A.~Alberdi, W.~Alef, K.~Asada, R.~Azulay, A.-K. Baczko, D.~Ball, M.~Balokovi{\'c}, J.~Barrett, D.~Bintley \emph{et~al.}, ``First m87 event horizon telescope results. iv. imaging the central supermassive black hole,'' \emph{The Astrophysical Journal Letters}, vol. 875, no.~1, p.~L4, 2019.

\bibitem[Tarantola(2005)]{tarantola2005inverse}
A.~Tarantola, \emph{Inverse problem theory and methods for model parameter estimation}.\hskip 1em plus 0.5em minus 0.4em\relax SIAM, 2005.

\bibitem[Scarlett et~al.(2022)Scarlett, Heckel, Rodrigues, Hand, and Eldar]{Scarlett_2022}
\BIBentryALTinterwordspacing
J.~Scarlett, R.~Heckel, M.~R.~D. Rodrigues, P.~Hand, and Y.~C. Eldar, ``Theoretical perspectives on deep learning methods in inverse problems,'' \emph{IEEE Journal on Selected Areas in Information Theory}, vol.~3, no.~3, p. 433–453, Sep. 2022. [Online]. Available: \url{http://dx.doi.org/10.1109/JSAIT.2023.3241123}
\BIBentrySTDinterwordspacing

\bibitem[Bassett and Deride(2019)]{bassett2019maximum}
R.~Bassett and J.~Deride, ``Maximum a posteriori estimators as a limit of bayes estimators,'' \emph{Mathematical Programming}, vol. 174, pp. 129--144, 2019.

\bibitem[Pereyra(2019)]{pereyra2019revisiting}
M.~Pereyra, ``Revisiting maximum-a-posteriori estimation in log-concave models,'' \emph{SIAM Journal on Imaging Sciences}, vol.~12, no.~1, pp. 650--670, 2019.

\bibitem[Young et~al.(2005)Young, Smith, and Smith]{young2005essentials}
G.~A. Young, R.~L. Smith, and R.~L. Smith, \emph{Essentials of statistical inference}.\hskip 1em plus 0.5em minus 0.4em\relax Cambridge University Press, 2005, vol.~16.

\bibitem[Murphy(2012)]{murphy2012machine}
K.~P. Murphy, \emph{Machine learning: a probabilistic perspective}.\hskip 1em plus 0.5em minus 0.4em\relax MIT press, 2012.

\bibitem[Blau and Michaeli(2018)]{blau2018perception}
Y.~Blau and T.~Michaeli, ``The perception-distortion tradeoff,'' in \emph{2018 IEEE/CVF Conference on Computer Vision and Pattern Recognition}.\hskip 1em plus 0.5em minus 0.4em\relax IEEE, 2018.

\bibitem[Ribes and Schmitt(2008)]{ribes2008linear}
A.~Ribes and F.~Schmitt, ``Linear inverse problems in imaging,'' \emph{IEEE Signal Processing Magazine}, vol.~25, no.~4, pp. 84--99, 2008.

\bibitem[Barrett and Myers(2013)]{barrett2013foundations}
H.~H. Barrett and K.~J. Myers, \emph{Foundations of image science}.\hskip 1em plus 0.5em minus 0.4em\relax John Wiley \& Sons, 2013.

\bibitem[Daubechies et~al.(2004)Daubechies, Defrise, and De~Mol]{daubechies2004iterative}
I.~Daubechies, M.~Defrise, and C.~De~Mol, ``An iterative thresholding algorithm for linear inverse problems with a sparsity constraint,'' \emph{Communications on Pure and Applied Mathematics: A Journal Issued by the Courant Institute of Mathematical Sciences}, vol.~57, no.~11, pp. 1413--1457, 2004.

\bibitem[Cand{\`e}s et~al.(2006)Cand{\`e}s, Romberg, and Tao]{candes2006robust}
E.~J. Cand{\`e}s, J.~Romberg, and T.~Tao, ``Robust uncertainty principles: Exact signal reconstruction from highly incomplete frequency information,'' \emph{IEEE Transactions on information theory}, vol.~52, no.~2, pp. 489--509, 2006.

\bibitem[Donoho(2006)]{donoho2006compressed}
D.~L. Donoho, ``Compressed sensing,'' \emph{IEEE Transactions on information theory}, vol.~52, no.~4, pp. 1289--1306, 2006.

\bibitem[Figueiredo and Nowak(2003)]{figueiredo2003algorithm}
M.~A. Figueiredo and R.~D. Nowak, ``An em algorithm for wavelet-based image restoration,'' \emph{IEEE Transactions on Image Processing}, vol.~12, no.~8, pp. 906--916, 2003.

\bibitem[Hale et~al.(2007)Hale, Yin, and Zhang]{hale2007fixed}
E.~T. Hale, W.~Yin, and Y.~Zhang, ``A fixed-point continuation method for l1-regularized minimization with applications to compressed sensing,'' \emph{CAAM TR07-07, Rice University}, vol.~43, no.~44, p.~2, 2007.

\bibitem[Shlezinger et~al.(2023)Shlezinger, Whang, Eldar, and Dimakis]{shlezinger2023model}
N.~Shlezinger, J.~Whang, Y.~C. Eldar, and A.~G. Dimakis, ``Model-based deep learning,'' \emph{Proceedings of the IEEE}, vol. 111, no.~5, pp. 465--499, 2023.

\bibitem[Rudin et~al.(1992)Rudin, Osher, and Fatemi]{rudin1992nonlinear}
L.~I. Rudin, S.~Osher, and E.~Fatemi, ``Nonlinear total variation based noise removal algorithms,'' \emph{Physica D: Nonlinear Phenomena}, vol.~60, no. 1-4, pp. 259--268, 1992.

\bibitem[Beck and Teboulle(2009)]{beck2009fast}
A.~Beck and M.~Teboulle, ``Fast gradient-based algorithms for constrained total variation image denoising and deblurring problems,'' \emph{IEEE transactions on image processing}, vol.~18, no.~11, pp. 2419--2434, 2009.

\bibitem[Ongie et~al.(2020)Ongie, Jalal, Metzler, Baraniuk, Dimakis, and Willett]{ongie2020deep}
G.~Ongie, A.~Jalal, C.~A. Metzler, R.~G. Baraniuk, A.~G. Dimakis, and R.~Willett, ``Deep learning techniques for inverse problems in imaging,'' \emph{IEEE Journal on Selected Areas in Information Theory}, vol.~1, no.~1, pp. 39--56, 2020.

\bibitem[Dong et~al.(2015)Dong, Loy, He, and Tang]{dong2015image}
C.~Dong, C.~C. Loy, K.~He, and X.~Tang, ``Image super-resolution using deep convolutional networks,'' \emph{IEEE transactions on pattern analysis and machine intelligence}, vol.~38, no.~2, pp. 295--307, 2015.

\bibitem[Lim et~al.(2017)Lim, Son, Kim, Nah, and Mu~Lee]{lim2017enhanced}
B.~Lim, S.~Son, H.~Kim, S.~Nah, and K.~Mu~Lee, ``Enhanced deep residual networks for single image super-resolution,'' in \emph{Proceedings of the IEEE conference on computer vision and pattern recognition workshops}, 2017, pp. 136--144.

\bibitem[Tao et~al.(2018)Tao, Gao, Shen, Wang, and Jia]{tao2018scale}
X.~Tao, H.~Gao, X.~Shen, J.~Wang, and J.~Jia, ``Scale-recurrent network for deep image deblurring,'' in \emph{Proceedings of the IEEE Conference on Computer Vision and Pattern Recognition (CVPR)}, 2018.

\bibitem[Chen et~al.(2018)Chen, Chen, Xu, and Koltun]{chen2018learning}
C.~Chen, Q.~Chen, J.~Xu, and V.~Koltun, ``Learning to see in the dark,'' in \emph{IEEE Conference on Computer Vision and Pattern Recognition}, 2018, pp. 3291--3300.

\bibitem[Zamir et~al.(2022)Zamir, Arora, Khan, Hayat, Khan, and Yang]{zamir2022restormer}
S.~W. Zamir, A.~Arora, S.~Khan, M.~Hayat, F.~S. Khan, and M.-H. Yang, ``Restormer: Efficient transformer for high-resolution image restoration,'' in \emph{Proceedings of the IEEE/CVF Conference on Computer Vision and Pattern Recognition}, 2022, pp. 5728--5739.

\bibitem[Chen et~al.(2022)Chen, Chu, Zhang, and Sun]{chen2022simple}
L.~Chen, X.~Chu, X.~Zhang, and J.~Sun, ``Simple baselines for image restoration,'' in \emph{Computer Vision--ECCV 2022: 17th European Conference, Tel Aviv, Israel, October 23--27, 2022, Proceedings, Part VII}.\hskip 1em plus 0.5em minus 0.4em\relax Springer, 2022, pp. 17--33.

\bibitem[Tu et~al.(2022)Tu, Talebi, Zhang, Yang, Milanfar, Bovik, and Li]{tu2022maxim}
Z.~Tu, H.~Talebi, H.~Zhang, F.~Yang, P.~Milanfar, A.~Bovik, and Y.~Li, ``Maxim: Multi-axis mlp for image processing,'' in \emph{Proceedings of the IEEE/CVF Conference on Computer Vision and Pattern Recognition}, 2022, pp. 5769--5780.

\bibitem[Zamir et~al.(2021)Zamir, Arora, Khan, Hayat, Khan, Yang, and Shao]{zamir2021multi}
S.~W. Zamir, A.~Arora, S.~Khan, M.~Hayat, F.~S. Khan, M.-H. Yang, and L.~Shao, ``Multi-stage progressive image restoration,'' in \emph{Proceedings of the IEEE/CVF conference on computer vision and pattern recognition}, 2021, pp. 14\,821--14\,831.

\bibitem[Isola et~al.(2017)Isola, Zhu, Zhou, and Efros]{isola2017image}
P.~Isola, J.-Y. Zhu, T.~Zhou, and A.~A. Efros, ``Image-to-image translation with conditional adversarial networks,'' in \emph{IEEE conference on computer vision and pattern recognition}, 2017, pp. 1125--1134.

\bibitem[Kupyn et~al.(2018)Kupyn, Budzan, Mykhailych, Mishkin, and Matas]{kupyn2018deblurgan}
O.~Kupyn, V.~Budzan, M.~Mykhailych, D.~Mishkin, and J.~Matas, ``Deblurgan: Blind motion deblurring using conditional adversarial networks,'' in \emph{Proceedings of the IEEE conference on computer vision and pattern recognition}, 2018, pp. 8183--8192.

\bibitem[Delbracio and Milanfar(2023)]{delbracio2023inversion}
\BIBentryALTinterwordspacing
M.~Delbracio and P.~Milanfar, ``Inversion by direct iteration: An alternative to denoising diffusion for image restoration,'' \emph{Transactions on Machine Learning Research}, 2023, featured Certification. [Online]. Available: \url{https://openreview.net/forum?id=VmyFF5lL3F}
\BIBentrySTDinterwordspacing

\bibitem[Venkatakrishnan et~al.(2013)Venkatakrishnan, Bouman, and Wohlberg]{pnp}
S.~V. Venkatakrishnan, C.~A. Bouman, and B.~Wohlberg, ``Plug-and-play priors for model based reconstruction,'' in \emph{2013 IEEE Global Conference on Signal and Information Processing}.\hskip 1em plus 0.5em minus 0.4em\relax IEEE, 2013, pp. 945--948.

\bibitem[Sreehari et~al.(2016)Sreehari, Venkatakrishnan, Wohlberg, Buzzard, Drummy, Simmons, and Bouman]{sreehari2016plug}
S.~Sreehari, S.~V. Venkatakrishnan, B.~Wohlberg, G.~T. Buzzard, L.~F. Drummy, J.~P. Simmons, and C.~A. Bouman, ``Plug-and-play priors for bright field electron tomography and sparse interpolation,'' \emph{IEEE Transactions on Computational Imaging}, vol.~2, no.~4, pp. 408--423, 2016.

\bibitem[Chan et~al.(2016)Chan, Wang, and Elgendy]{chan2016plug}
S.~H. Chan, X.~Wang, and O.~A. Elgendy, ``Plug-and-play admm for image restoration: Fixed-point convergence and applications,'' \emph{IEEE Transactions on Computational Imaging}, vol.~3, no.~1, pp. 84--98, 2016.

\bibitem[Romano et~al.(2017)Romano, Elad, and Milanfar]{romano2016little}
Y.~Romano, M.~Elad, and P.~Milanfar, ``The little engine that could: Regularization by denoising (red),'' \emph{SIAM Journal on Imaging Sciences}, vol.~10, no.~4, pp. 1804--1844, 2017.

\bibitem[Cohen et~al.(2021)Cohen, Elad, and Milanfar]{cohen2021regularization}
R.~Cohen, M.~Elad, and P.~Milanfar, ``Regularization by denoising via fixed-point projection (red-pro),'' \emph{SIAM Journal on Imaging Sciences}, vol.~14, no.~3, pp. 1374--1406, 2021.

\bibitem[Kadkhodaie and Simoncelli(2021)]{kadkhodaie2021stochastic}
Z.~Kadkhodaie and E.~P. Simoncelli, ``Stochastic solutions for linear inverse problems using the prior implicit in a denoiser,'' in \emph{Thirty-Fifth Conference on Neural Information Processing Systems}, 2021.

\bibitem[Kamilov et~al.(2023)Kamilov, Bouman, Buzzard, and Wohlberg]{kamilov2023plug}
U.~S. Kamilov, C.~A. Bouman, G.~T. Buzzard, and B.~Wohlberg, ``Plug-and-play methods for integrating physical and learned models in computational imaging: Theory, algorithms, and applications,'' \emph{IEEE Signal Processing Magazine}, vol.~40, no.~1, pp. 85--97, 2023.

\bibitem[Milanfar and Delbracio(2024)]{milanfar2024denoising}
P.~Milanfar and M.~Delbracio, ``Denoising: A powerful building-block for imaging, inverse problems, and machine learning,'' \emph{arXiv preprint arXiv:2409.06219}, 2024.

\bibitem[Li et~al.(2021)Li, Yang, Chang, Feng, Xu, Li, and Chen]{li2021srdiff}
H.~Li, Y.~Yang, M.~Chang, H.~Feng, Z.~Xu, Q.~Li, and Y.~Chen, ``Srdiff: Single image super-resolution with diffusion probabilistic models,'' \emph{arXiv preprint arXiv:2104.14951}, 2021.

\bibitem[Saharia et~al.(2022)Saharia, Chan, Chang, Lee, Ho, Salimans, Fleet, and Norouzi]{saharia2022palette}
C.~Saharia, W.~Chan, H.~Chang, C.~Lee, J.~Ho, T.~Salimans, D.~Fleet, and M.~Norouzi, ``Palette: Image-to-image diffusion models,'' in \emph{ACM SIGGRAPH 2022 Conference Proceedings}, 2022, pp. 1--10.

\bibitem[Whang et~al.(2022)Whang, Delbracio, Talebi, Saharia, Dimakis, and Milanfar]{whang2022deblurring}
J.~Whang, M.~Delbracio, H.~Talebi, C.~Saharia, A.~G. Dimakis, and P.~Milanfar, ``Deblurring via stochastic refinement,'' in \emph{Proceedings of the IEEE/CVF Conference on Computer Vision and Pattern Recognition}, 2022, pp. 16\,293--16\,303.

\bibitem[Luo et~al.(2023{\natexlab{a}})Luo, Gustafsson, Zhao, Sj{\"o}lund, and Sch{\"o}n]{luo2023image}
Z.~Luo, F.~K. Gustafsson, Z.~Zhao, J.~Sj{\"o}lund, and T.~B. Sch{\"o}n, ``Image restoration with mean-reverting stochastic differential equations,'' \emph{arXiv preprint arXiv:2301.11699}, 2023.

\bibitem[Luo et~al.(2023{\natexlab{b}})Luo, Gustafsson, Zhao, Sj{\"o}lund, and Sch{\"o}n]{luo2023refusion}
------, ``Refusion: Enabling large-size realistic image restoration with latent-space diffusion models,'' in \emph{Proceedings of the IEEE/CVF Conference on Computer Vision and Pattern Recognition}, 2023, pp. 1680--1691.

\bibitem[Albergo and Vanden-Eijnden(2023)]{albergo2023building}
\BIBentryALTinterwordspacing
M.~S. Albergo and E.~Vanden-Eijnden, ``Building normalizing flows with stochastic interpolants,'' in \emph{The Eleventh International Conference on Learning Representations}, 2023. [Online]. Available: \url{https://openreview.net/forum?id=li7qeBbCR1t}
\BIBentrySTDinterwordspacing

\bibitem[Albergo et~al.(2023)Albergo, Boffi, and Vanden-Eijnden]{albergo2023stochastic}
M.~S. Albergo, N.~M. Boffi, and E.~Vanden-Eijnden, ``Stochastic interpolants: A unifying framework for flows and diffusions,'' \emph{arXiv preprint arXiv:2303.08797}, 2023.

\bibitem[Lipman et~al.(2023)Lipman, Chen, Ben-Hamu, Nickel, and Le]{lipman2023flow}
\BIBentryALTinterwordspacing
Y.~Lipman, R.~T.~Q. Chen, H.~Ben-Hamu, M.~Nickel, and M.~Le, ``Flow matching for generative modeling,'' in \emph{The Eleventh International Conference on Learning Representations}, 2023. [Online]. Available: \url{https://openreview.net/forum?id=PqvMRDCJT9t}
\BIBentrySTDinterwordspacing

\bibitem[Liu et~al.(2023{\natexlab{a}})Liu, Vahdat, Huang, Theodorou, Nie, and Anandkumar]{liu2023i2sb}
G.-H. Liu, A.~Vahdat, D.-A. Huang, E.~A. Theodorou, W.~Nie, and A.~Anandkumar, ``I{$^2$}sb: Image-to-image schr{\"o}dinger bridge,'' \emph{arXiv preprint arXiv:2302.05872}, 2023.

\bibitem[Liu et~al.(2023{\natexlab{b}})Liu, Gong, and qiang liu]{liu2023flow}
\BIBentryALTinterwordspacing
X.~Liu, C.~Gong, and qiang liu, ``Flow straight and fast: Learning to generate and transfer data with rectified flow,'' in \emph{The Eleventh International Conference on Learning Representations}, 2023. [Online]. Available: \url{https://openreview.net/forum?id=XVjTT1nw5z}
\BIBentrySTDinterwordspacing

\bibitem[Shi et~al.(2023)Shi, De~Bortoli, Campbell, and Doucet]{shi2023diffusion}
Y.~Shi, V.~De~Bortoli, A.~Campbell, and A.~Doucet, ``Diffusion schr$\backslash$" odinger bridge matching,'' \emph{arXiv preprint arXiv:2303.16852}, 2023.

\bibitem[Lugmayr et~al.(2022)Lugmayr, Danelljan, Romero, Yu, Timofte, and Van~Gool]{lugmayr2022repaint}
A.~Lugmayr, M.~Danelljan, A.~Romero, F.~Yu, R.~Timofte, and L.~Van~Gool, ``Repaint: Inpainting using denoising diffusion probabilistic models,'' in \emph{Proceedings of the IEEE/CVF conference on computer vision and pattern recognition}, 2022, pp. 11\,461--11\,471.

\bibitem[Rout et~al.(2023)Rout, Parulekar, Caramanis, and Shakkottai]{rout2023theoretical}
L.~Rout, A.~Parulekar, C.~Caramanis, and S.~Shakkottai, ``A theoretical justification for image inpainting using denoising diffusion probabilistic models,'' \emph{arXiv preprint arXiv:2302.01217}, 2023.

\bibitem[Chung and Ye(2024)]{chung2024deep}
H.~Chung and J.~C. Ye, ``Deep diffusion image prior for efficient ood adaptation in 3d inverse problems,'' in \emph{Proceedings of the European Conference on Computer Vision (ECCV)}, 2024.

\bibitem[Shen et~al.(2024)Shen, Jiang, Wang, Yang, Han, and Li]{shen2024understanding}
Y.~Shen, X.~Jiang, Y.~Wang, Y.~Yang, D.~Han, and D.~Li, ``Understanding training-free diffusion guidance: Mechanisms and limitations,'' \emph{arXiv preprint arXiv:2403.12404}, 2024.

\bibitem[Ho et~al.(2020)Ho, Jain, and Abbeel]{ddpm}
J.~Ho, A.~Jain, and P.~Abbeel, ``Denoising diffusion probabilistic models,'' \emph{Advances in Neural Information Processing Systems}, vol.~33, pp. 6840--6851, 2020.

\bibitem[Song and Ermon(2019)]{ncsn}
Y.~Song and S.~Ermon, ``Generative modeling by estimating gradients of the data distribution,'' \emph{Advances in Neural Information Processing Systems}, vol.~32, 2019.

\bibitem[Anderson(1982)]{anderson}
B.~D. Anderson, ``Reverse-time diffusion equation models,'' \emph{Stochastic Processes and their Applications}, vol.~12, no.~3, pp. 313--326, 1982.

\bibitem[Maoutsa et~al.(2020)Maoutsa, Reich, and Opper]{maoutsa2020interacting}
D.~Maoutsa, S.~Reich, and M.~Opper, ``Interacting particle solutions of fokker--planck equations through gradient--log--density estimation,'' \emph{Entropy}, vol.~22, no.~8, p. 802, 2020.

\bibitem[Song et~al.(2020{\natexlab{b}})Song, Meng, and Ermon]{ddim}
J.~Song, C.~Meng, and S.~Ermon, ``Denoising diffusion implicit models,'' \emph{arXiv preprint arXiv:2010.02502}, 2020.

\bibitem[Efron(2011)]{efron2011tweedie}
B.~Efron, ``Tweedie’s formula and selection bias,'' \emph{Journal of the American Statistical Association}, vol. 106, no. 496, pp. 1602--1614, 2011.

\bibitem[Vincent(2011)]{vincent2011connection}
P.~Vincent, ``A connection between score matching and denoising autoencoders,'' \emph{Neural computation}, vol.~23, no.~7, pp. 1661--1674, 2011.

\bibitem[Rombach et~al.(2022)Rombach, Blattmann, Lorenz, Esser, and Ommer]{ldm}
R.~Rombach, A.~Blattmann, D.~Lorenz, P.~Esser, and B.~Ommer, ``High-resolution image synthesis with latent diffusion models,'' in \emph{Proceedings of the IEEE/CVF Conference on Computer Vision and Pattern Recognition}, 2022, pp. 10\,684--10\,695.

\bibitem[{\O}ksendal(2010)]{oksendal2010stochastic}
B.~{\O}ksendal, \emph{Stochastic Differential Equations: An Introduction with Applications}, 6th~ed.\hskip 1em plus 0.5em minus 0.4em\relax Berlin: Springer Science \& Business Media, 2010.

\bibitem[Gupta et~al.(2024)Gupta, Jalal, Parulekar, Price, and Xun]{gupta2024diffusion}
S.~Gupta, A.~Jalal, A.~Parulekar, E.~Price, and Z.~Xun, ``Diffusion posterior sampling is computationally intractable,'' \emph{arXiv preprint arXiv:2402.12727}, 2024.

\bibitem[Daras et~al.(2023{\natexlab{a}})Daras, Shah, Dagan, Gollakota, Dimakis, and Klivans]{ambientdiffusion}
\BIBentryALTinterwordspacing
G.~Daras, K.~Shah, Y.~Dagan, A.~Gollakota, A.~Dimakis, and A.~Klivans, ``Ambient diffusion: Learning clean distributions from corrupted data,'' in \emph{Advances in Neural Information Processing Systems}, A.~Oh, T.~Naumann, A.~Globerson, K.~Saenko, M.~Hardt, and S.~Levine, Eds., vol.~36.\hskip 1em plus 0.5em minus 0.4em\relax Curran Associates, Inc., 2023, pp. 288--313. [Online]. Available: \url{https://proceedings.neurips.cc/paper_files/paper/2023/file/012af729c5d14d279581fc8a5db975a1-Paper-Conference.pdf}
\BIBentrySTDinterwordspacing

\bibitem[Daras et~al.(2024)Daras, Dimakis, and Daskalakis]{constweedie}
\BIBentryALTinterwordspacing
G.~Daras, A.~Dimakis, and C.~C. Daskalakis, ``Consistent diffusion meets tweedie: Training exact ambient diffusion models with noisy data,'' in \emph{Proceedings of the 41st International Conference on Machine Learning}, ser. Proceedings of Machine Learning Research, R.~Salakhutdinov, Z.~Kolter, K.~Heller, A.~Weller, N.~Oliver, J.~Scarlett, and F.~Berkenkamp, Eds., vol. 235.\hskip 1em plus 0.5em minus 0.4em\relax PMLR, 2024, pp. 10\,091--10\,108. [Online]. Available: \url{https://proceedings.mlr.press/v235/daras24a.html}
\BIBentrySTDinterwordspacing

\bibitem[Daras et~al.(2023{\natexlab{b}})Daras, Dagan, Dimakis, and Daskalakis]{daras2023consistent}
G.~Daras, Y.~Dagan, A.~G. Dimakis, and C.~Daskalakis, ``Consistent diffusion models: Mitigating sampling drift by learning to be consistent,'' \emph{arXiv preprint arXiv:2302.09057}, 2023.

\bibitem[Eldar(2009)]{SURE}
Y.~C. Eldar, ``Generalized sure for exponential families: Applications to regularization,'' \emph{IEEE Transactions on Signal Processing}, vol.~57, no.~2, pp. 471--481, 2009.

\bibitem[Stein(1981)]{stein1981estimation}
C.~M. Stein, ``Estimation of the mean of a multivariate normal distribution,'' \emph{The annals of Statistics}, pp. 1135--1151, 1981.

\bibitem[Lehtinen et~al.(2018)Lehtinen, Munkberg, Hasselgren, Laine, Karras, Aittala, and Aila]{noise2noise}
J.~Lehtinen, J.~Munkberg, J.~Hasselgren, S.~Laine, T.~Karras, M.~Aittala, and T.~Aila, ``Noise2noise: Learning image restoration without clean data,'' \emph{arXiv preprint arXiv:1803.04189}, 2018.

\bibitem[Krull et~al.(2019)Krull, Buchholz, and Jug]{krull2019noise2void}
A.~Krull, T.-O. Buchholz, and F.~Jug, ``Noise2void-learning denoising from single noisy images,'' in \emph{Proceedings of the IEEE/CVF conference on computer vision and pattern recognition}, 2019, pp. 2129--2137.

\bibitem[Batson and Royer(2019)]{batson2019noise2self}
J.~Batson and L.~Royer, ``Noise2self: Blind denoising by self-supervision,'' in \emph{International Conference on Machine Learning}.\hskip 1em plus 0.5em minus 0.4em\relax PMLR, 2019, pp. 524--533.

\bibitem[Bai et~al.(2024)Bai, Wang, Chen, and Sun]{bai2024expectation}
W.~Bai, Y.~Wang, W.~Chen, and H.~Sun, ``An expectation-maximization algorithm for training clean diffusion models from corrupted observations,'' \emph{arXiv preprint arXiv:2407.01014}, 2024.

\bibitem[Wang et~al.(2024{\natexlab{b}})Wang, Bai, Luo, Chen, and Sun]{wang2024integrating}
Y.~Wang, W.~Bai, W.~Luo, W.~Chen, and H.~Sun, ``Integrating amortized inference with diffusion models for learning clean distribution from corrupted images,'' \emph{arXiv preprint arXiv:2407.11162}, 2024.

\bibitem[Yang et~al.(2024)Yang, Ding, Cai, Yu, Wang, and Shi]{yang2024guidance}
L.~Yang, S.~Ding, Y.~Cai, J.~Yu, J.~Wang, and Y.~Shi, ``Guidance with spherical gaussian constraint for conditional diffusion,'' \emph{arXiv preprint arXiv:2402.03201}, 2024.

\bibitem[Kawar et~al.(2021{\natexlab{b}})Kawar, Vaksman, and Elad]{kawar2021stochastic}
B.~Kawar, G.~Vaksman, and M.~Elad, ``Stochastic image denoising by sampling from the posterior distribution,'' in \emph{Proceedings of the IEEE/CVF International Conference on Computer Vision}, 2021, pp. 1866--1875.

\bibitem[Ravula et~al.(2023)Ravula, Levac, Jalal, Tamir, and Dimakis]{ravula2023optimizing}
S.~Ravula, B.~Levac, A.~Jalal, J.~I. Tamir, and A.~G. Dimakis, ``Optimizing sampling patterns for compressed sensing mri with diffusion generative models,'' \emph{arXiv preprint arXiv:2306.03284}, 2023.

\bibitem[Rezende and Mohamed(2015)]{rezende2015variational}
D.~Rezende and S.~Mohamed, ``Variational inference with normalizing flows,'' in \emph{International conference on machine learning}.\hskip 1em plus 0.5em minus 0.4em\relax PMLR, 2015, pp. 1530--1538.

\bibitem[Dinh et~al.(2016)Dinh, Sohl-Dickstein, and Bengio]{dinh2016density}
L.~Dinh, J.~Sohl-Dickstein, and S.~Bengio, ``Density estimation using real nvp,'' \emph{arXiv preprint arXiv:1605.08803}, 2016.

\bibitem[Song et~al.(2021{\natexlab{b}})Song, Durkan, Murray, and Ermon]{song2021maximum}
Y.~Song, C.~Durkan, I.~Murray, and S.~Ermon, ``Maximum likelihood training of score-based diffusion models,'' \emph{Advances in neural information processing systems}, vol.~34, pp. 1415--1428, 2021.

\bibitem[Kamilov et~al.(2017)Kamilov, Mansour, and Wohlberg]{kamilov2017plug}
U.~S. Kamilov, H.~Mansour, and B.~Wohlberg, ``A plug-and-play priors approach for solving nonlinear imaging inverse problems,'' \emph{IEEE Signal Processing Letters}, vol.~24, no.~12, pp. 1872--1876, 2017.

\bibitem[Doucet et~al.(2001)Doucet, De~Freitas, and Gordon]{doucet2001introduction}
A.~Doucet, N.~De~Freitas, and N.~Gordon, ``An introduction to sequential monte carlo methods,'' \emph{Sequential Monte Carlo methods in practice}, pp. 3--14, 2001.

\bibitem[Bora et~al.(2017)Bora, Jalal, Price, and Dimakis]{csgm}
A.~Bora, A.~Jalal, E.~Price, and A.~G. Dimakis, ``Compressed sensing using generative models,'' in \emph{International conference on machine learning}.\hskip 1em plus 0.5em minus 0.4em\relax PMLR, 2017, pp. 537--546.

\bibitem[Daras et~al.(2021)Daras, Dean, Jalal, and Dimakis]{ilo}
\BIBentryALTinterwordspacing
G.~Daras, J.~Dean, A.~Jalal, and A.~Dimakis, ``Intermediate layer optimization for inverse problems using deep generative models,'' in \emph{Proceedings of the 38th International Conference on Machine Learning}, ser. Proceedings of Machine Learning Research, M.~Meila and T.~Zhang, Eds., vol. 139.\hskip 1em plus 0.5em minus 0.4em\relax PMLR, 18--24 Jul 2021, pp. 2421--2432. [Online]. Available: \url{https://proceedings.mlr.press/v139/daras21a.html}
\BIBentrySTDinterwordspacing

\bibitem[Song et~al.(2023)Song, Dhariwal, Chen, and Sutskever]{song2023consistency}
Y.~Song, P.~Dhariwal, M.~Chen, and I.~Sutskever, ``Consistency models,'' in \emph{International Conference on Machine Learning}.\hskip 1em plus 0.5em minus 0.4em\relax PMLR, 2023, pp. 32\,211--32\,252.

\bibitem[He et~al.(2023{\natexlab{b}})He, Murata, Lai, Takida, Uesaka, Kim, Liao, Mitsufuji, Kolter, Salakhutdinov, et~al.]{hemanifold}
Y.~He, N.~Murata, C.-H. Lai, Y.~Takida, T.~Uesaka, D.~Kim, W.-H. Liao, Y.~Mitsufuji, J.~Z. Kolter, R.~Salakhutdinov \emph{et~al.}, ``Manifold preserving guided diffusion,'' in \emph{The Twelfth International Conference on Learning Representations}, 2023.

\bibitem[Dhariwal and Nichol(2021)]{dhariwal2021diffusion}
P.~Dhariwal and A.~Nichol, ``Diffusion models beat gans on image synthesis,'' \emph{Advances in neural information processing systems}, vol.~34, pp. 8780--8794, 2021.

\bibitem[Ho and Salimans(2022)]{ho2022classifier}
J.~Ho and T.~Salimans, ``Classifier-free diffusion guidance,'' \emph{arXiv preprint arXiv:2207.12598}, 2022.

\bibitem[Radford et~al.(2021)Radford, Kim, Hallacy, Ramesh, Goh, Agarwal, Sastry, Askell, Mishkin, Clark, et~al.]{clip}
A.~Radford, J.~W. Kim, C.~Hallacy, A.~Ramesh, G.~Goh, S.~Agarwal, G.~Sastry, A.~Askell, P.~Mishkin, J.~Clark \emph{et~al.}, ``Learning transferable visual models from natural language supervision,'' in \emph{International conference on machine learning}.\hskip 1em plus 0.5em minus 0.4em\relax PMLR, 2021, pp. 8748--8763.

\bibitem[Poole et~al.(2023)Poole, Jain, Barron, and Mildenhall]{poole2023dreamfusion}
\BIBentryALTinterwordspacing
B.~Poole, A.~Jain, J.~T. Barron, and B.~Mildenhall, ``Dreamfusion: Text-to-3d using 2d diffusion,'' in \emph{The Eleventh International Conference on Learning Representations}, 2023. [Online]. Available: \url{https://openreview.net/forum?id=FjNys5c7VyY}
\BIBentrySTDinterwordspacing

\bibitem[Kim et~al.(2023{\natexlab{b}})Kim, Lai, Liao, Murata, Takida, Uesaka, He, Mitsufuji, and Ermon]{kim2023consistency}
D.~Kim, C.-H. Lai, W.-H. Liao, N.~Murata, Y.~Takida, T.~Uesaka, Y.~He, Y.~Mitsufuji, and S.~Ermon, ``Consistency trajectory models: Learning probability flow ode trajectory of diffusion,'' in \emph{The Twelfth International Conference on Learning Representations}, 2023.

\bibitem[Daras et~al.(2023{\natexlab{c}})Daras, Delbracio, Talebi, Dimakis, and Milanfar]{daras2023soft}
\BIBentryALTinterwordspacing
G.~Daras, M.~Delbracio, H.~Talebi, A.~Dimakis, and P.~Milanfar, ``Soft diffusion: Score matching with general corruptions,'' \emph{Transactions on Machine Learning Research}, 2023. [Online]. Available: \url{https://openreview.net/forum?id=W98rebBxlQ}
\BIBentrySTDinterwordspacing

\end{thebibliography}

\end{document}